\newcommand{\hypergraph}{H}
\newcommand{\graph}{G}
\newcommand{\edges}     {E}
\newcommand{\reals}     {\mathds{R}}
\newcommand{\naturals}     {\mathds{N}}
\newcommand{\vertices}  {V}
\newcommand{\measure}  {\mu}
\newcommand{\curvature}{\kappa}
\DeclareMathOperator{\degree}{deg}
\DeclareMathOperator{\rbf}{RBF}
\DeclareMathOperator{\neighborhood}{\mathcal{N}}
\newcommand{\uniformity}{r}
\newcommand{\regularity}{k}
\newcommand{\cooccurrence}{c}
\newcommand{\intersecting}{s}
\newcommand{\smoothing}{\alpha}
\newcommand{\permutation}{\sigma}
\newcommand{\adjacent}{\sim}
\newcommand{\aggregation}{\texttt{\textsc{Agg}}\xspace}
\newcommand{\Wlog}{w.l.o.g.\xspace}
\newcommand{\suchthat}{s.t.\xspace}
\newcommand{\defeq}{\coloneqq} 
\DeclareMathOperator{\argmax}     {argmax}
\DeclareMathOperator{\diam}       {diam}
\DeclareMathOperator{\dist}       {d}
\DeclareMathOperator{\jump}       {J}
\DeclareMathOperator{\powerset}   {\mathcal{P}}
\DeclareMathOperator{\wasserstein}{W\!}
\newcommand{\mean}{\texttt{\textsc{A}}}
\newcommand{\bary}{\texttt{\textsc{B}}}
\newcommand{\maxi}{\texttt{\textsc{M}}}
\newcommand{\enrw}{\texttt{\textsc{EN}}}
\newcommand{\eerw}{\texttt{\textsc{EE}}}
\newcommand{\werw}{\texttt{\textsc{WE}}}
\newtheorem{lemma}{Lemma}
\newcommand{\ourmethod}{\textsc{Orchid}\xspace}
\newcommand{\orc}{ORC\xspace}
\newcommand{\frc}{FRC\xspace}
\newcommand{\ollivier}{Ollivier-Ricci curvature\xspace}
\title{Ollivier-Ricci Curvature for Hypergraphs: A~Unified Framework}
\newcommand{\oururl}{\hyperref{https://doi.org/10.5281/zenodo.7624573}{}{}{https://doi.org/10.5281/zenodo.7624573}}
\author{Corinna Coupette$^1$, Sebastian Dalleiger$^{1,2}$, Bastian Rieck$^{3,4}$\\[+0.5cm]
  $^1$Max Planck Institute for Informatics\\
  $^2$CISPA Helmholtz Center for Information Security\\
  $^3$AIDOS Lab, Institute of AI for Health, Helmholtz Munich\\
  $^4$Technical University of Munich~(TUM)
}
\begin{document}
	\maketitle

\begin{abstract}
  \noindent Bridging geometry and topology, 
curvature is a powerful and expressive invariant. 
While the utility of curvature has been theoretically and empirically confirmed in the context of manifolds and graphs, 
its generalization to the emerging domain of \emph{hypergraphs} has remained largely unexplored.
On graphs, 
the \emph{\ollivier} measures differences between random walks via Wasserstein distances, 
thus grounding a geometric concept in ideas from probability theory and optimal transport.
We develop \ourmethod, a flexible framework generalizing \ollivier to hypergraphs, and prove that the resulting curvatures have favorable theoretical properties.
Through extensive experiments on synthetic and real-world hypergraphs from different domains, 
we demonstrate that \ourmethod curvatures are both scalable and useful to perform a variety of hypergraph tasks in practice.
\end{abstract}

\section{Introduction}

Hypergraphs generalize graphs by allowing any number of nodes to participate in an edge. 
They enable us to faithfully represent complex
relations, 
such as co-authorship of scientific papers, 
multilateral interactions between chemicals, or group conversations,
which cannot be adequately captured by graphs. 
While hypergraphs are more expressive than graphs and other relational objects like simplicial complexes,
they are harder to analyze both theoretically and empirically, 
and many concepts that have proven useful for understanding graphs have yet to be transferred to the hypergraph setting.

\emph{Curvature} has established itself as a powerful
characteristic of Riemannian manifolds, 
as it permits the
description of \emph{global properties} through \emph{local measurements} by
harmonizing ideas from geometry and topology.
For graphs, \emph{graph curvature} measures to what extent the
neighborhood of an edge deviates from certain idealized model spaces,
such as cliques, grids, or trees.
It has proven helpful, for example, 
in assessing differences between real-world networks~\citep{samal2018comparative},
identifying bottlenecks in real-world networks~\citep{gosztolai2021unfolding}, 
and alleviating oversquashing in graph neural networks~\citep{topping2022oversquashing}.
One prominent notion of graph curvature is
\emph{\ollivier} (\orc). 
\orc compares random walks based at specific nodes, 
revealing differences in the information diffusion behavior in the graph. 
As the sizes of edges and edge intersections can vary in hypergraphs, 
there are many ways to generalize \orc to hypergraphs. 
While some notions of hypergraph \orc have been previously studied in isolation \citep[e.g.,][]{asoodeh2018curvature,leal2020ricci,eidi2020ollivier}, 
a unified framework for their definition and computation is still lacking.

\paragraph{Contributions.}
We introduce \ourmethod, 
a unified framework for \ollivier on hypergraphs. 
\ourmethod integrates and generalizes existing approaches to hypergraph \orc.
Our work is the first to identify the individual building blocks shared by all notions of hypergraph \orc, 
and to perform a rigorous theoretical and empirical analysis of
the resulting curvature formulations. 
We develop hypergraph \orc notions
that are aligned with our geometric intuition
while still efficient to compute,
and we demonstrate the utility of these notions in practice through extensive experiments.

\paragraph{Structure.} 
After providing the necessary background on graphs and hypergraphs
and recalling the definition of \ollivier for graphs 
in \cref{prelim}, 
we introduce \ourmethod, our framework for hypergraph \orc,
and analyze the theoretical properties of \ourmethod curvatures in \cref{theory}.
We assess the empirical properties and practical utility of \ourmethod curvatures through extensive experiments in \cref{experiments},
and discuss limitations and potential extensions of \ourmethod 
as well as directions for future work in \cref{sec:Discussion and Conclusion}. 
Further materials are provided in \cref{apx-proofs,apx-implementation,apx-datasets,apx-related,apx-results}.

  \section{Preliminaries}
\label{prelim}

\paragraph{Graphs and Hypergraphs}
\label{prelim:hg}

A \emph{simple graph} $\graph = (\vertices,
\edges)$ is a tuple containing $n$ nodes (vertices)~$\vertices = \{v_1,\dots,v_n\}$ and $m$ edges~$\edges = \{e_1,\dots,e_m\}$, 
with $e_i \in \binom{\vertices}{2}$ for all $i\in[m]$. 
Here, for a set $S$ and a positive integer $k\leq |S|$, 
$\binom{S}{k}$ denotes the set of all $k$-element subsets of $S$, 
and for $x\in\naturals$ with $0\notin \naturals$, $[x] = \{i\in\naturals\mid i\leq x\}$. 
In \emph{multi-graphs}, 
edges can occur multiple times, 
and hence, $\edges = (e_1,\dots,e_m)$ is an indexed family of sets, 
with $e_i \in \binom{\vertices}{2}$ for all $i\in[m]$. 
Generalizing simple graphs, a \emph{simple hypergraph} $\hypergraph = (\vertices,
\edges)$ is a tuple containing $n$ nodes $V$ and $m$ hyperedges $E\subseteq \powerset(V)\setminus\emptyset$, 
i.e., in contrast to edges, hyperedges can have any cardinality $\uniformity\in [n]$. 
In a \emph{multi-hypergraph}, $\edges = (e_1,\dots,e_m)$ is an indexed family of sets, 
with $e_i\subseteq V$ for all $i\in[m]$. 
We assume that all our hypergraphs are multi-hypergraphs, 
and we drop the prefix \emph{hyper} from \emph{hypergraph} and \emph{hyperedge} where it is clear from context.

We denote the degree of node $i$, i.e., the number of edges containing $i$, by $\degree(i) = |\{e \in E\mid i \in e\}|$, 
write $i \adjacent j$ if $i$ is adjacent to $j$ (i.e., there exists $e \in E$ such that $\{i,j\}\subseteq e$), 
and use $\neighborhood(i)$ ($\neighborhood(e)$) for the neighborhood of $i$ ($e$), i.e., the set of nodes adjacent to $i$ (edges intersecting edge $e$). 
While $\degree(i) = |\neighborhood(i)|$ in simple graphs and $\degree(i) \geq |\neighborhood(i)|$ in multigraphs, 
these relations do not generally hold for hypergraphs.
Two nodes $i \neq j$ are \emph{connected} in~$\hypergraph$ if there is a sequence of nodes $i = v_1, v_2, \dots, v_{k - 1}, v_k = j$ such that $v_l \adjacent v_{l+1}$ for all $l \in [k]$.
Every such sequence is a \emph{path} in~$\hypergraph$, 
whose \emph{length} is the cardinality of the set of edges used in the adjacency relation.
We refer to the length of a shortest path connecting nodes~$i, j$ as the \emph{distance} between them, denoted as~$\dist(i, j)$.
We assume that all (hyper)graphs are \emph{connected}, i.e., there exists a path between all pairs of nodes.
This turns~$\hypergraph$ into a metric space~$(\hypergraph, \dist)$ with \emph{diameter}
$\diam(\hypergraph) \defeq \max\mleft\{ \dist\mleft(i, j\mright) \mid i, j \in \vertices \mright\}$.

(Hyper)graphs in which all nodes have the same degree~$\regularity$~($\degree(i) = \regularity$ for all $i \in V$) are called \emph{$k$-regular}.
Three properties of hypergraphs that distinguish them from graphs give rise to additional (ir)regularities. 
First, \emph{hyperedges} can vary in cardinality, 
and a hypergraph in which all hyperedges have the same cardinality $\uniformity$ ($|e| = \uniformity$ for all $e\in E$) is called \emph{$\uniformity$-uniform}.  
Second, \emph{hyperedge intersections} can have cardinality greater than $1$, 
and we call a hypergraph \emph{$\intersecting$-intersecting}
if all nonempty edge intersections have the same cardinality $\intersecting$
($e\cap f \neq \emptyset \Leftrightarrow |e \cap f| = \intersecting$ for all $e,f\in\edges$).
Third, nodes can \emph{cooccur in any number of hyperedges};
we call a hypergraph \emph{$\cooccurrence$-cooccurrent}
if each node cooccurs $\cooccurrence$ times with any of its neighbors
($i\adjacent j \Leftrightarrow |\{e\in\edges\mid \{i,j\}\subseteq e\}| = \cooccurrence$ for all  $i,j\in\vertices$).
Using this terminology, simple graphs are $2$-uniform, $1$-intersecting, $1$-cooccurrent hypergraphs.

Given a hypergraph $\hypergraph = (\vertices, \edges)$, 
the \emph{unweighted clique expansion} of $\hypergraph$ is $\graph^\circ = (\vertices, \edges^\circ)$ with $\edges^\circ = \{\{i,j\}\mid \{i,j\}\subseteq e~\text{for some}~e \in E\}$, 
where two nodes are adjacent in $\graph^\circ$ if and only if they are adjacent in $\hypergraph$.
The \emph{weighted clique expansion} of $\hypergraph$ is $\graph^\circ$ endowed with a weighting function $w\colon\edges^\circ\rightarrow\naturals$, 
where $w(e) = |\{e\in \edges\mid \{i,j\}\subseteq e\}|$ for each $e\in \edges^\circ$, 
i.e., an edge $\{i,j\}$ is weighted by how often $i$ and $j$ cooccur in edges from $\hypergraph$.
Both of these transformations are lossy, 
i.e., we cannot uniquely reconstruct $\hypergraph$ from $\graph^\circ$.
The \emph{unweighted star expansion} of $\hypergraph$ is the bipartite graph $\graph' = (\vertices', \edges')$ 
with $\vertices' = \vertices \dot{\cup} \edges$ and $\edges'= \{\{i,e\}\mid i\in \vertices, e \in \edges, i\in e\}$,
and we can uniquely reconstruct $\hypergraph$ from $\graph'$ if we know which of its parts corresponds to the original node set of $\hypergraph$.

\paragraph{Ollivier-Ricci Curvature for Graphs}
\label{prelim:orc}

\ollivier~(\orc) extends the notion of Ricci curvature, defined for Riemannian manifolds, to metric spaces equipped with a probability measure or, equivalently, a random walk \citep{ollivier2007ricci,ollivier2009ricci}.
On graphs, which are metric spaces with the shortest-path distance $\dist(\cdot,\cdot)$,
the \orc $\curvature$ of a pair of nodes  $\{i,j\}$ is defined as
\begin{equation}
	\curvature(i, j) 
	\defeq 1 - \frac{1}{\dist(i,j)}\wasserstein_1\mleft(\measure_i, \measure_j\mright)\;,~\text{and hence},~
	\curvature(i, j)  = 1 - \wasserstein_1\mleft(\measure_i, \measure_j\mright)~\text{if $i\adjacent j$}\;,\label{eq:orc:edges}
\end{equation}
where $\measure_i$ is a probability measure associated with node $i$ that depends measurably on $i$ and has finite first moment, and $\wasserstein_1$ is the \emph{Wasserstein distance} of order~$1$, which captures the amount of work needed to transport the probability mass from $\measure_i$ to $\measure_j$ in an optimal coupling.
The use of the shortest-path distance is necessary to ensure that \orc is also well-defined for pairs of non-adjacent nodes.
This definition on edges or pairs of nodes alludes to the fact that Ricci curvature is associated to tangent vectors of a manifold.
A common strategy to measure curvature at a node $i$ is to average over the curvatures of all edges incident with $i$ \citep{jost2014ollivier,banerjee2021spectrum}, i.e., 
\begin{equation}
 	\curvature(i) = \frac{1}{\degree(i)}\sum_{\{i,j\}\in E}\kappa(i,j)\;.\label{eq:orc:nodes}
\end{equation}
A popular probability measure that easily generalizes to weighted graphs and multigraphs is
\begin{equation}
	\measure_i^{\alpha}(j) \defeq \begin{cases}
		\alpha & j = i\\
		(1 - \alpha)\frac{1}{\degree(i)} & i \adjacent j\\
		0                      & \text{otherwise}\;,
	\end{cases}\label{eq:orc:measure}
\end{equation}
where $\alpha$ serves as a smoothing parameter \citep{lin2011ricci}.
With this definition, stacking the probability measures yields the transition matrix of an $\alpha$-lazy random walk.

\section{Theory}
\label{theory}

Having introduced the concept of hypergraphs and the definition of \ollivier~(\orc) for graphs, 
we now develop our framework for \orc on hypergraphs, 
called \ourmethod~(Ollivier-Ricci Curvature for Hypergraphs In Data). 
We focus our exposition on undirected, unweighted multi-hypergraphs, but \ourmethod straightforwardly generalizes to other hypergraph variants.

\subsection{Ollivier-Ricci Curvatures for Hypergraphs (\ourmethod Curvatures)}

As mentioned in \cref{prelim:hg}, 
hypergraphs differ from graphs in that edges can have any cardinality, 
and consequently, 
edges can intersect in more than one node, 
and nodes can co-occur in more than one edge.
When generalizing \orc as defined in \cref{prelim:orc} to hypergraphs, 
these peculiarities 
become relevant in two places:
(1)~in the generalization of the measure $\measure$ for nodes, 
and (2)~in the generalization of the distance metric $\wasserstein_1$. 
Construing the distance metric as a function \emph{aggregating} measures~(\aggregation), with $\aggregation\colon V^+ \to \reals$, we can rewrite \cref{eq:orc:edges} for pairs of nodes~$\{i, j\}$ as 
\begin{equation}
  \curvature(i,j) \defeq 1 - \frac{\aggregation(\mu_i, \mu_j)}{\dist(i, j)}\;,\label{eq:orc:rewritten}
\end{equation}
which facilitates its generalization; we will also use $\curvature(e)$ for (hyper)edges as a shorthand notation for \cref{eq:orc:rewritten}.
When defining probability measures and \aggregation functions on hypergraphs, 
we would like to retain as much flexibility as possible while also ensuring the following conditions:
\begin{enumerate}[label=\Roman*., ref=\Roman*, leftmargin=\widthof{III.}+\labelsep]
	\item \emph{Mathematical generalization.}
	For graphs, 
	\aggregation simplifies to the original \orc on graphs.\label{cond:generalization}

  \item \emph{Permutation invariance.} $\aggregation(e) = \aggregation(\sigma(e))$ for edges~$e$ and all node index permutations~$\permutation$.\label{cond:permutation-invariance}

  \item \emph{Scalability.} The probability measures and \aggregation functions should be efficiently computable.\label{cond:scalability}
\end{enumerate} 
Beyond these properties, 
we would also like to have the following \emph{interpretability} features to ascertain that a hypergraph curvature measure is a \emph{conceptual generalization} of \orc:
\begin{enumerate}[label=\Alph*., ref=\Alph*, leftmargin=\widthof{III.}+\labelsep]
	\item \emph{Probabilistic intuition.} 
		\label{item:probabilistic-intuition}
		The probability measures assigned to nodes should correspond to a semantically sensible random walk on the hypergraph.
		\item \emph{Optimal transport intuition.}
		\label{item:transport-intuition}
		The generalization of the distance metric (\aggregation) should have a semantically sensible interpretation in terms of optimal transport.
		\item \emph{Geometric intuition.} \label{item:geometric-intuition}
		Edges in hypercliques should have positive curvature, 
		edges in hypergrids should have curvature zero, and
		edges in hypertrees should have negative curvature.
\end{enumerate}
We now specify probability measures and \aggregation functions for which the conditions above hold.

\paragraph{Probability Measures ($\mu$).}

In graphs, the most natural probability measures are induced by the $\smoothing$-lazy random walk given in \cref{eq:orc:measure}: 
With probability~$\smoothing$, we stay at the current node~$i$, and with probability $\nicefrac{(1-\smoothing)}{\degree(i)}$, we move to one of its neighbors. 
There are at least three direct extensions of this formulation to hypergraphs 
that all retain this probabilistic intuition, 
thus fulfilling the requirement of  Feature~\ref{item:probabilistic-intuition}.
These extensions, illustrated in \cref{fig:walks}, 
differ only in how they distribute the $(1-\smoothing)$ probability mass in \cref{eq:orc:measure} from node $i$ to the nodes in $i$'s neighborhood.
Given a hypergraph $\hypergraph$, 
for $i$ and $j$ with $i\adjacent j$, first, we could define
\begin{equation}
  \mu_i^\enrw(j) \defeq (1 - \smoothing)\frac{1}{|\neighborhood(i)|}\;,
  \label{eq:orchard:measure:rwnodes}
\end{equation}
by which we pick a neighbor~$j$ of node~$i$ uniformly at random.
We call this the \emph{equal-nodes random walk}~(\enrw), 
which is a random walk on the \emph{unweighted clique expansion} of $\hypergraph$.
Second, we could set
\begin{equation}
  \mu_i^\eerw(j)
	\defeq (1 - \smoothing)\frac{1}{\degree(i)-|\{e\ni i\mid |e| = 1\}|}\underset{e\supseteq \{i,j\}}{\sum}\frac{1}{|e|-1}\;,\label{eq:orchard:measure:rwedges}
\end{equation}
which first picks an edge $e\ni i$ with $|e|\geq 2$, 
then picks a node $j\in e\setminus \{i\}$, both uniformly at random.
We call this the \emph{equal-edges random walk}~(\eerw), 
which is a two-step random walk on the \emph{unweighted star expansion} of $\hypergraph$, 
starting at a node $i\in \vertices$, 
and non-backtracking in the second step.
It underlies the curvatures studied by \citet{asoodeh2018curvature} and \citet{banerjee2021spectrum}.
Third, we could define 
\begin{equation}
	\mu_i^\werw(j)
	\defeq 
		(1 - \smoothing)\underset{e\supseteq \{i,j\}}{\sum}\frac{|e|-1}{\underset{f\ni i}{\sum}\big(|f|-1\big)}\frac{1}{|e|-1}
		= (1 - \smoothing) \frac{|\{e\in E\mid \{i,j\}\subseteq e\}|}{\underset{f\ni i}{\sum}\big(|f|-1\big)}
		\;,\label{eq:orchard:measure:weightedrwedges}
\end{equation}
first picking an edge $e$ incident with $i$ with probability proportional to its cardinality, 
then picking a node $j\in e\setminus \{i\}$ uniformly at random. 
We call this the \emph{weighted-edges random walk}~(\werw): 
a two-step random walk from a node $i\in \vertices$ on a specific \emph{directed weighted star expansion} of $\hypergraph$ 
whose second step is non-backtracking---or equivalently, a random walk on a \emph{weighted clique expansion} of $\hypergraph$. 

\begin{figure}[t]
	\centering
	\begin{subfigure}[b]{0.495\linewidth}
		\centering
		\includegraphics[scale=0.7]{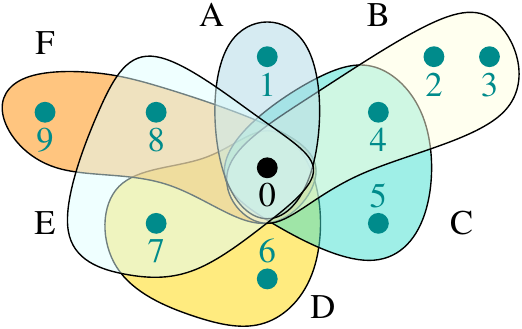}
		\subcaption{Node $0$ and its neighborhood}
	\end{subfigure}~
	\begin{subfigure}[b]{0.495\linewidth}
		\centering
		\includegraphics[scale=0.7]{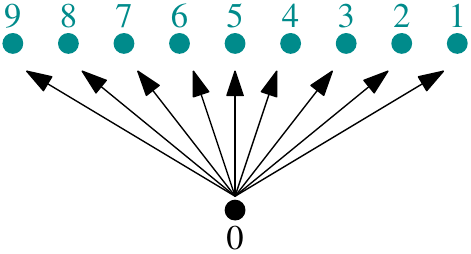}
		\subcaption{Equal-Nodes Random Walk (\enrw)}\label{fig:walks:enrw}
	\end{subfigure}
	\begin{subfigure}[b]{0.495\linewidth}
		\centering
		\includegraphics[scale=0.7]{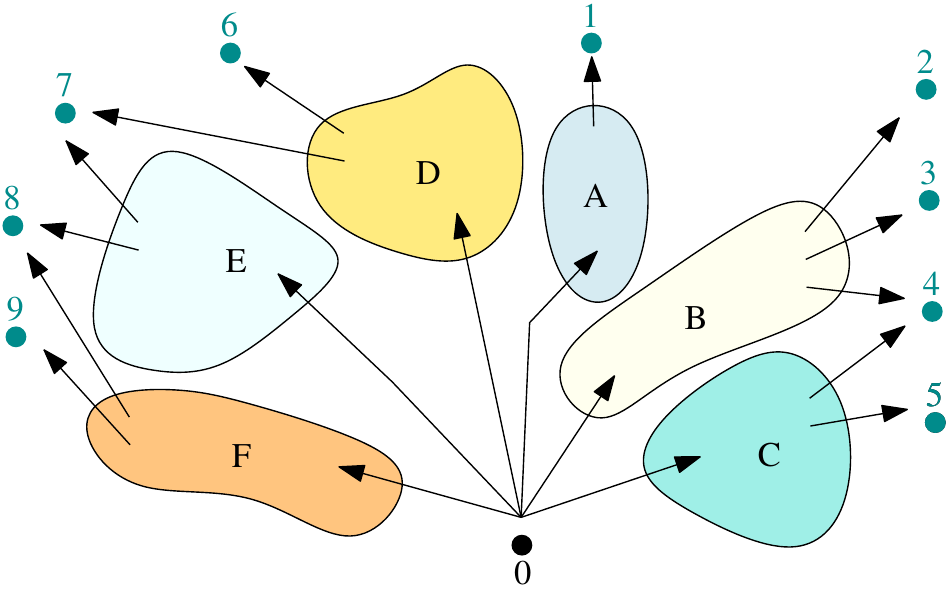}
		\subcaption{Equal-Edges Random Walk  (\eerw)}
	\end{subfigure}~
	\begin{subfigure}[b]{0.495\linewidth}
		\centering
		\includegraphics[scale=0.7]{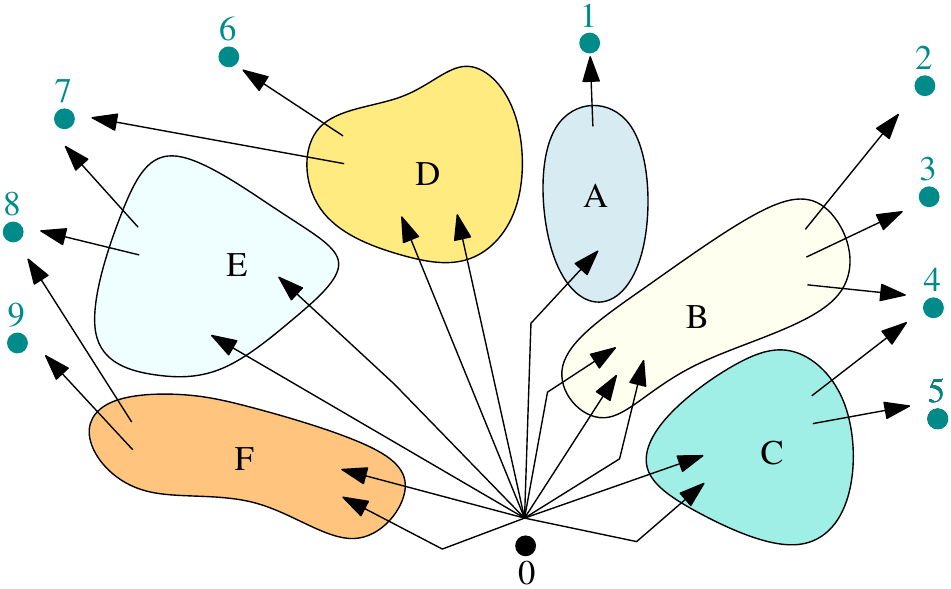}
		\subcaption{Weighted-Edges Random Walk (\werw)}
		\label{fig:walks:werw}
	\end{subfigure}
	\caption{\ourmethod's probability measures are based on random walks, depicted for the neighborhood of a node~0.
    Arrows outgoing from the same node or edge are traversed with uniform probability.
  }
	\label{fig:walks}
\end{figure}

\paragraph{Similarity Measures (\aggregation).}

In the original formulation of \orc, i.e., \cref{eq:orc:edges},
when determining the curvature of an edge $\{i,j\}$,
the Wasserstein distance $\wasserstein_1$ is used to aggregate the probability measures of $i$ and $j$.
There are at least three different extensions of this aggregation scheme to hypergraphs that retain an optimal transport intuition, as required by Feature~\ref{item:transport-intuition}.
Leveraging that an edge $e \subseteq V$ is simply a set of nodes, 
the easiest extension is to leave the aggregation function unchanged.
We continue determining the curvature for pairs of nodes, 
and account for the edges in $\hypergraph$ only in the definition of our probability measure. 
In this case, we could derive a curvature for an edge $e$ as the average over all curvatures of node pairs contained in $e$, i.e., we could define \aggregation as
\begin{equation}
 	\aggregation_\mean(e) \defeq \frac{2}{|e|(|e|-1)}\sum_{\{i,j\}\subseteq e}\wasserstein_1\!\big(\mu_i, \mu_j\big)\;.
 \end{equation}
This is equivalent to computing the curvature of $e$ based on the average over all $\wasserstein_1$ distances of probability measures associated with nodes contained in $e$:
\begin{align}
	\kappa_\mean(e) \defeq 
	1 - \aggregation_\mean(e) = 
	1 - \frac{2}{|e|(|e|-1)}\sum_{\{i,j\}\subseteq e}\wasserstein_1\mleft(\mu_i, \mu_j\mright) = 
	 \frac{2}{|e|(|e|-1)}\sum_{\{i,j\}\subseteq e}\kappa(i,j)\;.
\end{align}
Intuitively, this definition assesses the mean amount of work needed to transport the probability mass from one node in $e$ to another node in $e$.
Alternatively, and still keeping with the intuition from optimal transport,  we can define \aggregation as
\begin{equation}  
  \aggregation_\bary(e) \defeq \frac{1}{|e| - 1}\sum_{i\in e}\wasserstein_1\mleft(\mu_i, \bar{\mu}\mright)\;,~~\text{and consequently},~~\curvature_\bary(e) \defeq 1 - \aggregation_\bary(e)\;,
\end{equation}
where $\bar{\mu}$ denotes the Wasserstein barycenter of the probability measures of nodes contained in $e$, 
and the denominator generalizes the original $\dist(i,j)$.
\cite{asoodeh2018curvature} use this aggregation function.
Intuitively, $\aggregation_\bary$ is proportional to the minimum amount of work needed to transport all probability mass from the probability measures of the nodes to one place, 
with the caveat that this place need not correspond to a node in the underlying hypergraph. 
Finally, we can capture the maximum amount of work needed to transport all probability mass from one node in $e$ to another node in $e$ as
\begin{equation}
  \aggregation_\maxi(e) \defeq \max\mleft\{\wasserstein_1(\mu_i, \mu_j) \mid \{i, j\} \subseteq e\mright\}\;,~~\text{and consequently},~~\curvature_\maxi(e) \defeq 1 - \aggregation_\maxi(e)\;.
\end{equation}

Independent of the choice of \aggregation, 
the curvature at a node $i$ can be defined as the mean of all curvatures of meaningful directions containing $i$, i.e.,
\begin{equation}
	\curvature^{\neighborhood}(i) \defeq \frac{1}{|\neighborhood(i)|}\sum_{j\in\neighborhood(i)}\curvature(i,j)\;,
\end{equation}
or it can be derived as the mean of all curvatures of edges containing $i$, i.e., 
\begin{equation}
	\curvature^{\edges}(i) \defeq \frac{1}{\degree(i)}\sum_{e\ni i}\curvature(e)\;.
\end{equation}
Finally, since $\hypergraph$ is connected, we can define the
curvature of an arbitrary subset of nodes~$s \subseteq\vertices$ as
\begin{equation}
  \curvature(s) \defeq 1 - \frac{\aggregation(s)}{\dist(s)}\;,
  \label{eq:orchard:curvature:general}
\end{equation}
where \aggregation can be any of our aggregation functions, and 
$\dist(s) \defeq \max\mleft\{\dist\mleft(i, j\mright) \mid \mleft\{i, j\mright\} \subseteq s \mright\}$
refers to the \emph{extent} of the subset~$s$.
Note that for $s \in \edges$, $\dist(s) = 1$,
and thus, \cref{eq:orchard:curvature:general} is consistent with our previous definitions of hyperedge curvatures.

\subsection{Properties of \ourmethod Curvatures}

Having introduced our probability measures ($\mu$) and aggregation functions (\aggregation), 
we now analyze their properties and the properties of the resulting curvatures. 
All proofs are deferred to \cref{apx-proofs}.
First, we note that $\mu^\enrw$, $\mu^\eerw$, and $\mu^\werw$ are equivalent for certain hypergraph classes, and all aggregation functions coincide for graphs.
\begin{restatable}{lem}{measureequality}
\label{lem:dispersions-general}
	For graphs and $\uniformity$-uniform, $\regularity$-regular, $\cooccurrence$-cooccurrent hypergraphs, 
	$\mu^\enrw = \mu^\eerw =  \mu^\werw$.
\end{restatable}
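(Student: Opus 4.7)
The plan is to verify the equality by direct computation: under each of the hypothesized regularity conditions, the three expressions in \cref{eq:orchard:measure:rwnodes,eq:orchard:measure:rwedges,eq:orchard:measure:weightedrwedges} collapse to the same closed form. Since all three measures agree on the $(1-\smoothing)$ factor and all assign mass $\smoothing$ to $i$ itself and $0$ to non-neighbors, it suffices to show that they agree on neighbors $j \adjacent i$.

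First I would dispose of the graph case. For a simple graph, every edge has size~$2$, so $|e|-1 = 1$ and there are no loops ($|e|=1$). Also, adjacency witnesses a unique edge, so $|\{e \colon \{i,j\}\subseteq e\}| = 1$ and $|\neighborhood(i)| = \degree(i)$. Substituting into \cref{eq:orchard:measure:rwnodes,eq:orchard:measure:rwedges,eq:orchard:measure:weightedrwedges} each yields $(1-\smoothing)/\degree(i)$, recovering \cref{eq:orc:measure}.

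For the $\uniformity$-uniform, $\regularity$-regular, $\cooccurrence$-cooccurrent case, I would first extract the three key identities implied by these conditions: (i)~$|e| = \uniformity$ for every $e \in \edges$, so $|e|-1 = \uniformity-1$; (ii)~$\degree(i) = \regularity$ for every node, and (iii)~for any $j \adjacent i$, exactly $\cooccurrence$ edges contain $\{i,j\}$. A small but useful consequence I would record separately is that $|\neighborhood(i)| = \regularity(\uniformity-1)/\cooccurrence$, obtained by double-counting the $(i,j)$ pairs with $j \in e \setminus \{i\}$ over all $\regularity$ edges containing $i$. With these in hand, the computations are one line each:
\begin{itemize}
\item $\mu_i^{\enrw}(j) = (1-\smoothing)\cdot\cooccurrence/(\regularity(\uniformity-1))$ by the neighborhood count;
\item $\mu_i^{\eerw}(j) = (1-\smoothing)\cdot (1/\regularity)\cdot \cooccurrence\cdot 1/(\uniformity-1)$, since $|\{e\ni i : |e|=1\}|=0$ (assuming $\uniformity\geq 2$; the $\uniformity=1$ case is vacuous as no node has neighbors) and the inner sum has exactly $\cooccurrence$ equal terms;
\item $\mu_i^{\werw}(j) = (1-\smoothing)\cdot \cooccurrence/(\regularity(\uniformity-1))$ using the identity $\sum_{f\ni i}(|f|-1) = \regularity(\uniformity-1)$.
\end{itemize}
All three yield the same value $(1-\smoothing)\cooccurrence/(\regularity(\uniformity-1))$, completing the proof.

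I do not expect any real obstacle here: the statement is essentially a consistency check, and the only subtle point is making sure that the normalization in \cref{eq:orchard:measure:rwedges} behaves correctly when the uniformity $\uniformity = 1$ (in which case the hypergraph has no adjacent pairs, so the claim is trivial), and that the combinatorial identity $|\neighborhood(i)| = \regularity(\uniformity-1)/\cooccurrence$ is correctly established via double-counting under the cooccurrence assumption.
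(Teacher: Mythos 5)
Your proposal is correct and follows essentially the same route as the paper's proof: substitute the identities $|e|=\uniformity$, $\degree(i)=\regularity$, $|\{e\supseteq\{i,j\}\}|=\cooccurrence$, and $|\neighborhood(i)|=\regularity(\uniformity-1)/\cooccurrence$ into each of the three measure definitions and observe they coincide, then handle the graph case separately via $|\neighborhood(i)|=\degree(i)$. The only difference is cosmetic: you justify the neighborhood-count identity by double-counting and flag the degenerate $\uniformity=1$ case, both of which the paper leaves implicit.
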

\begin{restatable}{lem}{aggregationequality}
\label{lem:aggregation-general}
	For graphs, i.e., $2$-uniform hypergraphs, we have $\aggregation_\mean(e) = \aggregation_\bary(e) = \aggregation_\maxi(e)$ for all edges~$e \in \edges$.
\end{restatable}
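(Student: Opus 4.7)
The plan is to observe that for a graph, every edge $e \in \edges$ has cardinality exactly two, say $e = \{i, j\}$, and then show that all three aggregation functions collapse to the single quantity $\wasserstein_1(\mu_i, \mu_j)$.

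First I would dispatch the two easy cases. For $\aggregation_\mean$, the sum $\sum_{\{i,j\}\subseteq e} \wasserstein_1(\mu_i, \mu_j)$ has exactly $\binom{2}{2} = 1$ summand, and the normalizing factor $\frac{2}{|e|(|e|-1)} = 1$, so $\aggregation_\mean(e) = \wasserstein_1(\mu_i, \mu_j)$. For $\aggregation_\maxi$, the set of pairs $\{i,j\}\subseteq e$ is the singleton $\{\{i,j\}\}$, so the max reduces to $\wasserstein_1(\mu_i, \mu_j)$ as well.

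The only step requiring a small argument is $\aggregation_\bary(e) = \wasserstein_1(\mu_i, \mu_j)$. Here $\frac{1}{|e|-1} = 1$, so $\aggregation_\bary(e) = \wasserstein_1(\mu_i, \bar\mu) + \wasserstein_1(\mu_j, \bar\mu)$, where $\bar\mu$ is a (uniformly weighted) Wasserstein barycenter of $\{\mu_i, \mu_j\}$. By the triangle inequality for $\wasserstein_1$, this sum is at least $\wasserstein_1(\mu_i, \mu_j)$ for any measure $\bar\mu$; conversely, taking $\bar\mu = \mu_i$ (or $\mu_j$) attains $\wasserstein_1(\mu_i, \mu_j)$. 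Hence the infimum defining the barycenter equals $\wasserstein_1(\mu_i, \mu_j)$, so $\aggregation_\bary(e) = \wasserstein_1(\mu_i, \mu_j)$.

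The only potential subtlety, and thus the main obstacle if one exists at all, is pinning down which convention for the Wasserstein barycenter is being used (e.g., equal weights in the two-measure case) and verifying that this convention makes $\aggregation_\bary$ genuinely equal to (rather than just bounded by) $\wasserstein_1(\mu_i, \mu_j)$. Once the triangle-inequality lower bound is matched by an explicit minimizer, all three aggregations coincide, proving the lemma.
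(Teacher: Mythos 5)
Your proposal is correct and matches the paper's own argument: both handle $\aggregation_\mean$ and $\aggregation_\maxi$ by noting that a $2$-element edge admits only one pair, and both resolve $\aggregation_\bary$ via the triangle inequality for $\wasserstein_1$, showing the barycenter objective is minimized by taking $\bar{\mu}$ equal to one of the two endpoint measures, which yields $\wasserstein_1(\mu_i,\mu_j)$. The convention worry you flag is harmless, since the paper defines the barycenter as the (uniformly weighted) minimizer, exactly as you assume.
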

Taken together, \cref{lem:dispersions-general} and
\cref{lem:aggregation-general} imply that for graphs, \ourmethod simplifies to \orc, 
regardless of the choice of probability measure and aggregation
function.
This fulfills Condition~\ref{cond:generalization}.
Moreover, \emph{all} our aggregation functions are permutation-invariant by construction, thus satisfying Condition~\ref{cond:permutation-invariance}.
Concerning Condition~\ref{cond:scalability}, $\curvature_\mean$
and $\curvature_\maxi$ exhibit better scalability than $\curvature_\bary$, 
as Wasserstein barycenters are harder to compute than individual distances~\citep{cuturi14-fast}.
Another reason to prefer $\curvature_\mean$ and
$\curvature_\maxi$ over $\curvature_\bary$ is the existence of upper and
lower bounds that are easy to calculate.
To this end, let $\dist_{\min}(\hypergraph) \defeq \min\mleft\{ \dist\mleft(u,
v\mright) \mid u \neq v \in \vertices \mright\}$ be the smallest nonzero
distance in~$\hypergraph$, 
and let $\mleft\|\cdot\mright\|_1$ refer
to the $L_1$ norm of a vector. 
We then obtain the following bounds for $\curvature_\mean$ and $\curvature_\maxi$.
\begin{restatable}{thm}{curvatureboundmean}
  For any probability measure $\mu$ and $C(e) \defeq \nicefrac{2}{\mleft|e\mright|\mleft(\mleft|e\mright|-1\mright)}$, 
  the curvature~$\curvature_\mean(e)$ of an edge $e\in\edges$ is bounded by
	\begin{equation}
    1 - \diam(\hypergraph)  C(e)\!\!\!\sum_{\{i, j\}\subseteq e} \!\mleft\|\mu_i - \mu_j\mright\|_1 \leq \curvature_\mean(e) \leq 1 - \dist_{\min}(\hypergraph) C(e)\!\!\!\sum_{\{i, j\} \subseteq e} \!\mleft\|\mu_i - \mu_j\mright\|_1\;.
	\end{equation}
\end{restatable}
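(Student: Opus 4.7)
The plan is to reduce both inequalities to pointwise bounds on the Wasserstein distances $W_1(\mu_i, \mu_j)$ appearing in
$\curvature_\mean(e) = 1 - C(e) \sum_{\{i,j\} \subseteq e} W_1(\mu_i, \mu_j)$.
The lower bound on $\curvature_\mean(e)$ corresponds to an upper bound on each $W_1(\mu_i, \mu_j)$, while the upper bound on $\curvature_\mean(e)$ corresponds to a lower bound. Since the sum is termwise, it suffices to prove the sandwich inequality
$\dist_{\min}(\hypergraph) \, \|\mu_i - \mu_j\|_1 \leq W_1(\mu_i, \mu_j) \leq \diam(\hypergraph) \, \|\mu_i - \mu_j\|_1$
(up to the $L_1$-norm normalization convention) for each pair $\{i, j\} \subseteq e$, then multiply by $C(e)$, sum over pairs, and subtract from $1$.

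For the upper bound on $W_1(\mu_i, \mu_j)$, I would construct an explicit transport plan. Decompose the measures as $\mu_i = \nu + \mu_i^+$ and $\mu_j = \nu + \mu_j^+$ with $\nu(v) = \min\{\mu_i(v), \mu_j(v)\}$, so that $\mu_i^+$ and $\mu_j^+$ have disjoint supports and equal total mass. The common part $\nu$ remains in place at zero cost, while the remaining excess mass is transported from the support of $\mu_i^+$ to that of $\mu_j^+$ at cost at most $\diam(\hypergraph)$ per unit. Because the excess mass equals $\|\mu_i - \mu_j\|_1$ up to the chosen normalization, the desired bound follows.

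For the lower bound on $W_1(\mu_i, \mu_j)$, I would work in the primal formulation. Any coupling $\pi$ with the prescribed marginals assigns distance at least $\dist_{\min}(\hypergraph)$ to every off-diagonal pair, so $\int \dist(x, y)\, d\pi(x,y) \geq \dist_{\min}(\hypergraph) \cdot \pi(x \neq y)$; taking the infimum over couplings and invoking the maximal coupling identity $\inf_\pi \pi(x \neq y) = d_{TV}(\mu_i, \mu_j)$ yields the bound. Substituting both inequalities into the definition of $\curvature_\mean(e)$ and summing over all $\{i,j\} \subseteq e$ then completes the argument.

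I expect the main technical obstacle to lie in the upper-bound step: although exhibiting the transport plan is standard, one must explicitly verify that the greedy plan is a valid coupling with marginals $\mu_i$ and $\mu_j$, that each transported unit incurs cost at most $\diam(\hypergraph)$, and that the total transported mass matches $\|\mu_i - \mu_j\|_1$ in the intended normalization. The lower bound, by contrast, is essentially a one-line consequence of the maximal coupling inequality, and the outer algebra (multiplying by $C(e)$ and summing) is routine.
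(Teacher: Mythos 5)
Your proof is correct and follows essentially the same route as the paper: both reduce the claim termwise to the sandwich inequality $\dist_{\min}(\hypergraph)\,\dist_{\text{TV}}(\mu_i,\mu_j) \leq \wasserstein_1(\mu_i,\mu_j) \leq \diam(\hypergraph)\,\dist_{\text{TV}}(\mu_i,\mu_j)$, which the paper simply cites from Gibbs and Su (Theorem~4) and you re-derive from scratch via an explicit transport plan for the upper bound and the maximal-coupling identity for the lower bound. The only point to watch is the normalization you flag yourself: the excess mass in your decomposition equals $\tfrac{1}{2}\|\mu_i-\mu_j\|_1$, consistent with the paper's own remark that $\dist_{\text{TV}}(\mu_i,\mu_j) = \tfrac{1}{2}\|\mu_i-\mu_j\|_1$ on a finite space.
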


\begin{restatable}{thm}{curvatureboundmax}
  For any probability measure $\mu$, the curvature $\curvature_\maxi(e)$ of an edge $e\in\edges$ is bounded by
  \begin{equation}
    1 - \diam(\hypergraph) \max_{\{i, j\}\subseteq e} \mleft\|\mu_i - \mu_j\mright\|_1 \leq \curvature_\maxi(e) \leq 1 - \dist_{\min}(\hypergraph) \max_{\{i, j\} \subseteq e} \mleft\|\mu_i - \mu_j\mright\|_1\;.
  \end{equation}
\end{restatable}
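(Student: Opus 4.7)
The plan is to reduce the claim to a pointwise sandwich bound on $\wasserstein_1(\mu_i, \mu_j)$ in terms of $\|\mu_i - \mu_j\|_1$, and then transfer this to the maximum over all pairs $\{i,j\}\subseteq e$ before rearranging using $\curvature_\maxi(e) = 1 - \aggregation_\maxi(e)$.

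First, I would establish the pointwise inequality
\[
  \dist_{\min}(\hypergraph)\,\|\mu_i - \mu_j\|_1 \leq \wasserstein_1(\mu_i, \mu_j) \leq \diam(\hypergraph)\,\|\mu_i - \mu_j\|_1
\]
(with the understanding that, up to a factor of~$\tfrac{1}{2}$ from the total-variation normalization, this is the standard Wasserstein--TV sandwich). Both directions follow from the coupling characterization of~$\wasserstein_1$: for any coupling~$\pi$ of $\mu_i$ and $\mu_j$, each off-diagonal contribution $\pi(u,v)\,\dist(u,v)$ lies between $\dist_{\min}(\hypergraph)\,\pi(u,v)$ and $\diam(\hypergraph)\,\pi(u,v)$, so summing and optimizing over couplings sandwiches $\wasserstein_1(\mu_i,\mu_j)$ between $\dist_{\min}(\hypergraph)$ and $\diam(\hypergraph)$ times the total off-diagonal mass of the optimal coupling, which on a discrete space is exactly the total variation distance.

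Second, I would promote this pointwise sandwich to a max-over-pairs version. The upper bound is immediate: taking the max of both sides over $\{i,j\}\subseteq e$ and pulling the constant $\diam(\hypergraph)$ out yields $\aggregation_\maxi(e)\leq \diam(\hypergraph)\max_{\{i,j\}\subseteq e}\|\mu_i-\mu_j\|_1$. For the lower bound, I would pick the specific pair $\{i_0,j_0\}\subseteq e$ attaining $\max_{\{i,j\}\subseteq e}\|\mu_i-\mu_j\|_1$ and apply the pointwise inequality there: then
\[
  \aggregation_\maxi(e) \geq \wasserstein_1(\mu_{i_0},\mu_{j_0}) \geq \dist_{\min}(\hypergraph)\max_{\{i,j\}\subseteq e}\|\mu_i-\mu_j\|_1.
\]

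Third, I would subtract each side from~$1$ (flipping the inequalities) using $\curvature_\maxi(e) = 1 - \aggregation_\maxi(e)$ to obtain the stated bounds. The main obstacle is really only the first step: getting the correct constants in the Wasserstein--$L_1$ sandwich so that they line up with the exact normalization used in the statement. Once that pointwise estimate is in place, the remaining two steps are straightforward monotonicity and rearrangement arguments.
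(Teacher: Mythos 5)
Your proposal is correct and follows essentially the same route as the paper: the paper invokes the Gibbs--Su sandwich $\dist_{\min}(\hypergraph)\,\dist_{\mathrm{TV}}\mleft(\mu_i,\mu_j\mright) \leq \wasserstein_1\mleft(\mu_i,\mu_j\mright) \leq \diam(\hypergraph)\,\dist_{\mathrm{TV}}\mleft(\mu_i,\mu_j\mright)$ for the single maximizing pair and then subtracts from $1$, exactly as you do, merely citing that sandwich rather than re-deriving it from couplings. Two cosmetic caveats: in your coupling sketch the upper bound should be obtained from the coupling that \emph{minimizes} off-diagonal mass (the TV-optimal one) rather than the $\wasserstein_1$-optimal one, whose off-diagonal mass only \emph{lower}-bounds the total variation distance; and the factor-$\tfrac{1}{2}$ normalization between $\dist_{\mathrm{TV}}$ and $\mleft\|\cdot\mright\|_1$ that you explicitly flag is glossed over in the paper's own statement and proof as well.
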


Directly from our definitions, we further obtain the following relationships between $\curvature_\mean$, $\curvature_\bary$, and~$\curvature_\maxi$, 
and between \ourmethod curvatures on hypergraphs and \orc on their unweighted clique expansions.
\begin{restatable}{cor}{curvaturerelations}
	Given a hypergraph $\hypergraph = (\vertices, \edges)$, 
	$\curvature_\maxi(e) \leq \curvature_\mean(e)$ 
	and 
	$\curvature_\maxi(e) \leq \curvature_\bary(e)$
	for all $e\in \edges$.
\end{restatable}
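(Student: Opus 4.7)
Both inequalities are equivalent to comparing the aggregation functions in the opposite direction, i.e., showing $\aggregation_{\mean}(e) \leq \aggregation_{\maxi}(e)$ and $\aggregation_{\bary}(e) \leq \aggregation_{\maxi}(e)$, since $\curvature_{\star}(e) = 1 - \aggregation_{\star}(e)$ for each variant. My plan is to treat the two inequalities separately, as each isolates a very simple principle about how averages and barycenters relate to maxima of pairwise Wasserstein distances.

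\textbf{Step 1 ($\curvature_\maxi(e) \leq \curvature_\mean(e)$).} This reduces to a one-line observation: an average over a finite set is bounded above by its maximum. Writing $P(e) = \{\{i,j\} \subseteq e\}$, I would note that
\begin{equation*}
\aggregation_\mean(e) = \frac{2}{|e|(|e|-1)} \sum_{\{i,j\} \in P(e)} \wasserstein_1(\mu_i, \mu_j) \leq \max_{\{i,j\} \in P(e)} \wasserstein_1(\mu_i, \mu_j) = \aggregation_\maxi(e),
\end{equation*}
because $|P(e)| = \binom{|e|}{2}$, so the prefactor is exactly $1/|P(e)|$. Subtracting from $1$ flips the inequality to give the claim.

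\textbf{Step 2 ($\curvature_\maxi(e) \leq \curvature_\bary(e)$).} Here the key input is the defining property of the Wasserstein barycenter $\bar\mu$ of $\{\mu_i\}_{i\in e}$, namely that $\bar\mu$ minimizes $\nu \mapsto \sum_{i \in e} \wasserstein_1(\mu_i, \nu)$. Fix any $j^* \in e$. Substituting the feasible point $\nu = \mu_{j^*}$ yields
\begin{equation*}
\sum_{i \in e} \wasserstein_1(\mu_i, \bar\mu) \leq \sum_{i \in e} \wasserstein_1(\mu_i, \mu_{j^*}) = \sum_{i \in e \setminus \{j^*\}} \wasserstein_1(\mu_i, \mu_{j^*}) \leq (|e|-1)\max_{\{i,j\} \subseteq e} \wasserstein_1(\mu_i, \mu_j),
\end{equation*}
where the first equality uses $\wasserstein_1(\mu_{j^*}, \mu_{j^*}) = 0$, and the final bound replaces each of the $|e|-1$ summands by the maximum over all pairs in $e$. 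Dividing by $|e|-1$ gives $\aggregation_\bary(e) \leq \aggregation_\maxi(e)$, and hence $\curvature_\maxi(e) \leq \curvature_\bary(e)$.

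\textbf{Expected obstacles.} I do not anticipate a real obstacle: once the aggregation functions are rewritten as in Step 1 and the barycenter's variational definition is invoked in Step 2, both bounds are immediate. The only nontrivial point is recognizing that the normalization $\tfrac{1}{|e|-1}$ in $\aggregation_\bary$ is precisely what is needed to absorb the loss incurred when bounding $\sum_{i \neq j^*} \wasserstein_1(\mu_i, \mu_{j^*})$ by $(|e|-1)$ times the maximum; this is what makes the substitution trick work. No properties of the specific probability measures $\mu^\enrw, \mu^\eerw, \mu^\werw$ are used, so the corollary holds uniformly across our choices of $\mu$.
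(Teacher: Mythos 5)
Your proof is correct and matches the route the paper intends: the paper offers no explicit proof, stating only that the corollary follows ``directly from our definitions,'' and your two steps---bounding an average by a maximum for $\aggregation_\mean$, and invoking the variational characterization of the Wasserstein barycenter (exactly as the paper defines it, as the minimizer of the sum of $\wasserstein_1$ distances) with the feasible point $\mu_{j^*}$ for $\aggregation_\bary$---are precisely the omitted details. The only shared caveat, inherited from the paper's definitions rather than introduced by you, is that both aggregation functions are undefined for singleton edges.
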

\begin{restatable}{cor}{expansionrelation}\label{cor:expansionrelation}
	Given a  hypergraph $\hypergraph = (\vertices, \edges)$ and its unweighted clique expansion $\graph^\circ = (\vertices,\edges^\circ)$, 
	for $\{i,j\}\in\edges^\circ$, 
	the \orc $\curvature(i,j)$ in $\graph^\circ$ equals its \ourmethod curvature  $\curvature(i,j)$ of direction $\{i,j\}\subseteq\vertices$ in $\hypergraph$ with\thinspace $\mu^{\enrw}$, 
	and\thinspace the\thinspace \orc $\curvature(i)$ of $i\in \vertices$ in\thinspace $\graph^\circ$ equals its \ourmethod curvature\thinspace $\curvature^{\neighborhood}(i)$ in $\hypergraph$ with\thinspace $\mu^{\enrw}$.
\end{restatable}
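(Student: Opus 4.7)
The plan is to unfold definitions and verify that three ingredients coincide between the hypergraph $\hypergraph$ (equipped with the \enrw measure) and its unweighted clique expansion $\graph^\circ$ (equipped with the standard $\smoothing$-lazy measure from \cref{eq:orc:measure}): (i) the probability measures at each node, (ii) the shortest-path distances, and (iii) the aggregation/averaging used at the node level. Once these agree pointwise, the equality of both edge and node curvatures follows mechanically from the respective definitions.

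First I would compare the measures. By construction of $\graph^\circ$, we have $\{i,j\}\in \edges^\circ$ if and only if $i\adjacent j$ in $\hypergraph$, so the neighborhood $\neighborhood_{\graph^\circ}(i)$ in $\graph^\circ$ equals $\neighborhood_\hypergraph(i)$. Because $\graph^\circ$ is a simple graph, $\degree_{\graph^\circ}(i) = |\neighborhood_{\graph^\circ}(i)| = |\neighborhood_\hypergraph(i)|$. Plugging this into \cref{eq:orc:measure} gives $\mu_i^{\smoothing}(j) = (1-\smoothing)/|\neighborhood_\hypergraph(i)|$ for all neighbors $j$, which is exactly $\mu_i^\enrw(j)$ from \cref{eq:orchard:measure:rwnodes}, and both assign the same lazy mass $\smoothing$ to $i$. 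Hence $\mu_i^{\smoothing}$ in $\graph^\circ$ and $\mu_i^\enrw$ in $\hypergraph$ are identical probability measures on $\vertices$.

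Next I would observe that the adjacency relation is preserved between $\hypergraph$ and $\graph^\circ$, so every walk in one corresponds to a walk of the same length in the other. Therefore the shortest-path distance satisfies $\dist_{\graph^\circ}(i,j) = \dist_\hypergraph(i,j)$ for all $i,j\in\vertices$. Combining this with the measure equality from the previous step and the definition of \orc on $\graph^\circ$ (\cref{eq:orc:edges}) versus the pair-curvature formulation of \ourmethod (\cref{eq:orc:rewritten}), the edge-level claim $\curvature_{\graph^\circ}(i,j) = \curvature_\hypergraph(i,j)$ is immediate for $\{i,j\} \in \edges^\circ$, since both reduce to $1 - \wasserstein_1(\mu_i,\mu_j)$.

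Finally, for the node-level statement I would note that the \orc on $\graph^\circ$ averages over edges incident to $i$ (\cref{eq:orc:nodes}), which in a simple graph is precisely an average over $\neighborhood_{\graph^\circ}(i) = \neighborhood_\hypergraph(i)$; this is the definition of $\curvature^\neighborhood(i)$ in \ourmethod. Since each summand was just shown to coincide, the averages agree. There is no real obstacle here: the only thing to watch is to not conflate $\degree$ and $|\neighborhood|$ in the hypergraph (they differ in general), and to use the fact that on $\graph^\circ$ they coincide, which is exactly what makes the clique expansion erase the multi-edge information that $\mu^\eerw$ and $\mu^\werw$ would otherwise distinguish.
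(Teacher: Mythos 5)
Your proposal is correct and matches the paper's treatment: the paper gives no separate proof, stating only that the corollary follows ``directly from our definitions,'' and your unfolding (measures coincide because $\degree_{\graph^\circ}(i)=|\neighborhood_{\hypergraph}(i)|$, ground metrics coincide, and the node-level averages range over the same index set) is exactly the intended argument. The only step you gloss over slightly is the metric equality---the paper defines hypergraph path length as the number of \emph{distinct} hyperedges used, so one should note that any adjacency sequence reusing a hyperedge can be shortcut within that hyperedge, which yields $\dist_{\hypergraph}=\dist_{\graph^\circ}$ as you claim.
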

\cref{cor:expansionrelation} clarifies that the equal-nodes random walk establishes the connection between \ourmethod and \orc on graphs.
Moreover, \ourmethod curvatures capture relations between \emph{global} properties and \emph{local} measurements, similar to\thinspace the\thinspace Bonnet--Myers\thinspace theorem
in Riemannian geometry~\citep{Myers41}.
\begin{restatable}{thm}{bonnet}
	Given a subset of nodes $s \subseteq\vertices$ and 
	an arbitrary probability measure $\measure$,
  	let $\delta_i$ denote a Dirac
  	measure at node~$i$, 
  	and let $\jump(\mu_i) \defeq \wasserstein_1(\delta_i, \mu_i)$
  	denote the \emph{jump probability} of $\mu_i$.
  If 
  \begin{inparaenum}[(i)]
  	\item~all curvatures based on $\mu$ are strictly positive,
  	i.e., $\curvature(s) \geq \curvature > 0$ for all $s \subseteq \vertices$, and
    \item\label{item:aggregation-bound}~$\wasserstein_1(\mu_i, \mu_j) \leq \aggregation(s)$
  	for $\{i, j\} = \argmax\mleft(\dist\mleft(s\mright)\mright)$, then
  \end{inparaenum}
  \begin{equation}\label{eq:jump}
    \dist(s) \leq \frac{\jump(i) + \jump(j) }{\curvature(s)}\;.
  \end{equation}
  \label{lem:Bonnet}
\end{restatable}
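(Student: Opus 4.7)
The plan is to adapt the classical Bonnet--Myers argument for Ollivier--Ricci curvature to the subset setting, reducing the bound on $\dist(s)$ to an inequality for a single pair of nodes via the triangle inequality for the Wasserstein distance.

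First, I would fix a pair $\{i, j\} \subseteq s$ that realizes $\dist(s)$, so that $\dist(i, j) = \dist(s)$; this is precisely the pair named in hypothesis~(ii). Applying the triangle inequality for $\wasserstein_1$ to the four measures $\delta_i, \mu_i, \mu_j, \delta_j$ gives
\[
  \wasserstein_1(\delta_i, \delta_j) \leq \wasserstein_1(\delta_i, \mu_i) + \wasserstein_1(\mu_i, \mu_j) + \wasserstein_1(\mu_j, \delta_j) = \jump(\mu_i) + \wasserstein_1(\mu_i, \mu_j) + \jump(\mu_j).
\]
Because the optimal transport plan between two point masses is realized along a shortest path, $\wasserstein_1(\delta_i, \delta_j) = \dist(i, j) = \dist(s)$. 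Rewriting $\wasserstein_1(\mu_i, \mu_j) = \dist(i,j)(1 - \curvature(i, j))$ via \cref{eq:orc:rewritten} and rearranging produces
\[
  \dist(s)\,\curvature(i, j) \leq \jump(\mu_i) + \jump(\mu_j).
\]

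The remaining step is to transfer from the directional curvature $\curvature(i,j)$ to the assumed lower bound $\curvature(s) \geq \curvature > 0$. This is where hypothesis~(ii) is essential: from $\wasserstein_1(\mu_i, \mu_j) \leq \aggregation(s)$ we obtain
\[
  \curvature(i,j) = 1 - \frac{\wasserstein_1(\mu_i, \mu_j)}{\dist(i, j)} \geq 1 - \frac{\aggregation(s)}{\dist(s)} = \curvature(s) \geq \curvature > 0,
\]
and dividing the previous inequality by $\curvature(i,j) > 0$ and using $1/\curvature(i,j) \leq 1/\curvature(s)$ yields \cref{eq:jump}. The chain of inequalities itself closely mirrors the edge-case Bonnet--Myers proof, so the computational obstacle is minimal. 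The conceptual subtlety—and the reason hypothesis~(ii) has to appear at all—is that for subsets larger than an edge, $\aggregation(s)$ typically averages or otherwise combines information across many pairs, so a bound on the single diameter-realizing pair is not automatically implied by $\curvature(s) \geq \curvature$ and must be posited as a compatibility condition that ties $\aggregation$ to the extremal pair of $s$.
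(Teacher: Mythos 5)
Your proof is correct and follows essentially the same route as the paper's: the same triangle inequality $\wasserstein_1(\delta_i,\delta_j)\leq \jump(\mu_i)+\wasserstein_1(\mu_i,\mu_j)+\jump(\mu_j)$ applied to the diameter-realizing pair, the same use of hypothesis~(ii) to control $\wasserstein_1(\mu_i,\mu_j)$ via $\aggregation(s)=(1-\curvature(s))\dist(s)$, and the same final division justified by strict positivity of the curvature. The only cosmetic difference is that you route the last step through the directional curvature $\curvature(i,j)$ and the monotonicity $\curvature(i,j)\geq\curvature(s)$, whereas the paper substitutes $\aggregation(s)$ into the triangle inequality directly.
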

Note that condition~(\ref{item:aggregation-bound}) of \cref{lem:Bonnet} is always satisfied by $\aggregation_\maxi$.
Finally, 
in \cref{apx-proofs},
we generalize the concepts of cliques, grids, and trees (prototypical positively curved, flat, and negatively curved graphs)
to hypergraphs,
and we prove the following lemmas to ensure that \ourmethod curvatures respect our geometric intuition, as required by Feature~\ref{item:geometric-intuition}. 

\begin{restatable}[Hyperclique curvature]{thm}{hyperclique}
	For an edge $e$ in a hyperclique  $\hypergraph = (\vertices, \edges)$ on $n$ nodes 
	with edges $\edges = \binom{\vertices}{\uniformity}$ for some $r\leq n$, with $\smoothing = 0$,
	\begin{align*}
		\curvature(e) 
		= 1 - \frac{1}{n-1},\text{~i.e.},
		~
		\lim_{n\rightarrow\infty}\curvature(e) = 1,\text{~	independent of $\uniformity$.}
	\end{align*}
\end{restatable}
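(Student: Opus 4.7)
The plan is to exploit the extreme symmetry of a hyperclique $\hypergraph = (\vertices, \binom{\vertices}{\uniformity})$. First I would note that any two distinct nodes lie in a common $\uniformity$-edge (just extend $\{i,j\}$ to any $\uniformity$-subset), so $i \adjacent j$ and $\dist(i,j) = 1$ for all $i \neq j$, which also gives $|\neighborhood(i)| = n-1$. The hyperclique is $\uniformity$-uniform, $\binom{n-1}{\uniformity-1}$-regular, and $\binom{n-2}{\uniformity-2}$-cooccurrent, so by \cref{lem:dispersions-general} the three probability measures agree, and a direct evaluation of $\measure^\enrw$ at $\smoothing = 0$ gives $\measure_i(j) = \tfrac{1}{n-1}$ for every $j \neq i$.

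Second, I would compute the pairwise Wasserstein distances. Because every off-diagonal distance in the hyperclique equals $1$, we have $\wasserstein_1(\measure_i, \measure_j) = \tfrac12 \|\measure_i - \measure_j\|_1$. The measures $\measure_i$ and $\measure_j$ agree outside $\{i,j\}$ and differ by $\pm \tfrac{1}{n-1}$ on that set, so $\wasserstein_1(\measure_i, \measure_j) = \tfrac{1}{n-1}$ for every $\{i,j\} \subseteq e$. Substituting this constant into $\aggregation_\mean$ and $\aggregation_\maxi$ immediately yields $\aggregation(e) = \tfrac{1}{n-1}$, and hence $\curvature(e) = 1 - \tfrac{1}{n-1}$.

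For $\aggregation_\bary$ I would argue via symmetry on the barycenter. The group $\mathrm{Sym}(e) \times \mathrm{Sym}(\vertices \setminus e)$ permutes $\{\measure_i\}_{i \in e}$, and $\nu \mapsto \sum_{i\in e}\wasserstein_1(\measure_i, \nu)$ is convex in $\nu$, so averaging any optimal barycenter over the group action produces an optimal $\bar{\measure}$ that is constant $a$ on $e$ and constant $b$ on $\vertices \setminus e$, subject to $\uniformity a + (n-\uniformity)b = 1$. A short case split on the sign of $b - \tfrac{1}{n-1}$ reduces $\sum_{i\in e}\wasserstein_1(\measure_i, \bar{\measure})$ to a piecewise-linear function of $a$ whose minimum value is $\tfrac{\uniformity-1}{n-1}$, giving $\aggregation_\bary(e) = \tfrac{1}{\uniformity-1}\cdot\tfrac{\uniformity-1}{n-1} = \tfrac{1}{n-1}$ as well. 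The barycenter optimization is the main obstacle, since discrete Wasserstein barycenters have no generic closed form; but the hyperclique's symmetry collapses it to this one-dimensional problem, after which the limit $\lim_{n\to\infty}\curvature(e) = 1$ and the independence from $\uniformity$ follow at once.
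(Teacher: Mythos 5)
Your proof is correct, and for $\aggregation_\mean$ and $\aggregation_\maxi$ it is essentially the paper's argument: invoke \cref{lem:dispersions-general} to collapse the three probability measures into one, compute $\wasserstein_1(\mu_i,\mu_j)=\nicefrac{1}{n-1}$ for every adjacent pair, and let the constancy of the pairwise distances finish the job (the paper routes this last step through \cref{lem:aggregations}, which records that $\aggregation_\mean(e)=\aggregation_\maxi(e)=\wasserstein_1(\mu_i,\mu_j)$ whenever all pairwise Wasserstein distances agree). Two points where you go beyond the paper's proof. First, your regularity and cooccurrence constants $\binom{n-1}{\uniformity-1}$ and $\binom{n-2}{\uniformity-2}$ are the correct ones for $\edges=\binom{\vertices}{\uniformity}$; the paper's proof states ``$(n-1)$-regular and $(\uniformity-2)$-cooccurrent,'' which is only right for $\uniformity=2$ --- harmless for the argument, since \cref{lem:dispersions-general} only needs the hypergraph to be uniform, regular, and cooccurrent for \emph{some} constants, but your version is the accurate one. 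Second, you also treat $\curvature_\bary$, which the paper's proof does not: \cref{lem:aggregations} covers only $\aggregation_\mean$ and $\aggregation_\maxi$, so the paper's theorem is implicitly restricted to those. Your symmetrization-plus-convexity reduction of the barycenter to a one-parameter problem is sound (the group preserves the metric and permutes $\{\mu_i\}_{i\in e}$, and $\nu\mapsto\wasserstein_1(\mu,\nu)$ is convex), and the optimum $a=\nicefrac{(\uniformity-1)}{\uniformity(n-1)}$, $b=\nicefrac{1}{n-1}$ does yield $\sum_{i\in e}\wasserstein_1(\mu_i,\bar\mu)=\nicefrac{(\uniformity-1)}{(n-1)}$ and hence $\aggregation_\bary(e)=\nicefrac{1}{n-1}$, so your extension genuinely strengthens the stated result; the only part you leave as a sketch is the piecewise-linear minimization itself, which is routine but should be written out.
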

\begin{restatable}[Hypergrid curvature]{thm}{hypergrid}
	For an edge $e$ in a $\uniformity$-uniform, 
	$\regularity$-regular hypergrid, with $\smoothing = 0$,
	$\curvature(e) 
		= 0$,
	independent of  $\uniformity$ and $\regularity$.
\end{restatable}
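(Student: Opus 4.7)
The core idea is to exploit the translational symmetry of a hypergrid to show that the Wasserstein distance between random-walk measures at any two nodes of an edge equals their shortest-path distance, immediately forcing curvature to vanish. First, I would invoke the hypergrid definition given in \cref{apx-proofs} and observe that an $\uniformity$-uniform, $\regularity$-regular hypergrid is, by its lattice construction, $1$-cooccurrent. By \cref{lem:dispersions-general}, the three random-walk measures then coincide, so we may work with a single measure $\measure$; with $\smoothing = 0$, $\measure_i$ is uniform on $\neighborhood(i)$, independent of which of $\mu^\enrw, \mu^\eerw, \mu^\werw$ we fixed.

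The key step is to establish $\wasserstein_1(\measure_i, \measure_j) = 1$ for every pair $\{i, j\} \subseteq e$. For the upper bound, I would use a grid-translation automorphism $T$ of $\hypergraph$ with $T(i) = j$; by the hypergrid's regularity $T$ restricts to a bijection $\neighborhood(i) \to \neighborhood(j)$ with $\dist(v, T(v)) \leq 1$ for every $v \in \neighborhood(i)$. The coupling that matches each atom of $\measure_i$ with its $T$-image then has cost at most~$1$. For the lower bound, I would apply Kantorovich--Rubinstein duality with a 1-Lipschitz coordinate function along the grid axis separating $i$ and $j$; its expectation under $\measure_i$ and $\measure_j$ differs by exactly~$1$, yielding $\wasserstein_1(\measure_i, \measure_j) \geq 1$.

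Since $\dist(i, j) = 1$ for every $\{i, j\} \subseteq e$, this pairwise identity gives $\curvature(i, j) = 0$ for each such pair, from which $\curvature_\mean(e) = \curvature_\maxi(e) = 0$ follow directly by plugging into the respective definitions. For $\curvature_\bary(e)$, the symmetry of the hypergrid makes the Wasserstein barycenter $\bar{\measure}$ invariant under the automorphisms of $\hypergraph$ that fix $e$ setwise, so $\wasserstein_1(\measure_i, \bar{\measure})$ is the same for every $i \in e$; the same transport-plus-duality argument applied to $\measure_i$ and $\bar{\measure}$ then yields the common value $(|e|-1)/|e|$, giving $\aggregation_\bary(e) = 1$ and hence $\curvature_\bary(e) = 0$.

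The main obstacle is exhibiting the translation automorphism $T$: once the hypergrid definition in \cref{apx-proofs} is made precise, its lattice symmetry should supply $T$ straightforwardly, but verifying that $T$ simultaneously fixes the combinatorial structure, acts isometrically with respect to $\dist$, and moves neighborhoods by at most one step is the single nontrivial ingredient. Everything downstream---including the pairwise Wasserstein bound, the barycenter symmetry argument, and the three curvature identities---then reduces to standard optimal-transport manipulations.
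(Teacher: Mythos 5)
Your overall architecture---use the hypergrid's translational symmetry to show $\wasserstein_1(\mu_i,\mu_j)=1$ for all adjacent $i,j$, then feed this common value into the aggregation functions---mirrors the paper's proof, which argues via the isomorphism of the local edge structures $S_i\cong S_j$ (\cref{cor:isomorphism}) and then invokes \cref{lem:aggregations}. But two of your key steps have genuine gaps. First, the reduction to a single uniform measure is unjustified: an $\uniformity$-uniform, $\regularity$-regular hypergrid is not $1$-cooccurrent, and not even $\cooccurrence$-cooccurrent for any fixed $\cooccurrence$. Take the one-dimensional lattice with $\uniformity=3$, so that hyperedges are the triples of consecutive nodes; then $i$ and $i+1$ cooccur in two hyperedges while $i$ and $i+2$ cooccur in only one. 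Hence \cref{lem:dispersions-general} does not apply, $\mu^\eerw$ and $\mu^\werw$ are not uniform on $\neighborhood(i)$, and your argument as written covers only $\mu^\enrw$. The paper never collapses the three measures; instead it uses the fact that the isomorphism $\varphi$ between the inclusive neighborhoods pushes $\mu_i$ forward to $\mu_j$ for \emph{every} choice of $\measure$, so a single transport argument applies uniformly.

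Second, your Kantorovich--Rubinstein lower bound does not deliver $\wasserstein_1(\mu_i,\mu_j)\geq 1$. The lattice coordinate is $1$-Lipschitz for the \emph{lattice} metric but not for the hypergraph metric $\dist$, under which a single hyperedge step changes the coordinate by up to $\uniformity-1$; after the required rescaling your test function certifies only $\wasserstein_1(\mu_i,\mu_j)\geq\nicefrac{1}{(\uniformity-1)}$ when $i$ and $j$ are lattice-adjacent. This is not cosmetic: in the one-dimensional example above with $\mu^\enrw$ and $\smoothing=0$, the coupling that leaves the overlap $\neighborhood(i)\cap\neighborhood(i+1)$ in place and transports only the symmetric difference (each piece across a single hyperedge) has cost $\nicefrac{1}{2}$. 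So the crux of the theorem is precisely the claim that no coupling beats the translation, which your translation matching (an upper bound) and your duality bound (too weak) together do not establish. The paper handles this point by asserting that the minimum-cost isomorphism between the inclusive neighborhoods ``corresponds to'' the optimal coupling---a symmetry appeal your proposal neither reproduces nor replaces with a valid dual certificate. Finally, your treatment of $\curvature_\bary$ is asserted rather than proved; note that \cref{lem:aggregations}, on which the paper's conclusion rests, covers only $\aggregation_\mean$ and $\aggregation_\maxi$.
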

\begin{restatable}[Hypertree curvature]{thm}{hypertree}
	For an edge $e$ in a $\uniformity$-uniform, 
	$\regularity$-regular, $1$-intersecting hypertree, 
	\begin{align*}
		\text{with $\smoothing = 0$},~\curvature(e) 
		= 1 - \bigg(\frac{3(\regularity-1)}{\regularity}+\frac{1}{(\uniformity-1)\regularity}\bigg),
		~\text{i.e.},~
		\lim_{\regularity\rightarrow\infty}\curvature(e) = -2,\text{~independent of  $\uniformity$.}
	\end{align*}
\end{restatable}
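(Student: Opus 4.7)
The plan is to reduce the claim to one Wasserstein computation between $\mu_i$ and $\mu_j$ for an adjacent pair $\{i,j\} \subseteq e$ and then solve that transport problem by exploiting the tree structure, certifying optimality via a Kantorovich--Rubinstein dual witness.

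First I would consolidate the random-walk measures and aggregations. In a $1$-intersecting hypertree any two distinct edges meet in at most one node (otherwise they would form a Berge cycle), so two adjacent nodes cooccur in exactly one edge and the hypertree is $1$-cooccurrent. Together with $\uniformity$-uniformity and $\regularity$-regularity, \cref{lem:dispersions-general} then gives $\mu_i^{\enrw} = \mu_i^{\eerw} = \mu_i^{\werw}$, the uniform distribution on $\neighborhood(i)$ of mass $p \defeq 1/[\regularity(\uniformity-1)]$ per neighbor, with $|\neighborhood(i)| = \regularity(\uniformity-1)$. Since the hypertree acts transitively on pairs $\{i,j\} \subseteq e$, the value $\wasserstein_1(\mu_i, \mu_j)$ is independent of the chosen pair, so $\aggregation_\mean(e)$ and $\aggregation_\maxi(e)$ both collapse to this common quantity; for $\aggregation_\bary$, the same symmetry lets us restrict the Wasserstein barycenter to a symmetric probability measure supported in a neighborhood of $e$, whose optimal structure again reduces to the pairwise computation.

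Next I would analyze the neighborhoods and distances near $e$. The $1$-intersecting, acyclic hypertree forces $\neighborhood(i) \cap \neighborhood(j) = e \setminus \{i,j\}$ of size $\uniformity-2$, while the $(\regularity-1)(\uniformity-1)$ remaining \emph{far} neighbors of $i$ live in the $\regularity-1$ edges at $i$ other than $e$ and are disjoint from the analogous far neighbors of $j$. Unique shortest paths yield $\dist(j,i) = 1$, $\dist(u,i) = 1$ for each far neighbor $u$ of $i$, $\dist(j,v) = 1$ for each far neighbor $v$ of $j$, and $\dist(u,v) = 3$ (the unique $u$-to-$v$ path traverses three distinct edges through $i$ and $j$). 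Nodes in $e \setminus \{i,j\}$ carry mass $p$ under both $\mu_i$ and $\mu_j$ and are self-matched at cost $0$, leaving a bipartite transport problem between sources $\{j\} \cup \{\text{far neighbors of } i\}$ and sinks $\{i\} \cup \{\text{far neighbors of } j\}$. I would solve this residual problem by routing (i)~mass $p$ from $j$ into a far neighbor of $j$, (ii)~mass $p$ from a far neighbor of $i$ into $i$, each at unit cost, and (iii)~the remaining $[(\regularity-1)(\uniformity-1) - 1]\,p$ mass from far neighbors of $i$ to far neighbors of $j$ at cost~$3$ per unit. Substituting the resulting $\wasserstein_1(\mu_i, \mu_j)$ into $\curvature(e) = 1 - \wasserstein_1(\mu_i, \mu_j)/\dist(i,j)$ with $\dist(i,j) = 1$ and simplifying produces the claimed closed form, and the $\regularity \to \infty$ limit is then immediate.

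The main obstacle is certifying optimality of this coupling: natural greedy plans such as routing $j$'s mass directly to $i$ look competitive but are strictly suboptimal because they waste a cheap short hop at the boundary of~$e$. I would therefore close the argument by exhibiting an explicit $1$-Lipschitz Kantorovich--Rubinstein potential that attains the same value, for example $f(i) = 0$, $f(j) = -1$, $f \equiv 1$ on the far neighbors of $i$, $f \equiv -2$ on the far neighbors of $j$, and $f \equiv 0$ on $e \setminus \{i,j\}$. All Lipschitz constraints then reduce to the distances tabulated above, closing the duality gap and pinning down $\wasserstein_1(\mu_i, \mu_j)$ exactly.
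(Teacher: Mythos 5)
Your reduction via \cref{lem:dispersions-general} and the aggregation lemma, and your description of the residual transport problem, match the paper's proof exactly; but your transport plan is \emph{not} the paper's plan, and the two do not give the same number. The paper matches the mass $\mu_i(j)$ with $\mu_j(i)$ at cost $1$ and then ships \emph{all} of the remaining far mass $\nicefrac{(\regularity-1)}{\regularity}$ at cost $3$, obtaining $\wasserstein_1(\mu_i,\mu_j)=\frac{3(\regularity-1)}{\regularity}+\frac{1}{(\uniformity-1)\regularity}$; it never certifies optimality of that coupling. Your plan instead spends the two cheap unit-cost hops ($j$ to a far neighbor of $j$, and a far neighbor of $i$ to $i$) and ships only $\frac{\regularity-1}{\regularity}-\frac{1}{(\uniformity-1)\regularity}$ at cost $3$, for a total of $\frac{3(\regularity-1)}{\regularity}-\frac{1}{(\uniformity-1)\regularity}$, which is exactly $\frac{2}{(\uniformity-1)\regularity}$ cheaper. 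Your Kantorovich potential is a valid $1$-Lipschitz witness attaining the same value (the checks also need $\dist(j,u)=\dist(i,v)=2$ for far neighbors $u$ of $i$ and $v$ of $j$, which holds in a $1$-intersecting hypertree and should be added to your table), so for $\regularity\ge 2$, $\uniformity\ge 2$ your coupling is genuinely optimal and the paper's is not.

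The consequence is that your closing claim of the second paragraph---that substituting your $\wasserstein_1$ ``produces the claimed closed form''---is false as written: you get $\curvature(e)=1-\bigl(\frac{3(\regularity-1)}{\regularity}-\frac{1}{(\uniformity-1)\regularity}\bigr)$, with the opposite sign on the $\frac{1}{(\uniformity-1)\regularity}$ term. The discrepancy is not in your transport analysis but in the theorem and its proof in the paper: specializing to $\uniformity=2$, your value gives $\curvature(e)=-2(1-\nicefrac{2}{\regularity})$, the classical Ollivier curvature of an edge of a $\regularity$-regular tree with the non-lazy walk (in particular $0$ for the $2$-regular bi-infinite path), whereas the stated formula gives $-2+\nicefrac{2}{\regularity}$ (i.e., $-1$ for the path), which cannot be correct. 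So your argument is the right one---and strictly stronger than the paper's, since it closes the duality gap rather than evaluating a single coupling---but you should state that it \emph{corrects} the published formula rather than reproduces it. (For $\regularity=1$ there are no far neighbors, your plan is infeasible, and the stated $\wasserstein_1=\frac{1}{\uniformity-1}$ is right; and the limit $\lim_{\regularity\to\infty}\curvature(e)=-2$ is unaffected either way.)
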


\newcommand{\apsa}{aps-a\xspace}
\newcommand{\apsva}{aps-av\xspace}
\newcommand{\apsvcout}{aps-cv\xspace}
\newcommand{\iclr}{iclr\xspace}
\newcommand{\dblp}{dblp\xspace}
\newcommand{\dblpv}{dblp-v\xspace}
\newcommand{\ndcai}{ndc-ai\xspace}
\newcommand{\ndcpc}{ndc-pc\xspace}
\newcommand{\stex}{stex\xspace}
\newcommand{\sha}{sha\xspace}
\newcommand{\mus}{mus\xspace}
\newcommand{\syn}{syn\xspace}
\newcommand{\sync}{syn-c\xspace}
\newcommand{\synr}{syn-r\xspace}
\newcommand{\syns}{syn-s\xspace}

\section{Experiments}
\label{experiments}

Having established in~\cref{theory} that \ourmethod curvatures have our desired theoretical properties, 
and finding that they strictly generalize both \orc on graphs and existing definitions of hypergraph \orc, 
we now seek to ascertain that they are also meaningful in practice.
We ask the following questions:
\begin{enumerate}[label=\textbf{Q\arabic*},  leftmargin=\widthof{III.}+\labelsep]
	\item \textbf{Parametrization.} 
	How do our choices of $\smoothing$, $\mu$, and $\aggregation$ impact \ourmethod curvatures?
	\item \textbf{Hypergraph exploration.} 
	How can \ourmethod curvatures help us in exploring hypergraphs?
	\item \textbf{Hypergraph learning.}
	How can \ourmethod curvatures help us in hypergraph learning tasks?
\end{enumerate}

To address these questions, we experiment with data from different domains, spanning several orders of magnitude.
We investigate four \emph{individual real-world hypergraphs} in which edges represent co-authorship (\apsa, \dblp) and FDA-registered drugs (\ndcai, \ndcpc), 
six \emph{collections of real-world hypergraphs}
in which edges represent questions on Stack Exchange Sites (\stex), 
co-authorship by venues (\apsva, \dblpv),
co-citation by venues (\apsvcout),
chords in music pieces (\mus), 
and character cooccurrence on stage in Shakespeare's plays (\sha), 
as well as three \emph{collections of synthetic hypergraphs} based on different generative models (\sync, \synr, \syns),
for a total of 4\,321 hypergraphs.
We summarize their basic properties in \cref{tab:data}, 
and give more details on their statistics, semantics, and provenance in \cref{apx-datasets}. 
We implement \ourmethod in Julia and Python.
Our experiments are run on AMD EPYC 7702 CPUs with up to 256 cores. 
We discuss our implementation and results in more detail in \cref{apx-implementation,apx-results}, 
and make all our code, data, and results publicly available.\!\footnote{\oururl
} 

\begin{table}[t]
	\centering
	\caption{Hypergraphs used in \ourmethod experiments cover several domains and orders of magnitude. 
	$n$ and $m$ are node and edge counts, $\nicefrac{n}{m}$ is the aspect ratio, $c$ is the number of filled cells in the node-to-edge incidence matrix, $\nicefrac{c}{nm}$ is the density,  
	and $N$ is the number of hypergraphs in a collection.
}\label{tab:data}
	\begin{subfigure}{\linewidth}
		\centering
		\setlength{\tabcolsep}{4.25pt}
		\subcaption{Individual Hypergraphs}
		\begin{tabular}{rllrrrrr}
\toprule
&Nodes & Edges & $n$ & $m$ & $\nicefrac{n}{m}$ & $c$ & $\nicefrac{c}{nm}$ \\
\midrule
aps-a & Authors & APS Papers & 505\,827 & 688\,707 & 0.7345 & 2\,480\,373 & 0.000007 \\
dblp & Authors & DBLP Papers & 3\,108\,658 & 6\,011\,388 & 0.5171 & 19\,411\,479 & 0.000001 \\
ndc-ai & Active Ingr. & NDC Drugs & 7\,090 & 131\,450 & 0.0539 & 224\,084 & 0.000240 \\
ndc-pc & Pharm. Classes & NDC Drugs & 1\,263 & 70\,101 & 0.0180 & 273\,088 & 0.003084 \\
\bottomrule
\end{tabular}
 \vspace*{6pt}
	\end{subfigure}
	\begin{subfigure}{\linewidth}
		\setlength{\tabcolsep}{5.45pt}
		\centering
		\subcaption{Hypergraph Collections}\label{tab:hgcolldata}
		\begin{tabular}{rlllrrr}
\toprule
&Nodes & Edges & Graphs & $N$ & $(\nicefrac{n}{m})_{\max}$ & $(\nicefrac{c}{nm})_{\max}$ \\
\midrule
\apsva & Authors & APS Papers & Journals & 19 & 4.698182 & 0.005216 \\
\apsvcout & APS Cited P. & APS Citing P. & Journals & 19 & 1.396552 & 0.028430 \\
dblp-v & Authors & DBLP Papers & (Groups of) Venues & 1\,193 & 5.599424 & 0.002443 \\
mus & Frequencies & Chords & Music Pieces & 1\,944 & 1.454545 & 0.375000 \\
stex & Tags & Questions & StackExchange Sites & 355 & 1.233449 & 0.121528 \\
sha & Characters & Stage Groups & Shakespeare's Plays & 37 & 0.554054 & 0.304688 \\\midrule
syn-c&\multicolumn{3}{l}{Hypergraph Configuration Models}&250&0.5&0.005\\
syn-r&\multicolumn{3}{l}{Erd\H{o}s-R\'enyi Random Hypergraph Models}&250&0.5&0.005\\
syn-s&\multicolumn{3}{l}{Hypergaph Stochastic Block Models}&250&0.5&0.005\\
\bottomrule
\end{tabular}
 	\end{subfigure}
\end{table}

\paragraph{Q1 Parametrization.}
To understand how our choices of $\smoothing$, $\mu$, and $\aggregation$ impact \ourmethod curvatures, 
we first compute the pairwise mutual information between \ourmethod edge curvatures with 36~different parametrizations. 
As illustrated in \cref{fig:nmiheatmaps}, 
while changing $\smoothing$ for the same combination of $\measure$ and $\aggregation$ has similar effects across hypergraphs, 
there is no uniform pattern in the relationships between different combinations of $\measure$ and $\aggregation$. 
This underscores the fact that the various notions of \ourmethod curvature are not redundant but rather emphasize distinct aspects of hypergraph structure. 
For a fine-grained view of the differences between parametrizations, 
we inspect the distributions of our four curvature types, 
\begin{inparaenum}[(i)]
	\item edge curvature $\curvature(e)$, 
	\item edge-averaged node curvature $\curvature^{\edges}(i)$,
	\item directional curvature $\curvature(i,j)$ for all $\{i,j\}\subseteq e \in\edges$, and
	\item direction-averaged node curvature $\curvature^{\neighborhood}(i)$,
\end{inparaenum}
for each of our 36~parametrizations. 
By construction, 
directional curvature and direction-averaged node curvature do not vary with the choice of \aggregation, 
and $\curvature_\maxi$ lower-bounds $\curvature_\mean$ for edge curvatures and edge-averaged node curvatures.
However, the differences between $\curvature_\maxi$ and $\curvature_\mean$ vary across graphs,
while consistently, 
the larger $\smoothing$, the more concentrated our curvature distributions~(\cref{apx-results}).

\begin{figure}[t]
	\centering
	\vspace*{-1em}\hspace*{-0.5em}\begin{subfigure}[t]{0.25\linewidth}
		\centering
		\hspace*{-0.75em}\includegraphics[height=3.8cm]{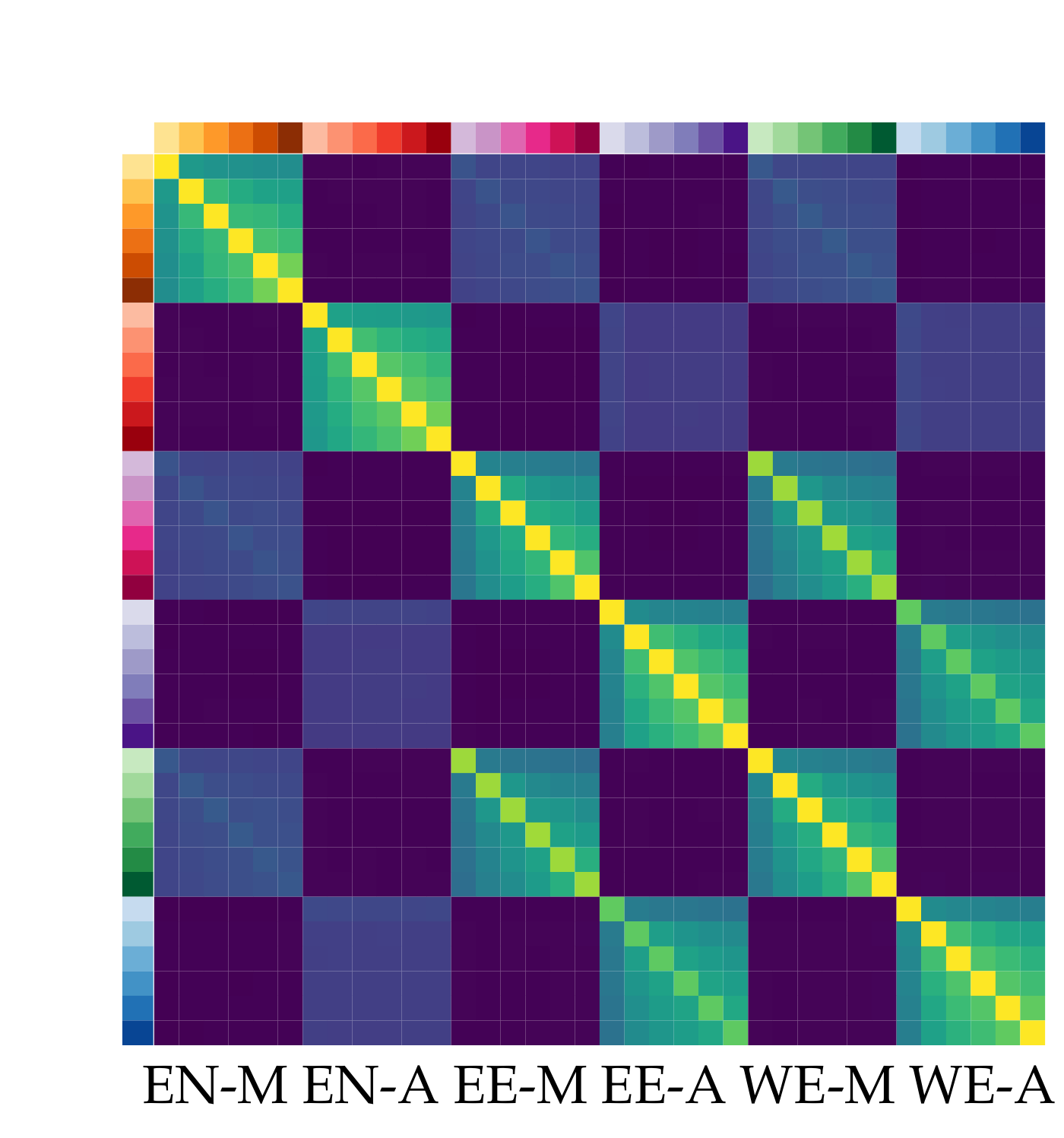}
		\vspace*{-0.25em}\subcaption{(5,10)-regular \sync}
	\end{subfigure}~
	\hspace*{-0.75em}\begin{subfigure}[t]{0.25\linewidth}
		\centering
		\hspace*{-1em}\includegraphics[height=3.8cm]{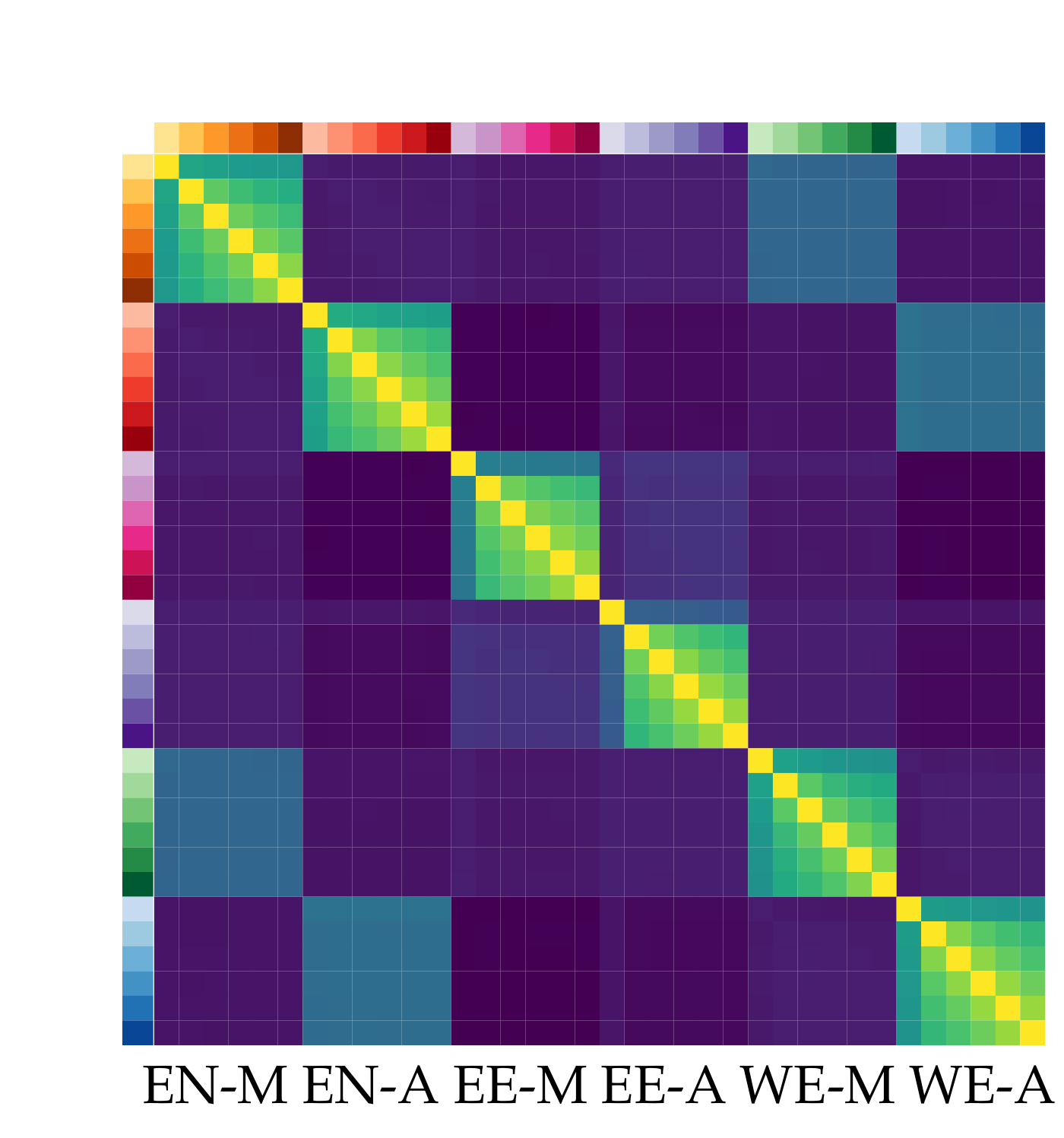}
		\vspace*{-0.25em}\subcaption{2-community \syns}
	\end{subfigure}~
	\hspace*{-0.75em}\begin{subfigure}[t]{0.25\linewidth}
		\centering
		\hspace*{-1em}\includegraphics[height=3.8cm]{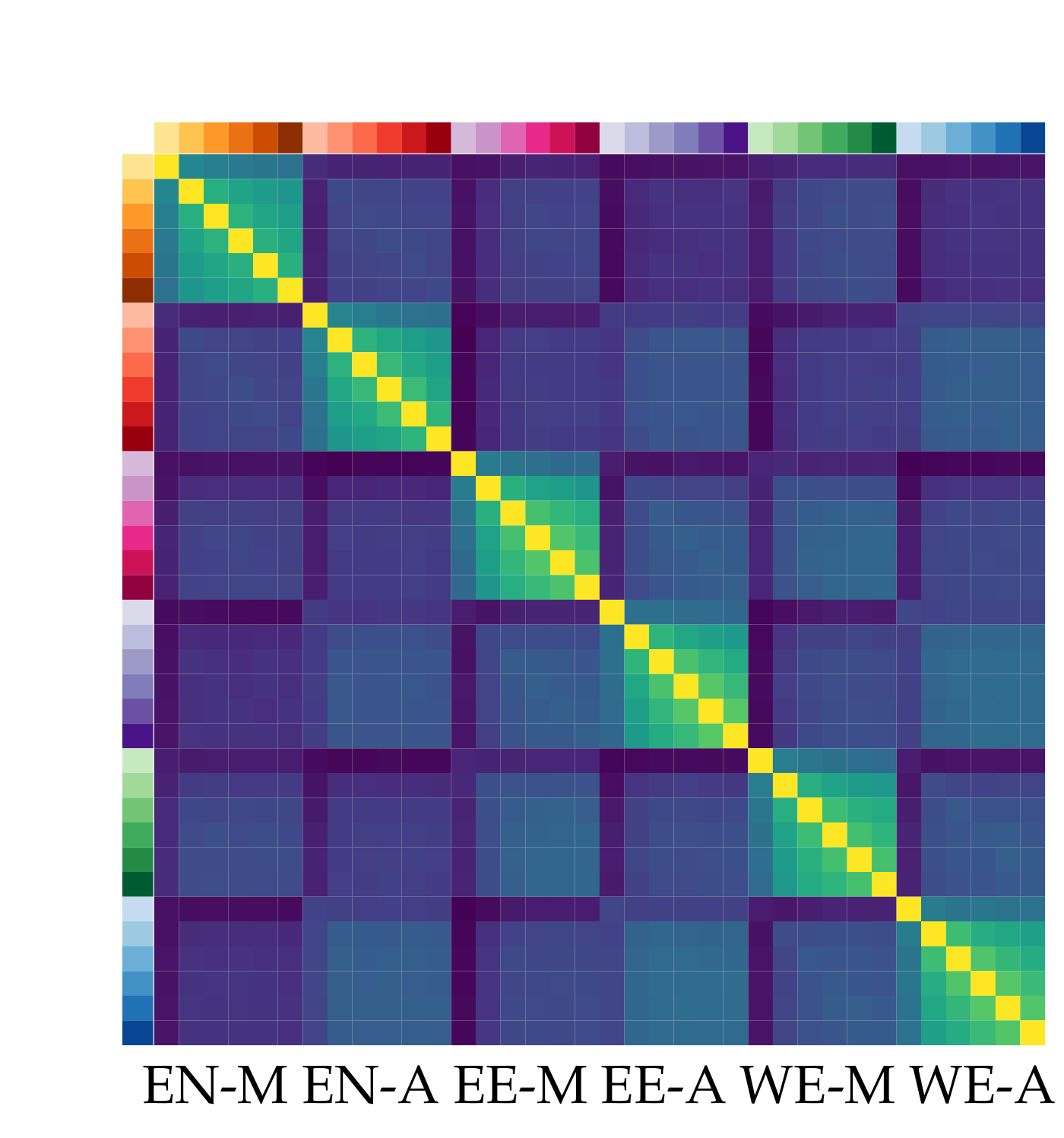}
		\vspace*{-0.25em}\subcaption{\ndcai}
	\end{subfigure}~
	\hspace*{-0.75em}\begin{subfigure}[t]{0.25\linewidth}
		\centering
		\hspace*{-1em}\includegraphics[height=3.8cm]{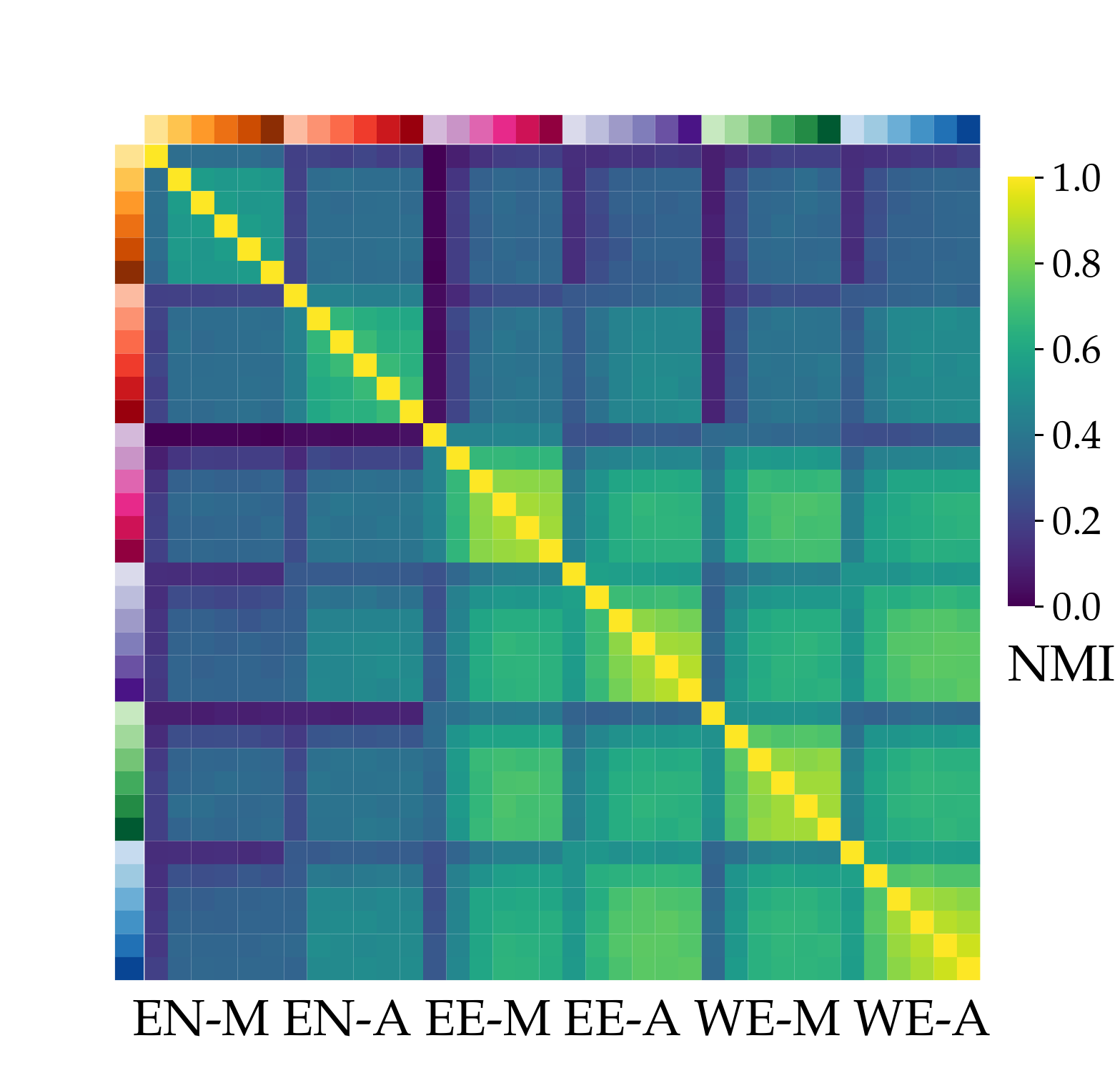}
		\vspace*{-1.35em}\subcaption{\ndcpc}
	\end{subfigure}~
	\caption{\ourmethod curvature notions are non-redundant. 
		We show the Min-Max-Normalized Mutual Information (NMI) between \ourmethod edge curvatures with  36~different parametrizations, 
		using probability measures $\mu^\enrw$ (EN), $\mu^\eerw$ (EE), or $\mu^\werw$ (WE), aggregations $\aggregation_\maxi$ (M) or $\aggregation_\mean$ (A), 
		and $\smoothing \in \{0.0,0.1,0.2,0.3,0.4,0.5\}$ (ordered $\rightarrow$, $\downarrow$), 
		for two synthetic and two real-world hypergraphs. 
	}\label{fig:nmiheatmaps}
\end{figure}

\paragraph{Q2 Hypergraph Exploration.}
To explore \emph{individual graphs}, 
we perform case studies on graphs from the \apsvcout collection, 
leveraging that most nodes in these graphs also occur as edges. 
We scrutinize the relationships between node and edge curvatures, other local node and edge statistics, and article metadata. 
We observe that curvature values span a considerable range even for articles with otherwise comparable statistics, 
but the curvature distributions of influential papers appear to differ systematically from those of less influential papers~(\cref{apx-results}).
Exploring \emph{graph collections}, 
we run kernel PCA (kPCA) \citep{schoelkopf1997kpca} with a radial basis function kernel (RBF kernel) and curvatures or other local features known to be powerful baselines  \citep{cai18effective},
e.g., node degrees and neighborhood sizes, 
as inputs to jointly embed graphs from a collection.
We statistically bootstrap the maximum mean discrepancy (MMD) \citep{gretton2006kernel} to test the null hypothesis that the feature distributions of two graphs are equal.
As shown in \cref{fig:kpca}, 
\ourmethod curvatures result in more interpretable embeddings and more discriminative tests than other local features.

\begin{figure}[t]
	\centering
	\vspace*{-1em}\begin{subfigure}[t]{0.66\linewidth} 
		\centering
		\includegraphics[height=3.8cm]{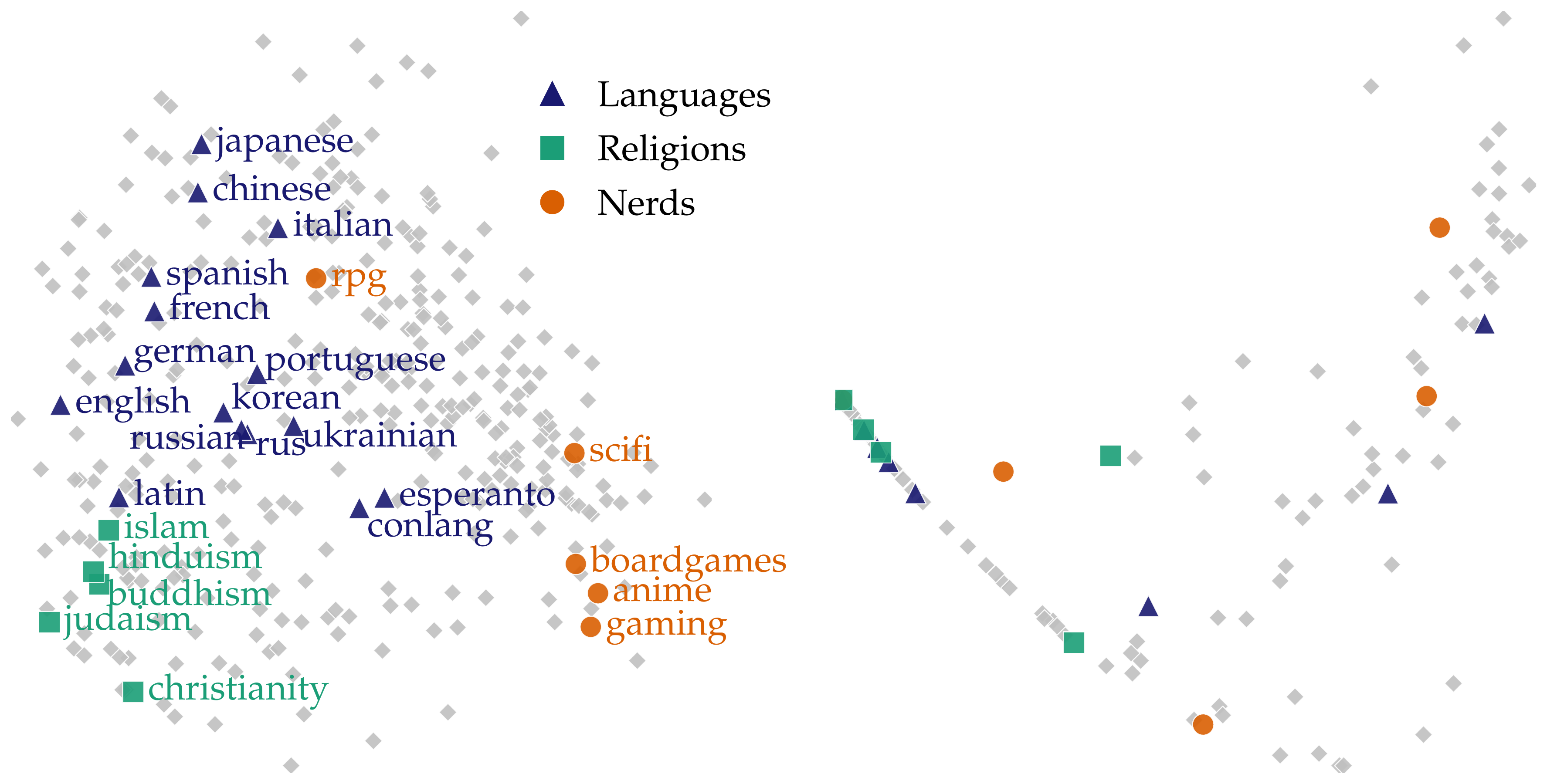}
		\vspace*{-2em}
	\end{subfigure}~
	\begin{subfigure}[t]{0.33\linewidth}
		\centering
		\includegraphics[height=3.8cm]{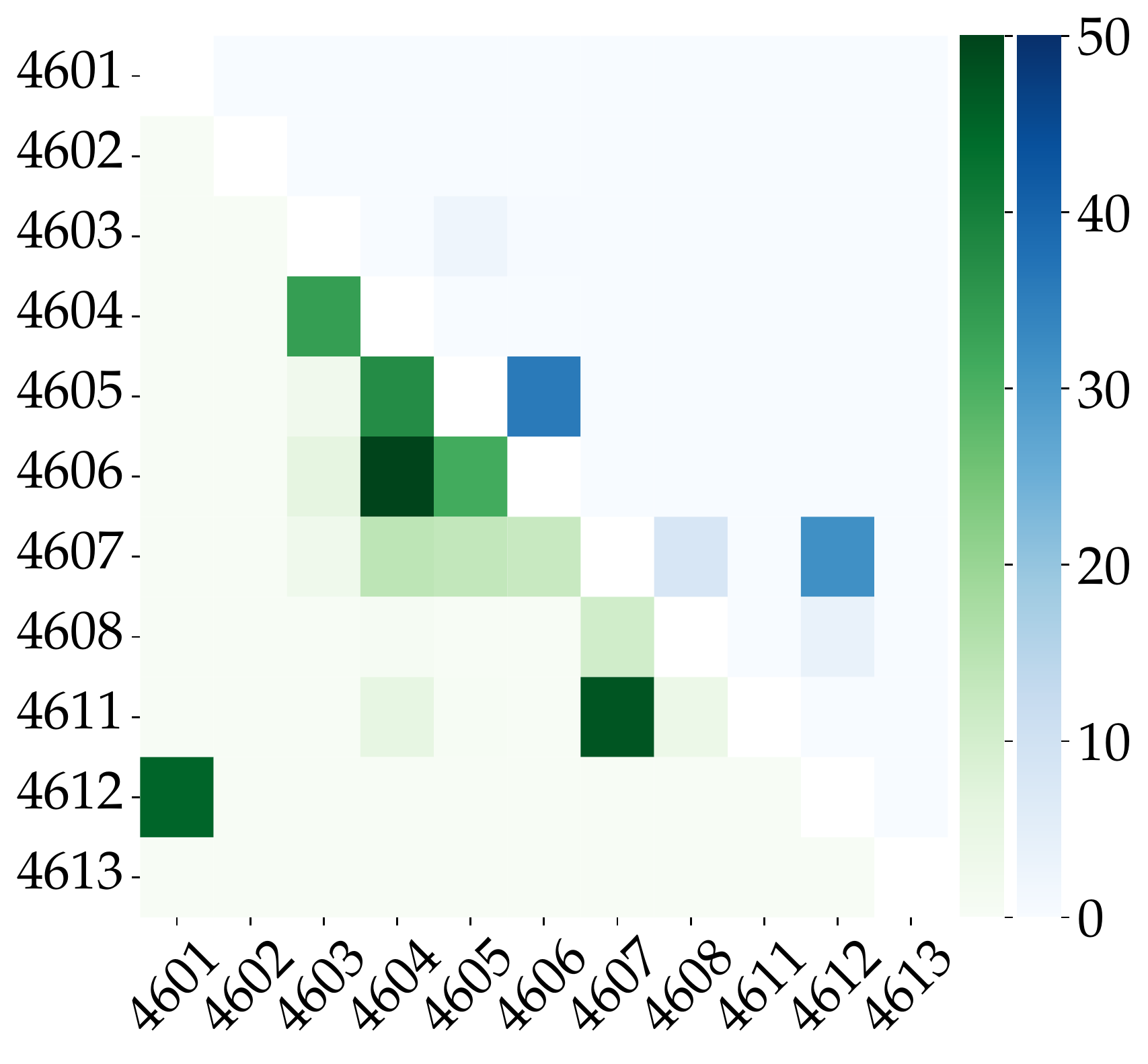}
	\end{subfigure}
	\begin{subfigure}{0.32\linewidth}
		\centering
		\subcaption{kPCA (directional curvature)}\label{fig:kpca-curvy}
	\end{subfigure}~
	\begin{subfigure}{0.32\linewidth}
		\centering
		\subcaption{kPCA\thinspace (edge\thinspace neighborhood\thinspace size)}\label{fig:kpca-noncurvy}
	\end{subfigure}~
	\begin{subfigure}{0.34\linewidth}
		\centering
		\subcaption{MMD\thinspace (cardinality\thinspace  vs.\thinspace curvature)}\label{fig:mmd}
	\end{subfigure}
	\caption{Curvatures carry more information than other local features.
		We show a 2-dimensional embedding of graphs from the \stex collection based on kPCA, 
		using an RBF kernel with curvature distributions computed using $\smoothing=0.1$, $\mu^\werw$, and $\aggregation_\mean$ (\ref{fig:kpca-curvy}) or edge neighborhood size distributions (\ref{fig:kpca-noncurvy}) as input features. 
		We see that only curvatures yield a meaningful and discriminative grouping.
		Corroborating this finding, we also depict Bonferroni-adjusted p-values of testing for significant differences in feature distributions---i.e., p-values multiplied by the number $h$ of hypothesis tests, as \citet{bonferroni1936teoria} correction requires $p \leq \nicefrac{\alpha}{h}$ for some desired Type I-error rate $\alpha$---using MMD on distributions of edge curvatures computed with the same parameters as for (\ref{fig:kpca-curvy}) (upper triangle) or edge cardinality (lower triangle), 
		for the subset of the \dblpv collection corresponding to top conferences grouped by areas of research~(\ref{fig:mmd}).}\label{fig:kpca}
\end{figure}

\begin{table}[t]
	\centering
	\setlength{\tabcolsep}{4.6pt}
	\caption{\ourmethod curvatures lead to better clusterings than other local features.
		We show $\operatorname{WCC}_{\curvature(i,j)}$ for collection clusterings computed using RBF or exp.~Wasserstein kernels with edge curvatures, edge neighborhood sizes, edge-averaged node curvatures, or node neighborhood sizes as inputs.}\label{tab:wcc}
\begin{tabular}{rrrrrrrrr}
	\toprule
	{} &
	$\rbf_{\curvature(e)}$ & $\wasserstein_{\curvature(e)}$ &
	$\rbf_{|\neighborhood(e)|}$ & $\wasserstein_{|\neighborhood(e)|}$ & $\rbf_{\curvature^{E}(i)}$ & $\wasserstein_{\curvature^{E}(i)}$ & $\rbf_{|\neighborhood(i)|}$ & $\wasserstein_{|\neighborhood(i)|}$ \\
	\midrule
\dblpv &                   0.2151 &                           0.1908 &                           0.3309 &                                   0.2358 &                            0.2273 &                         \bfseries          0.0445 &                        0.0910 &                                      0.1285 \\
\mus  &                   0.1955 &                           0.1758 &                           0.2609 &                                   0.2723 &                         0.2062 &   \bfseries                               0.1606 &                           0.2774 &                                   0.2458 \\
\stex  &                   0.2651 &                           0.2877 &                           0.3018 &                                    0.2950 &              \bfseries           0.2393 &                                 0.2577 &                           0.3067 &                                   0.2689 \\
\sha  &                   0.5984 &                            0.6390 &                              0.6716 &                                      0.6597 &           \bfseries              0.5021 &                                 0.6526 &                           0.6236 &                                   0.6641 \\
\bottomrule
\end{tabular}
\end{table}

\paragraph{Q3 Hypergraph Learning.}
To explore~\thinspace the utility~\thinspace of curvatures for~\thinspace  
learning~\thinspace  on~\thinspace \emph{individual hyper-graphs},
we perform spectral clustering using either curvatures or other local node features.
To evaluate the resulting node clusterings,
we leverage that \emph{nodes} in the \apsvcout collection correspond to APS papers, 
for which we consistently know the titles. 
Hence, even in the absence of a meaningful ground truth, 
we can still check the sensibility of a clustering by statistically analyzing the titles grouped together using tools from natural language processing.
We find that node clusterings based on curvatures correspond to thematically more coherent groupings (\cref{apx-results}).
For learning on \emph{hypergraph collections}, 
we spectrally cluster the collection using RBF or exponential Wasserstein kernel matrices, $\exp (-\gamma \wasserstein(\mu_x, \mu_y))$, on node and edge curvatures or other local features~\citep{plaen2020wek}.
Lacking ground-truth labels,
we evaluate the clustering quality in an \emph{unsupervised} manner,
using what we call the \emph{Wasserstein Clustering Coefficient} (WCC).
This measure compares averaged \emph{intra}-cluster Wasserstein distances
to averaged \emph{inter}-cluster Wasserstein distances, 
such that a \emph{lower} WCC corresponds to a higher-quality clustering. 
Given $c$ clusters $\mathcal{X} = \{X_1,\dots,X_c\}$ of hypergraphs $\hypergraph$ represented by their feature distributions $\vec{\chi}_{\hypergraph}$, 
we define
\begin{align*}
	\operatorname{WCC}(\mathcal{X}) \defeq \frac{\sum_{X \in \mathcal{X}} \omega(X)}{ 1 + \sum_{X\neq Y \in \mathcal{X}} \omega(X, Y)}\;,~\text{with}\begin{cases}
		\omega(X)\defeq\binom{|X|}{2}^{-1} \sum_{x \neq y \in X} \wasserstein(\vec{\chi}_x, \vec{\chi}_y)\;,\\
		\omega(X, Y)\defeq \left(|X||Y|\right)^{-1} \sum_{x, y \in X\times Y} \wasserstein(\vec{\chi}_x, \vec{\chi}_y)\;.
	\end{cases}
\end{align*} 
As illustrated in \cref{tab:wcc}, when evaluated using WCC with directional curvature distributions as $\vec{\chi}$, 
i.e., $\operatorname{WCC}_{\curvature(i,j)}$,
\ourmethod curvatures consistently yield better clusterings than other local features.

\section{Discussion and Conclusion}\label{sec:Discussion and Conclusion}

We introduced \ourmethod, the  first unified framework for \ollivier on
hypergraphs that integrates and generalizes existing approaches to hypergraph
\orc. \ourmethod disentangles the common building blocks of 
all notions of hypergraph \orc, yielding curvature notions that are provably
aligned with our geometric intuition. We performed a rigorous theoretical and
empirical analysis of \ourmethod curvatures, demonstrating their practical utility and
scalability through extensive experiments, covering both
\emph{hypergraph exploration} and \emph{hypergraph learning}. While our
work paves the way toward future work seeking to leverage the power of
\ollivier for hypergraphs in hypergraph learning algorithms, it still has some
limitations to be addressed.
First, \orc on graphs is defined for \emph{any} probability
measure, but we only consider measures corresponding to a single step of
a random walk.
Future work could thus harness higher-order random walks or alternative
probability measures, and consider analyzing relationships between such
probability measures and other structural hypergraph properties.
Second, hyperedge intersections can vary in cardinality, but this variation is
not currently reflected in our probability measures. 
One could thus integrate \ourmethod with the $s$-walk framework proposed by
\citet{aksoy2020hypernetwork}, or define persistent \ourmethod curvatures
based on hypergraph filtrations, extending work on persistent \orc for graphs
\citep{wee2021ollivier}. 
Third, like the original \orc, \ourmethod curvatures are static, but many
hypergraphs are inherently dynamic, suggesting a need to develop dynamic curvature
notions.
Fourth, despite its comprehensive scope, our study only
scratches the surface regarding the theoretical and empirical analysis of
\ourmethod curvatures, and we believe that there are many more connections
between \ourmethod curvatures and other hypergraph descriptors to be uncovered,
and many additional use cases to be explored. For instance, \ourmethod generalizes
\orc, but not Forman--Ricci curvature~(\frc), and we believe that a
framework for \frc could help uncover new relations between
combinatorial curvature notions and hypergraph structure.
Finally, we imagine that incorporating hypergraph curvature into models as an additional inductive bias could prove useful in hypergraph learning more broadly.
 
  \section*{Ethics Statement}

  Our main contribution is \ourmethod, 
  a unified mathematical framework yielding theoretically sound hypergraph descriptors that are also practically useful for hypergraph exploration and hypergraph learning. 
  As such, \ourmethod comes with the caveats applicable to hypergraph exploration and hypergraph learning methods more generally.
  Most importantly, it should be used with caution on data related to people, and its results should not be decontextualized. 
  We adhered to these principles in our experiments, and selected our datasets accordingly.

  \section*{Reproducibility Statement}

  To facilitate reproducibility, 
  we provide more details on our data, implementation, and results in more detail in \cref{apx-datasets,apx-implementation,apx-results}, 
  and make all our code, data, and results publicly available at \oururl.

\nocite{Gibbs02a,bakry1985diffusions,liu2019distance, munch2020spectrally,kempton2020large,forman2003bochner,topping2022oversquashing,roy2020forman,ollivier2007ricci, ollivier2009ricci,jost2014ollivier,weber2017networks,bauer2017curvature,samal2018comparative,bourne2018ollivier, lin2011ricci,wee2021ollivier, wee2021forman,amburg2020clustering,veldt2020parameterized,zhou2006learning,wachman2007learning,huang2021unignn,gao2019wasserstein,bai2014hypergraph, bloch2013mathematical, martino2020hyper,coupette2022world,cucuringu19sponge,leal2019curvature,leal2020ricci,leal2021forman,saucan2018forman,eidi2020ollivier,asoodeh2018curvature,banerjee2021spectrum,akamatsu2022new,ikeda2021ricci,lin2011ricci,yadav2022poset,murgas2022beyond}
  \bibliography{bibliography,curvature}
  \bibliographystyle{iclr2023_conference}

  \clearpage

  \appendix
  \section{Appendix}

In this Appendix, we include the following materials.

\textbf{\ref{apx-proofs}~~Deferred Proofs.} \newline
All proofs for \cref{theory}, along with supporting definitions, lemmas and corollaries.

\textbf{\ref{apx-related}~~Related Work.}\newline
Discussion of related work treating hypergraph curvatures, graph curvatures, or hypergraph analysis.

\textbf{\ref{apx-datasets}~~Dataset Details.} \newline 
Further information on the provenance, semantics, and statistics of our datasets.

\textbf{\ref{apx-implementation}~~Implementation Details.}\newline
Details on our implementation, including proofs showing the correctness of performance shortcuts.

\textbf{\ref{apx-results}~~Further Results.}\newline
Display and discussion of results not included in the main paper.

\subsection{Deferred Proofs}
\label{apx-proofs}

\measureequality*
\begin{proof}
	For notational simplicity, \Wlog, we assume that $\smoothing = 0$.
	In an $\uniformity$-uniform, $\regularity$-regular, $\cooccurrence$-cooccurrent hypergraph $\hypergraph=(\vertices, \edges)$, each node $i$ has degree $\regularity$ and  $\frac{(\uniformity-1)\regularity}{\cooccurrence}$ neighbors, and each edge has cardinality $\uniformity$. Hence, for nodes $i,j\in\vertices$ with $i\adjacent j$, 
	\begin{align*}
		\mu_i^\enrw(j) = 
		\frac{1}{|\neighborhood(i)|} 
		= \frac{\cooccurrence}{(\uniformity-1)\regularity} 
		= \frac{1}{\regularity}\cdot\cooccurrence\cdot\frac{1}{\uniformity - 1}
		=  \frac{1}{\degree(i)}\underset{e\ni i,j}{\sum}\frac{1}{|e|-1} = \mu_i^\eerw(j)\;\phantom{,}\\
		= \frac{\cooccurrence}{\regularity(\uniformity - 1)}
		=  \frac{|\{e\in E\mid \{i,j\}\subseteq e\}|}{\underset{f\ni i}{\sum}\big(|f|-1\big)}
		= \mu_i^\werw(j)\;.
	\end{align*}
	Graphs are $2$-uniform and $1$-cooccurrent (but not generally regular), and hence, $|\neighborhood(i)| = \degree(i)$. 
	Using this to simplify the probability measure expressions, the claim follows.
\end{proof}

\aggregationequality*
\begin{proof}
	Given probability distributions $\mu_1, \mu_2, \dots, \mu_n$, their
	Wasserstein barycenter is defined as the distribution~$\bar{\mu}$ that minimizes $f(\bar{\mu}) \defeq \frac{1}{n} \sum_{i=1}^{n} \wasserstein_1\mleft(\bar{\mu}, \mu_i\mright)$.
	Since $|e| = 2$, we minimize $\wasserstein_1\mleft(\bar{\mu}, \mu_1\mright) + \wasserstein_1\mleft(\bar{\mu}, \mu_2\mright)$.
  The Wasserstein distance is a metric, so it satisfies the triangle inequality.
  Thus, $\wasserstein_1\mleft(\mu_1, \mu_2\mright) \leq \wasserstein_1\mleft(\bar{\mu}, \mu_1\mright) + \wasserstein_1\mleft(\bar{\mu}, \mu_2\mright)$ for all choices of $\bar{\mu}$.
  Hence, $f$ is minimized by either $\mu_1$ or $\mu_2$.
  Evaluating both cases yields $\aggregation_\mean(e) = \aggregation_\bary(e)$, 
  and observing that $\aggregation_\maxi(e) = \wasserstein_1(\mu_i,\mu_j)$ for $e=\{i,j\}$ by definition, 
  the claim follows.
\end{proof}

\curvatureboundmean*
\begin{proof}
	We bound each of the summands in the curvature calculation. Given
	probability measures $\mu_i, \mu_j$, a result by \citet[Theorem~4]{Gibbs02a}
	states that
\begin{equation}
		\dist_{\min}(\hypergraph) \dist_{\text{TV}}\mleft(\mu_i, \mu_j\mright) \leq \wasserstein_1\mleft(\mu_i, \mu_j\mright) \leq \diam(\hypergraph) \dist_{\text{TV}}\mleft(\mu_i, \mu_j\mright)\;,
    \label{eq:Wasserstein bounds}
	\end{equation}
where $\dist_{\text{TV}}$ refers to the \emph{total variation
  distance}. The intuition behind this bound is that the total variation
  distance represents a specific type of transport plan between the two
  probability measures; the factors arising from the minimum~(maximum)
  distance in a space indicate the minimum~(maximum) distance that realizes
  this transport plan.
Since all our measures are defined over a finite space, we have
	$\dist_{\text{TV}}\mleft(\mu_i, \mu_j\mright) = \nicefrac{1}{2} \mleft\|\mu_i - \mu_j \mright\|_1$.
The claim follows by considering that pairwise distances are being
  \emph{subtracted} to calculate our curvature measure. 
\end{proof}

\curvatureboundmax*
\begin{proof}
  For $\aggregation_\maxi$, \cref{eq:Wasserstein bounds} applies for a single
  pairwise distance only. We thus only obtain a single bound based on the maximum
  total variation distance between two probability measures.
\end{proof}

\bonnet*
\begin{proof}
  Let $\{i, j\} = \argmax\mleft(\dist\mleft(s\mright)\mright)$ as required in
  the theorem. We then have following chain of (in)equalities:
\begin{equation}
    \dist(s) = \dist(i, j) = \wasserstein_1(\delta_i, \delta_j) \leq \wasserstein_1(\delta_i, \mu_i) + \wasserstein_1(\mu_i, \mu_j) + \wasserstein_1(\mu_j, \delta_j)\;.
    \label{eq:Bonnet inequality}
  \end{equation}
Rearranging \cref{eq:orchard:curvature:general}, 
  we have $\mleft(1 - \curvature(s)\mright)\dist(s) = \aggregation(s)$.
According to our assumptions,
  $\wasserstein_1(\mu_i, \mu_j) \leq \aggregation(s) = \mleft(1 - \curvature(s)\mright)\dist(i,j)$.
  Inserting this into \cref{eq:Bonnet inequality} yields
\begin{align}
    && \dist(i, j) & \leq \jump(\mu_i) + \jump(\mu_j) + \mleft(1 - \curvature(s)\mright)\dist(i,j)\\
    \Leftrightarrow && \dist(i, j) - (1 - \curvature(s)) \dist(i, j) & \leq \jump(\mu_i) + \jump(\mu_j)\\
    \Leftrightarrow && \dist(i, j) \leq \frac{\jump(i) + \jump(j)}{\curvature(s)}\;,
  \end{align}
where the last step is only valid since $\curvature(s) \geq \curvature > 0$ by assumption.
\end{proof}

\begin{restatable}[Hypercliques, hypergrids, hypertrees]{defi}{hhh}
	\label{def:hyperthings}
	A simple, connected hypergraph $\hypergraph = (\vertices,\edges)$~is
	\begin{itemize}[label=--]
		\item a \emph{hyperclique} if $\edges = \binom{\vertices}{\uniformity}$ for some  $\uniformity\leq |\vertices|$,
		\item a \emph{hypergrid} if $\hypergraph$ is an $\uniformity$-uniform hypergraph for which there exists a lattice 
		$L=(\vertices,\edges_L)$ \suchthat $\edges = \{e\in\binom{\vertices}{\uniformity}\mid e\text{~corresponds to a path of length~}\uniformity\text{~in~}L\}$, and
		\item a \emph{hypertree}
		if there exists a tree $T=(\vertices,\edges_T)$ \suchthat each edge $e\in\edges_T$ induces a subtree in~$T$.
	\end{itemize}
\end{restatable}
\begin{restatable}{cor}{hhhdeficor}
	Cliques are hypercliques, grids are hypergrids, and trees are hypertrees.
\end{restatable}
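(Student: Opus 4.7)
The plan is to dispatch each of the three inclusions by specifying the witness required by the corresponding existential clause of \cref{def:hyperthings}, with the graph itself serving as the required structure in each case.

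For cliques, I would take $\uniformity := 2$: an ordinary clique on $n$ nodes has edge set $\binom{\vertices}{2} = \binom{\vertices}{\uniformity}$, matching the hyperclique definition directly. For grids, I would observe that a grid graph $\graph = (\vertices, \edges)$ is itself a lattice, then set $L := \graph$ and $\uniformity := 2$; under the natural reading (forced by $e \in \binom{\vertices}{\uniformity}$) that ``a path of length $\uniformity$ in $L$'' refers to a simple path on $\uniformity$ vertices, the set $\{e \in \binom{\vertices}{2} \mid e \text{ corresponds to such a path in } L\}$ is exactly $\edges_L = \edges$, so $\graph$ qualifies as a hypergrid.

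For trees, I would take an ordinary tree $T' = (\vertices, \edges')$ and invoke the hypertree definition with $T := T'$. Each edge $e = \{u, v\} \in \edges'$ (reading the universally quantified edge as ranging over the hyperedge set, which is here $\edges'$) induces in $T$ the subgraph $(\{u, v\}, \{\{u, v\}\})$, a connected acyclic graph on two vertices, hence a subtree, as required.

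The only obstacle, such as it is, is a small notational one: the hypergrid clause uses ``path of length $\uniformity$'' in a sense that could clash with the edge-counting length convention introduced earlier in \cref{prelim:hg}, and the hypertree clause as written quantifies over $\edges_T$ rather than over the hyperedge set. I would resolve the former by interpreting length as vertex count (forced by $|e| = \uniformity$) and the latter by reading it as quantifying over hyperedges (otherwise the condition is vacuous). With those readings in hand, taking the graph as its own witness makes each reduction immediate.
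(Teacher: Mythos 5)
Your proof is correct and matches the paper's (implicit) reasoning: the paper states this corollary without proof as an immediate consequence of \cref{def:hyperthings}, and your instantiation with $\uniformity = 2$ and the graph serving as its own witness lattice or tree is exactly the intended verification. Your observations about the two notational wrinkles (vertex-count vs.\ edge-count path length, and the quantification over $\edges_T$ rather than $\edges$) are sensible readings that the definition indeed forces, not gaps in your argument.
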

\begin{restatable}{cor}{hhhisocor}
\label{cor:isomorphism}
	If $\hypergraph = (\vertices, \edges)$ is a  hyperclique, 
	a hypergrid, 
	or an $\uniformity$-uniform, 
	$\regularity$-regular, $1$-intersecting hypertree,
	for $i,j\in\vertices$, 
	the sets $S_i = \{e\in\edges\mid i \in e\}$ and $S_j = \{e\in\edges\mid j \in e\}$ are isomorphic, i.e., there exists $\varphi: \neighborhood(i)\cup\{i\}\rightarrow\neighborhood(j)\cup\{j\}$ such that $\{\{\varphi(x)\mid x \in e\}\mid e \in S_i\} = S_j$.
\end{restatable}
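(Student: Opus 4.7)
In each of the three cases the strategy is the same: exhibit an automorphism of $\hypergraph$ mapping $i$ to $j$, and let $\varphi$ be its restriction to $\neighborhood(i)\cup\{i\}$. Since an automorphism sends edges to edges, the induced map $S_i\to S_j$ is automatically an edge-preserving bijection and $\varphi(i)=j$, so the displayed equality $\{\{\varphi(x)\mid x\in e\}\mid e\in S_i\}=S_j$ follows as a formal consequence.

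For a hyperclique, $\edges=\binom{\vertices}{\uniformity}$ is invariant under every permutation of $\vertices$, so the transposition $(i\,j)$ is an automorphism of $\hypergraph$ and directly yields $\varphi$ by restriction. For a hypergrid, I would rely on vertex-transitivity of the underlying lattice $L$ (implicit in the regular-grid setting assumed by the accompanying hypergrid curvature theorem): a lattice translation $T$ with $T(i)=j$ preserves length-$\uniformity$ paths in $L$, and hence lifts to an automorphism of $\hypergraph$ whose restriction to $\neighborhood(i)\cup\{i\}$ is the desired $\varphi$.

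The hypertree case is the one I expect to require the most care, since no obvious global symmetry group is at hand; I would therefore argue directly from the three structural conditions. By $\regularity$-regularity, $|S_i|=|S_j|=\regularity$; by $\uniformity$-uniformity each edge of $S_i$ contributes $\uniformity-1$ vertices besides $i$; and by the $1$-intersecting condition, any two distinct edges of $S_i$ meet only at $i$. Consequently $\neighborhood(i)\cup\{i\}$ decomposes as $\{i\}$ together with $\regularity$ pairwise disjoint sets of $\uniformity-1$ vertices, and symmetrically at $j$, so in particular $|\neighborhood(i)\cup\{i\}|=|\neighborhood(j)\cup\{j\}|=\regularity(\uniformity-1)+1$.

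Given this uniform local picture, I would set $\varphi(i)\defeq j$, fix any bijection $\pi\colon S_i\to S_j$, and within each pair $(e,\pi(e))$ pick an arbitrary bijection between the $\uniformity-1$ non-central vertices of $e$ and those of $\pi(e)$. The $1$-intersecting condition is what guarantees that these piecewise choices never overlap---each vertex of $\neighborhood(i)$ lies in exactly one edge of $S_i$---so $\varphi$ is globally well-defined, and by construction it maps $S_i$ onto $S_j$ as sets of sets, establishing the required isomorphism.
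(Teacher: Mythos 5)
The paper states this corollary without any proof at all---it is presented as an immediate consequence of Definition~\ref{def:hyperthings}---so your proposal is not so much an alternative route as a supply of the missing argument, and the argument you give is correct. Your automorphism strategy for the hyperclique (any transposition preserves $\binom{\vertices}{\uniformity}$) and for the hypergrid (lattice translations) is the natural one, and your direct local decomposition for the hypertree---$\regularity$ edges through $i$, each contributing $\uniformity-1$ further vertices, pairwise disjoint away from $i$ by the $1$-intersecting condition, so that a bijection $\pi\colon S_i\to S_j$ extends piecewise to a well-defined bijection $\varphi$---is exactly what is needed and avoids any appeal to global symmetry, which a $\regularity$-regular hypertree need not visibly possess. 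One point where you are more careful than the paper: as literally defined, a hypergrid built on a lattice with boundary would have non-isomorphic $S_i$ and $S_j$ for a corner node versus an interior node, so the hypergrid case genuinely requires the vertex-transitivity (equivalently, the regularity assumed in the hypergrid curvature theorem) that you flag as implicit; the paper never acknowledges this. The only cosmetic remark is that in the hyperclique case $\neighborhood(i)\cup\{i\}=\vertices$, so the ``restriction'' of the transposition is the transposition itself---worth a half-sentence but not a gap.
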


For hypercliques, hypergrids, and hypertrees with certain regularities, $\aggregation_\mean(e)$ and $\aggregation_\maxi(e)$ are constants. 

\begin{restatable}[Hypercliques, hypergrids, hypertrees]{lem}{agglem}
\label{lem:aggregations}
	If $\hypergraph = (\vertices, \edges)$ is a  hyperclique, a hypergrid, or an $\uniformity$-uniform, 
	$\regularity$-regular, $1$-intersecting hypertree, 
	we have $\aggregation_\mean(e) = \aggregation_\maxi(e) = \wasserstein_1(\mu_i,\mu_j) = w$ 
	for $w\in\reals$,
	$e\in\edges$, 
	and $i,j\in\vertices$ with $i\adjacent j$.
\end{restatable}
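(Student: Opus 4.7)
The plan is to reduce all three cases to a single symmetry argument: show that under the stated regularity assumptions, the hypergraph admits a rich enough automorphism group that any adjacent pair $(i,j)$ can be mapped onto any other adjacent pair by a global automorphism, so $\wasserstein_1(\mu_i,\mu_j)$ takes a single value $w$ across adjacent pairs. The claim then follows immediately, because for every edge $e$ and every $\{i,j\}\subseteq e$ we have $i\adjacent j$, so $\aggregation_\mean(e)$ averages equal values and $\aggregation_\maxi(e)$ maximizes equal values, each yielding $w$.

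The first step is to upgrade the local star-isomorphism of \cref{cor:isomorphism} to a global automorphism of $\hypergraph$. For a hyperclique, every permutation in $S_{|\vertices|}$ is an automorphism, so this is trivial and acts transitively on ordered adjacent pairs. For a $\uniformity$-uniform, $\regularity$-regular hypergrid, I would invoke the translational (and reflective) symmetries of the underlying lattice $L$ to produce automorphisms carrying any chosen edge-direction to any other. For a $\uniformity$-uniform, $\regularity$-regular, $1$-intersecting hypertree, I would build an automorphism by choosing isomorphic rooted-subtree branches level by level from the underlying tree $T$; the combined uniformity, regularity, and $1$-intersecting hypotheses guarantee that the branches at every level are combinatorially indistinguishable, so the construction goes through.

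The second step is to argue that such an automorphism $\varphi$ is an isometry of $(\hypergraph,\dist)$ and that each of $\mu^\enrw$, $\mu^\eerw$, $\mu^\werw$ is defined purely via incidence and cardinality data, so $\varphi_*\mu_i = \mu_{\varphi(i)}$. Because $\wasserstein_1$ is invariant under isometric pushforward of both marginals, this gives $\wasserstein_1(\mu_i,\mu_j) = \wasserstein_1(\mu_{\varphi(i)},\mu_{\varphi(j)})$, so edge-transitivity yields a common value $w$ as claimed, and the two aggregation identities follow by the elementary averaging/maximization step above.

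The main obstacle will be the hypertree case: \cref{def:hyperthings} only asserts the \emph{existence} of an underlying tree $T$ whose subtrees realize the edges of $\hypergraph$, so one must carefully verify that the combination of $\uniformity$-uniformity, $\regularity$-regularity, and $1$-intersectingness forces $T$ (and hence $\hypergraph$) to be edge-transitive, rather than merely locally symmetric around each node. For the hypergrid, a subtler point is that Definition hyperthings permits somewhat flexible lattice realizations, so I would either restrict attention to canonical lattices (e.g., Cayley graphs of $\mathds{Z}^d$) or work up to hypergraph isomorphism to make the translational symmetry argument precise. Once edge-transitivity is nailed down in each case, the remainder of the argument is essentially one line.
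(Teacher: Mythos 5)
Your proposal follows essentially the same route as the paper: establish that $\wasserstein_1(\mu_i,\mu_j)$ takes a single value $w$ over all adjacent pairs via a symmetry argument, and then observe that $\aggregation_\mean$ and $\aggregation_\maxi$ applied to a constant family both return $w$. The paper's own proof is a one-liner that cites \cref{cor:isomorphism} (isomorphism of the vertex stars $S_i$ and $S_j$) and immediately asserts $\wasserstein_1(\mu_i,\mu_j)=\wasserstein_1(\mu_p,\mu_q)$; your version is more careful on exactly the point where that one-liner is thin. A per-vertex star isomorphism fixes each marginal $\mu_i$ up to relabeling, but $\wasserstein_1(\mu_i,\mu_j)$ also depends on the pairwise distances \emph{between} the supports of $\mu_i$ and $\mu_j$, i.e., on the joint geometry of the two overlapping stars, so one really does need transitivity on adjacent pairs (or an isomorphism of combined $1$-neighborhoods that preserves distances), which is what your global-automorphism step supplies. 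The obstacles you flag --- verifying edge-transitivity for finite $\regularity$-regular hypertrees and pinning down which lattice realizations the hypergrid definition admits --- are real, but they are inherited from the paper's own (unproved) \cref{cor:isomorphism} rather than introduced by your argument; modulo those, your proof is correct and, if anything, closes a gap the paper glosses over.
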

\begin{proof}
	By Corollary~\ref{cor:isomorphism},
	we have $w \defeq \wasserstein_1(\mu_i,\mu_j)  = \wasserstein_1(\mu_p,\mu_q)$ for $i,j,p,q\in \vertices$ with $i\adjacent j$ and $p\adjacent q$.
	Hence  $\aggregation_\maxi(e) = w$, 
	and $\aggregation_\mean(e) = \frac{2}{|e|(|e|-1)} \sum_{\{i, j\}\subseteq e}\wasserstein_1(\mu_i,\mu_j) = \frac{2}{|e|(|e|-1)} \frac{|e|(|e|-1)}{2}w = w$, 
	for $e\in\edges$.
\end{proof}
\begin{restatable}{cor}{meanmaxi}
	If $\hypergraph = (\vertices, \edges)$ is a  hyperclique, a hypergrid, or an $\uniformity$-uniform, 
	$\regularity$-regular, $1$-intersecting hypertree, $\aggregation_\mean(e) = \aggregation_\maxi(e)$.
\end{restatable}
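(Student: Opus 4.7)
The plan is to derive this corollary in one line from the immediately preceding \cref{lem:aggregations}. That lemma states that in all three hypergraph classes appearing in the hypothesis, there exists a common real number $w$ such that $\aggregation_\mean(e) = \aggregation_\maxi(e) = \wasserstein_1(\mu_i,\mu_j) = w$ for every $e \in \edges$ and every adjacent pair $i \adjacent j$. By transitivity of equality we immediately obtain $\aggregation_\mean(e) = w = \aggregation_\maxi(e)$ for every edge $e$, which is exactly the assertion of the corollary. So the corollary is genuinely a one-step consequence of a fact that has already been established, and no additional mathematical content is required.

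Conceptually, the structural work underlying this identity is done by \cref{cor:isomorphism}: in each of the three hypergraph classes under consideration, the local neighborhood structures around any two nodes are isomorphic, and that isomorphism transports the probability measure $\mu_i$ to $\mu_j$ for each of the three random walks $\mu^\enrw, \mu^\eerw, \mu^\werw$. Hence the Wasserstein distance between the measures of any pair of adjacent nodes is a single constant $w$, depending only on the class and its regularity parameters. Given this, $\aggregation_\maxi(e)$ is a maximum over copies of $w$ and $\aggregation_\mean(e)$ is an average over the same copies, so both collapse to $w$ and the corollary follows.

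The main obstacle, if one refused to invoke \cref{lem:aggregations} and wanted a self-contained argument, is the verification that the combinatorial isomorphism of \cref{cor:isomorphism} actually extends to an isomorphism of the induced probability measures on the ground set. This is routine for all three random walks because each measure $\mu_i^{\enrw}, \mu_i^{\eerw}, \mu_i^{\werw}$ is defined through purely local combinatorial quantities (neighborhood size, degrees of incident edges, cardinalities of and multiplicities of edges through $\{i,j\}$), all of which are preserved by the isomorphism. Since this step has effectively been carried out inside the proof of \cref{lem:aggregations}, the corollary as stated requires nothing further.
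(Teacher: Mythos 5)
Your proposal is correct and matches the paper exactly: the paper states this corollary immediately after \cref{lem:aggregations} with no separate proof, treating it as the same one-step consequence of the fact that both aggregations equal the common constant $w = \wasserstein_1(\mu_i,\mu_j)$ established there (via \cref{cor:isomorphism}). Nothing further is needed.
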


Using \cref{lem:aggregations}, we now prove that under $\aggregation_\mean$ and $\aggregation_\maxi$, hypercliques are positively curved, 
hypergrids are flat, 
and hypertrees are negatively curved, as desired.

\hyperclique*
\begin{proof}
	A hyperclique is $\uniformity$-uniform, $(n-1)$-regular, and $(\uniformity-2)$-cooccurrent, 
	so $\mu_i^\enrw = \mu_i^\eerw = \mu_i^\werw$ for each node $i\in\vertices$ by Lemma~\ref{lem:dispersions-general}.
	Thus, considering $\mu_i^\enrw$,
	each node $i\in\vertices$ has $n-1$ neighbors to which it distributes its probability mass equally, 
	and we have $\wasserstein_1(\mu_i,\mu_j) = \frac{1}{n-1}$ for $i,j\in \vertices$ with $i\adjacent j$.
	The claim now follows from Lemma~\ref{lem:aggregations}.
\end{proof}

\hypergrid*

\begin{proof}
	By Corollary~\ref{cor:isomorphism}, 
	the sets $S_i = \{e\in\edges\mid i \in e\}$ and $S_j = \{e\in\edges\mid j \in e\}$ are isomorphic, 
	and due to the symmetries in the hypergrid, 
	the isomorphism $\varphi\colon \neighborhood(i)\cup\{i\} \to \neighborhood(j)\cup\{j\}$ minimizing the cost 
	$\sum_{x\in \neighborhood(i)\cup\{i\}}\dist\left(x,\varphi(x)\right)$
	corresponds to the coupling minimizing $\wasserstein_1(\measure_i,\measure_j)$. 
	The cost of $\varphi$ equals the minimum cost of an isomorphism in $\hypergraph$'s underlying lattice $L$ 
	between the inclusive $(\uniformity-1)$-hop neighborhoods of two nodes adjacent in $L$,
	which is $|\neighborhood(i)\cup\{i\}|$.
	Hence, 
	$\wasserstein_1(\mu_i,\mu_j) = \frac{|\neighborhood(i)\cup\{i\}|}{|\neighborhood(i)\cup\{i\}|} = 1$ 
	for $i,j\in\vertices$ with $i\adjacent j$ and all choices of $\measure$,
	and the claim then follows from Lemma~\ref{lem:aggregations}.
\end{proof}

\hypertree*
\begin{proof}
	An $\uniformity$-uniform, 
	$\regularity$-regular, $1$-intersecting hypertree is $1$-cooccurrent, 
	so we have $\mu_i^\enrw = \mu_i^\eerw = \mu_i^\werw$ for each node $i\in\vertices$ by Lemma~\ref{lem:dispersions-general}.
	Each node $i\in\vertices$ has $(\uniformity-1)\regularity$ neighbors, 
	such that $\mu_i^\enrw$ distributes a fraction $\frac{1}{(\uniformity-1)\regularity}$ of the probability mass to each of $i$'s neighbors.
	Nodes $i,j\in\vertices$ with $i\adjacent j$ share $(\uniformity-2)$ neighbors 
	(those in the unique edge $e$ satisfying $\{i,j\}\subseteq e$), 
	and the probability mass allocated by $\mu_i$ to $j$ can be matched with the probability mass allocated by $\mu_j$ to $i$ at cost $1$.
	Because $\hypergraph$ is a hypertree, 
	the remaining probability mass,
	$(\uniformity-1)(\regularity-1)/\big((\uniformity-1)\regularity\big) = (\regularity-1)/\regularity$,
	needs to be transported from the neighborhood of $i$ to the neighborhood of $j$ at cost $3$.
	Hence, 
	\begin{align*}
		\wasserstein_1(\mu_i,\mu_j) = 1 \cdot \frac{1}{(\uniformity-1)\regularity} + 3 \cdot \frac{\regularity-1}{\regularity}
	\end{align*}
	for $i,j\in\vertices$ with $i\adjacent j$.
	Again, the claim follows from Lemma~\ref{lem:aggregations}.
\end{proof}
 
\clearpage

\subsection{Related Work}
\label{apx-related}

\paragraph{Hypergraph Curvature}
Most closely related to our work is the literature on hypergraph curvatures.
Much of this literature focuses on defining notions of \orc and Forman-Ricci Curvature (\frc) specifically for \emph{directed} hypergraphs 
and studying some of their mathematical and empirical properties \citep[e.g.,][]{leal2019curvature,leal2020ricci,leal2021forman,saucan2018forman}.
Notably, the directed hypergraph \orc introduced by \cite{eidi2020ollivier} is an instantiation of our framework with $\mu^\eerw$ and $\aggregation_\mean$. 
Curvature notions for \emph{undirected} hypergraphs are comparatively less explored, 
and especially the literature generalizing \orc is almost entirely theoretical.
The generalization of \orc proposed by \cite{asoodeh2018curvature} 
and the equivalent measure used by \cite{banerjee2021spectrum} 
are instantiations of our framework using $\mu^\eerw$ and $\aggregation_\bary$.
\cite{akamatsu2022new} propose $(\alpha, h)$-\orc using cost functions based on structured optimal transport, 
and \cite{ikeda2021ricci} define $\lambda$-coarse Ricci curvature using a $\lambda$-nonlinear Kantorovich difference based on a submodular hypergraph Laplacian as a generalization of \orc as introduced by \cite{lin2011ricci}. 
Both of these works define curvature exclusively for pairs of nodes, rather than for hyperedges.
Beyond \orc, \cite{yadav2022poset} study \frc for undirected hypergraphs defined via poset representations, and \cite{murgas2022beyond} explore hypergraphs constructed from protein-protein interactions using a different notion of \frc based on the Hodge Laplacian.
To the best of our knowledge, with \ourmethod, we are the first to introduce a flexible framework generalizing \orc to hypergraphs, 
and to demonstrate the utility of hypergraph \orc in practice.

\paragraph{Graph Curvature.}

Beyond the Ollivier-Ricci concepts, there are also curvature concepts based on the contractivity of operators~\citep{bakry1985diffusions}, which could be considered a ``spiritual precursor'' to Ollivier's work.
This perspective has been used to provide a predominantly \emph{spectral perspective} on curvature~\citep{liu2019distance, munch2020spectrally},
whereas \orc can foremost be seen as a \emph{probabilistic concept}.
Recently, \citet{kempton2020large} defined a hybrid between Ollivier and
Bakry-{\'E}mery curvature on graphs.
A more combinatorial perspective is assumed by \frc, which is motivated by defining equivalent formulations of curvature on structured spaces, such as CW complexes or simplicial complexes.
Originally described by \citet{forman2003bochner},
\frc has since been improved in the context of explaining the learning behavior of graph neural networks~\citep{topping2022oversquashing},
with other recent work focusing on fusing it with topological graph properties~\citep{roy2020forman}.
\orc was first developed for general Markov chains~\citep{ollivier2007ricci, ollivier2009ricci}, but has quickly been adopted to characterize graphs~\citep{jost2014ollivier} and networks~\citep{weber2017networks}.
With numerous follow-up publications elucidating the relationship between structural properties of a graph and \orc~\citep{bauer2017curvature, samal2018comparative},
the initial concept has also been substantially updated~\citep{bourne2018ollivier, lin2011ricci}.
As an emerging research direction, we identified the combination of \orc~(and \frc) with concepts from computational topology, leading to an inherent multi-scale perspective on data.
This has led to promising results for treating biomedical graph data~\citep{wee2021ollivier, wee2021forman}.

\paragraph{Hypergraph Learning.}

Work tackling certain hypergraph learning tasks 
such as hypergraph clustering \citep{amburg2020clustering,veldt2020parameterized} has existed for many years \citep{zhou2006learning,wachman2007learning}.
Some approaches make use of intrinsic structural properties of hypergraphs,
leading to hypergraph neural network architectures~\citep{huang2021unignn} and message passing formulations~\citep{gao2019wasserstein},
whereas others focus on developing similarity measures, i.e., \emph{kernels}~\citep{bai2014hypergraph, bloch2013mathematical, martino2020hyper}.
Methods from the rich literature on \emph{graph} kernels can also be employed to address hypergraph learning tasks,
namely, by transforming the hypergraph into a graph, 
but most popular transformations are lossy and may drastically increase the size of the object under study, 
such that the practicality and utility of this approach is unclear.

\paragraph{Hypergraph Mining and Analysis.}

In recent years, there has been a renewed interest in hypergraph mining and analysis. 
Notably, there is work developing new hypergraph descriptors  \citep{aksoy2020hypernetwork}, 
extending motif discovery to hypergraphs \citep{lee2020hypergraph,lee2021thyme}, 
solving classic graph mining tasks in the hypergraph setting \citep{macgregor2021finding},
or identifying patterns in real-world hypergraphs \citep{do2020structural}.
However, to the best of our knowledge, none of this work draws on curvature concepts to solve the mining and analysis tasks of interest.
 
\clearpage

\subsection{Dataset Details}
\label{apx-datasets}

At a high level, our workflow to produce and work with the datasets used in our experiments (\cref{experiments}) was as follows:
\begin{enumerate}[leftmargin=\widthof{(iii)}+\labelsep]
	\item Obtain raw data in a variety of different formats, e.g., CSV, JSON, or XML.
	\item Transform the raw data into a hypergraph CSV that retains as much of the raw data semantics as possible. 
	This CSV is guaranteed to contain one row per edge, 
	one column with unique edge identifiers, 
	and one column with the nodes contained in each edge. 
	It may also contain additional columns holding further metadata associated with individual edges.
	Column names may differ between datasets to reflect dataset semantics. 
	\item Provide a unified loading interface to the datasets in Python.
	\item Transform semantics-laden hypergraph CSV files into semantics-free one-based integer edge lists and sparse matrices for curvature computations in Julia, 
	compute curvatures in Julia, and store the results in JSON files.
	\item Map results back to original dataset semantics in Python for further examination.
\end{enumerate}

In the following, we give more details on the provenance, semantics, and statistics of our datasets. 
Unless if otherwise noted, 
we make our datasets publicly available with our online materials, along with the raw data and all preprocessing code.\!\footnote{\oururl}

\subsubsection{\apsa, \apsva, \apsvcout: American Physical Society Journal Articles}

The American Physical Society (APS), 
a nonprofit organization working to advance the knowledge of physics, 
publishes several peer-reviewed research journals.
The APS makes two datasets based on its publications available to researchers:
\begin{inparaenum}[(i)]
	\item an edge list containing (citing, cited) pairs of articles contained in its collection, and
	\item a JSON dataset containing the metadata for each article in its collection.
\end{inparaenum}
These datasets are updated on a yearly basis, 
and researchers can request access by filling out a web form located at \url{https://journals.aps.org/datasets}. 
We made a data access request and were granted access to the 2021 versions of the APS datasets within two weeks. 

From the APS datasets, 
we derived the following hypergraphs and hypergraph collections:
\begin{enumerate}[label=(\roman*),leftmargin=\widthof{(iii)}+\labelsep]
	\item \apsa: Each node corresponds to an author who published at least one article in an APS journal.
	Each edge $e$ corresponds to an article in an APS journal, 
	and it contains as nodes all authors of~$e$. 
	This hypergraph is derived from the JSON data.
	\item \apsva: \apsa, split up by journal, for a total of 19 hypergraphs.
	For each journal $j$, the edge set of \apsa is restricted to articles from $j$, and the node set of \apsa is restricted to nodes authoring at least one article from $j$.
	\item \apsvcout: 
	We derive one hypergraph for each of the 19 journals represented in the edge list data.
	For each journal $j$, 
	the edge set comprises articles from $j$ citing at least one article in $j$, 
	and the node set consists of articles in $j$ cited by at least one article in $j$.
\end{enumerate}

\paragraph{Access.}
Due to the terms and conditions associated with data access, 
we cannot make the APS datasets or the hypergraphs derived from them publicly available, 
and researchers seeking to work with this data will have to request data access from APS directly as outlined above.
However, we make our preprocessing code publicly available, 
such that researchers who have obtained access to the APS datasets can easily reproduce our hypergraphs from the raw data.

\paragraph{Caveats.} 
When doing our case studies on the \apsvcout dataset, we observed that some DOIs present in the edge list had no associated metadata in the JSON files provided by APS. 
This does not affect our curvature computations, 
but it might constrain the interpretability of results, 
e.g., when inspecting node clustering results based on article categories present only in the metadata.

\subsubsection{\dblp, \dblpv: dblp Journal Articles and Conference Proceedings}

The DBLP computer science library provides high-quality bibliographic information on computer science publications. 
All DBLP data is released under a CC0 license and freely available in one XML file that is updated regularly. 
We obtained the XML dump dated September 1, 2022 from \url{https://dblp.org/xml/release/} 
and preprocessed it into a CSV file containing only entries corresponding to the XML tags \texttt{article} and \texttt{inproceedings}, with one row per entry and the following columns:
\begin{itemize}[label=--,leftmargin=\widthof{(iii)}+\labelsep,itemsep=0pt]
	\item key: unique identifier of the entry, e.g., \texttt{conf/iclr/XuHLJ19} or\\ \texttt{journals/cacm/Savage16c}.
	\item tag: XML tag associated with the entry, one of \{\texttt{inproceedings}, \texttt{article}\}.
	\item crossref: cross-reference to a venue, e.g., \texttt{conf/iclr/2019}. 
	Sometimes missing although a venue should be present.
	\item author: semicolon-separated list of DBLP author names, e.g., \texttt{Keyulu Xu;Weihua Hu;Jure Leskovec;Stefanie Jegelka}. 
	Sometimes missing (we discard entries without authors when loading the data).
	\item year: entry publication year, e.g., \texttt{2019}.
	\item title: entry title, e.g., \texttt{How Powerful are Graph Neural Networks?}. 
	\item publtype: if present, the type of publication, e.g., \texttt{informal}. Mostly missing.
	\item journal: for article entries, the name of the publishing journal, e.g., \texttt{Commun.~ACM}.
	\item booktitle: for inproceedings entries, the name of the publishing venue, e.g., \texttt{ICLR}.
	\item volume: if present, the publication volume, e.g., \texttt{59}.
	\item number: if present, the publication number, e.g., \texttt{7}.
	\item pages: if present, the entry pages, e.g., \texttt{12--14}.
	\item mdate: modification date, e.g., \texttt{2019-07-25}.
\end{itemize}
This constitutes our individual hypergraph \dblp, 
in which each edge represents a paper, 
and each node represents an author.
From this hypergraph, we additionally derived the \dblpv hypergraph collection, 
which contains different subsets of \dblp by venue or group of venues. 
More precisely, we distinguish 1\,193 hypergraphs as follows:
\begin{enumerate}[label=(\roman*),leftmargin=\widthof{(iii)}+\labelsep]
	\item \texttt{dblp\_journal-all}, \texttt{dblp\_inproceedings-all}: partition of \dblp into entries published in journals and entries published as part of proceedings.
	\item \texttt{dblp\_journal-\{journal\}}: one hypergraph per journal, for all journals with at least 1\,000 articles in the DBLP dataset.
	\item \texttt{dblp\_proceedings-\{venue\}}: one hypergraph per venue (grouped by \texttt{booktitle}), for all venues with at least 1\,000 papers in the DBLP dataset.
	\item \texttt{dblp\_proceedings\_area-\{area\}\_\{venues\}}: one hypergraph per each of the FoR (field of research) areas 4601--4608, 4611--4613 as used in the CORE ranking (4609 and 4610 were not present in the ranking), 
	where each area is represented by all conferences (grouped by \texttt{booktitle}) with CORE rank A$^*$ and A that have at least 1\,000 papers in the DBLP dataset.
	These areas and associated top conferences are as follows:
	\begin{itemize}[label=--, leftmargin=\widthof{(i)}+\labelsep]
		\item 4601: Applied computing -- AIED, ICCS
		\item 4602: Artificial intelligence -- AAAI, AAMAS, ACL, AISTATS, CADE, CIKM, COLING, COLT, CP, CogSci, EACL, EC, ECAI, EMNLP, GECCO, ICAPS, IJCAI, IROS, KR, UAI
		\item 4603: Computer vision and multimedia computation -- AAAI, CVPR, ECAI, ICCV, ICME, IJCAI, IROS, WACV
		\item 4604: Cybersecurity and privacy -- AsiaCCS, CCS, CRYPTO, DSN
		\item 4605: Data management and data science -- CIKM, ECIR, EDBT, ICDAR, ICDE, ICDM, ISWC, KDD, MSR, PODS, RecSys, SDM, SIGIR, VLDB, WSDM, WWW
		\item 4606: Distributed computing and systems software  --  ASPLOS, CCGRID, CLUSTER, CONCUR, DISC, DSN, HPCA, HPDC, ICCAD, ICDCS, ICNP, ICPP, ICS, ICWS, INFOCOM, IPDPS, IPSN, PODC, SC, SIGCOMM, SPAA, WWW 
		\item 4607: Graphics, augmented reality and games -- ISMAR, SIGGRAPH, VR, VRST
		\item 4608: Human-centred computing -- ASSETS, CHI, CSCW, ITiCSE, IUI, SIGCSE, UIST
		\item 4611: Machine learning -- AAAI, AISTATS, COLT, ECAI, ICDM, ICLR, ICML, IJCAI, KDD, NeurIPS, PPSN, WSDM 
		\item 4612: Software engineering -- ASE, ASPLOS, CAV, ICSE, ICST, ISCA, ISSRE, MSR, OOPSLA, PLDI, POPL, RE, SIGMETRICS 
		\item 4613: Theory of computation -- EC, ESA, FOCS, ICALP, ICLP, ISAAC, ISSAC, KR, LICS, MFCS, SODA, STACS, STOC, WG
	\end{itemize}
\end{enumerate}

\paragraph{Caveats.}
For about 0.1\% of all records, our XML parser failed, which originally resulted in ``None'' as one of the authors of all problematic records. 
We then redid the preprocessing (and all subsequent computations) \emph{excluding} those records, but the records were still counted when determining the venues to include in \dblpv.

\subsubsection{\ndcai, \ndcpc: Drugs Approved by the U.S. Food \& Drug Administration}

The U.S. Food and Drug Administration (FDA) collects information on all drugs manufactured, prepared, propagated, compounded, or processed by registered drug establishments for commercial distribution in the United States.
The FDA maintains the National Drug Code (NDC) Directory, 
which is updated daily and contains the listed NDC numbers and all information submitted as part of a drug listing.
We downloaded the NDC data from \url{https://download.open.fda.gov/drug/ndc/drug-ndc-0001-of-0001.json.zip} on August 21, 2022, 
and transformed it into a CSV file, an example record of which is shown in \cref{tab:ndc-example}.
From this CSV file, we derived two hypergraphs. 
In both hypergraphs, edges correspond to FDA-registered drugs. 
In \ndcai, nodes correspond to the active ingredients used in these drugs, 
and in \ndcpc, nodes correspond to the pharmaceutical classes assigned to these drugs.
The edge cardinality distributions resulting from both semantics are shown in \cref{fig:ndc-histograms}.

\begin{table}[t]
	\centering\small
	\caption{Example record from the data underlying the \ndcai and \ndcpc hypergraphs.
	}\label{tab:ndc-example}
\begin{tabular}{lp{0.68\textwidth}}
	\toprule
	Column Name&Record Value\\
	\midrule
	product\_ndc                  &                                                                                                                                                                                                                                                                                     71930-020 \\
	active\_ingredients\_names     &                                                                                                                                                                                                                                                       [ACETAMINOPHEN, HYDROCODONE BITARTRATE] \\
	active\_ingredients\_strengths &                                                                                                                                                                                                                                                                          [325 mg/1, 7.5 mg/1] \\
	pharm\_class                  &                                                                                                                                                                                                                                                 [Opioid Agonist [EPC], Opioid Agonists [MoA]] \\
	marketing\_category           &                                                                                                                                                                                                                                                                                          ANDA \\
	dea\_schedule                 &                                                                                                                                                                                                                                                                                           CII \\
	finished                     &                                                                                                                                                                                                                                                                                          True \\
	packaging                    &  [\{'package\_ndc': '71930-020-12', 'description': '100 TABLET in 1 BOTTLE (71930-020-12)', 'marketing\_start\_date': '20180713', 'sample': False\}, \{'package\_ndc': '71930-020-52', 'description': '500 TABLET in 1 BOTTLE (71930-020-52)', 'marketing\_start\_date': '20180713', 'sample': False\}] \\
	dosage\_form                  &                                                                                                                                                                                                                                                                                        TABLET \\
	product\_type                 &                                                                                                                                                                                                                                                                       HUMAN PRESCRIPTION DRUG \\
	spl\_id                       &                                                                                                                                                                                                                                                          58b53a57-388e-40d0-9985-048e5af09b0d \\
	route                        &                                                                                                                                                                                                                                                                                        [ORAL] \\
	product\_id                   &                                                                                                                                                                                                                                                71930-020\_58b53a57-388e-40d0-9985-048e5af09b0d \\
	application\_number           &                                                                                                                                                                                                                                                                                    ANDA210211 \\
	labeler\_name                 &                                                                                                                                                                                                                                                                               Eywa Pharma Inc \\
	generic\_name                 &                                                                                                                                                                                                                                                      Hydrocodone Bitartrate and Acetaminophen \\
	brand\_name                   &                                                                                                                                                                                                                                                      Hydrocodone Bitartrate and Acetaminophen \\
	brand\_name\_base              &                                                                                                                                                                                                                                                      Hydrocodone Bitartrate and Acetaminophen \\
	brand\_name\_suffix            &                                                                                                                                                                                                                                                                                               \\
	listing\_expiration\_date      &                                                                                                                                                                                                                                                                                    2022-12-31 \\
	marketing\_start\_date         &                                                                                                                                                                                                                                                                                    2018-07-13 \\
	marketing\_end\_date           &                                                                                                                                                                                                                                                                                               \\
	openfda                      &                 \{'manufacturer\_name': ['Eywa Pharma Inc'], 'rxcui': ['856999', '857002', '857005'], 'spl\_set\_id': ['fcd2b59e-8087-475e-9e6b-911bd846ea96'], 'is\_original\_packager': [True], 'upc': ['0371930021121', '0371930020124', '0371930019128'], 'unii': ['NO70W886KK', '362O9ITL9D']\} \\
	\bottomrule
\end{tabular}
\end{table}

\begin{figure}[t]
	\centering
	\begin{subfigure}{0.45\linewidth}
		\includegraphics[width=\linewidth]{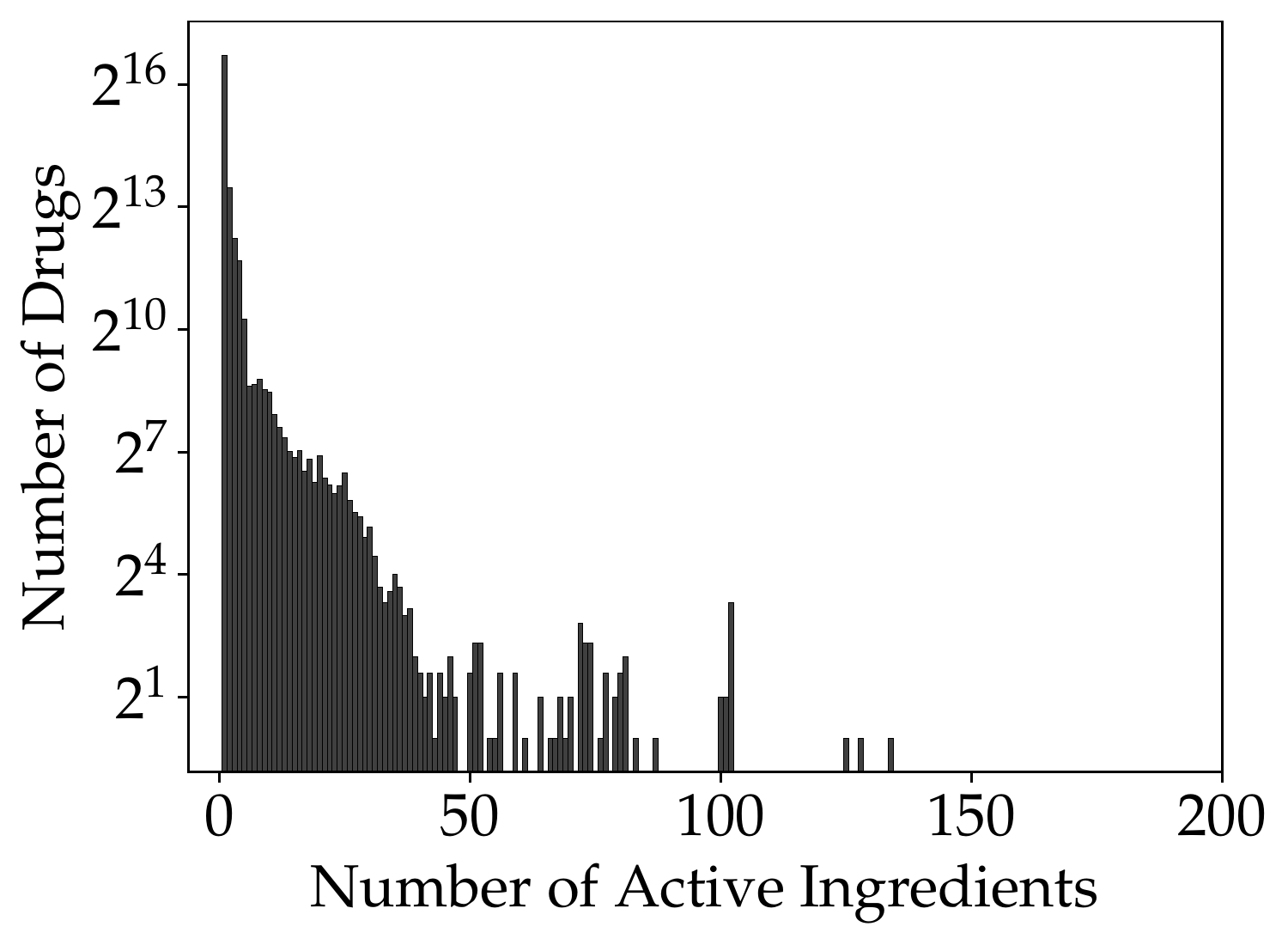}
		\subcaption{\ndcai}
	\end{subfigure}\quad
	\begin{subfigure}{0.45\linewidth}
		\includegraphics[width=\linewidth]{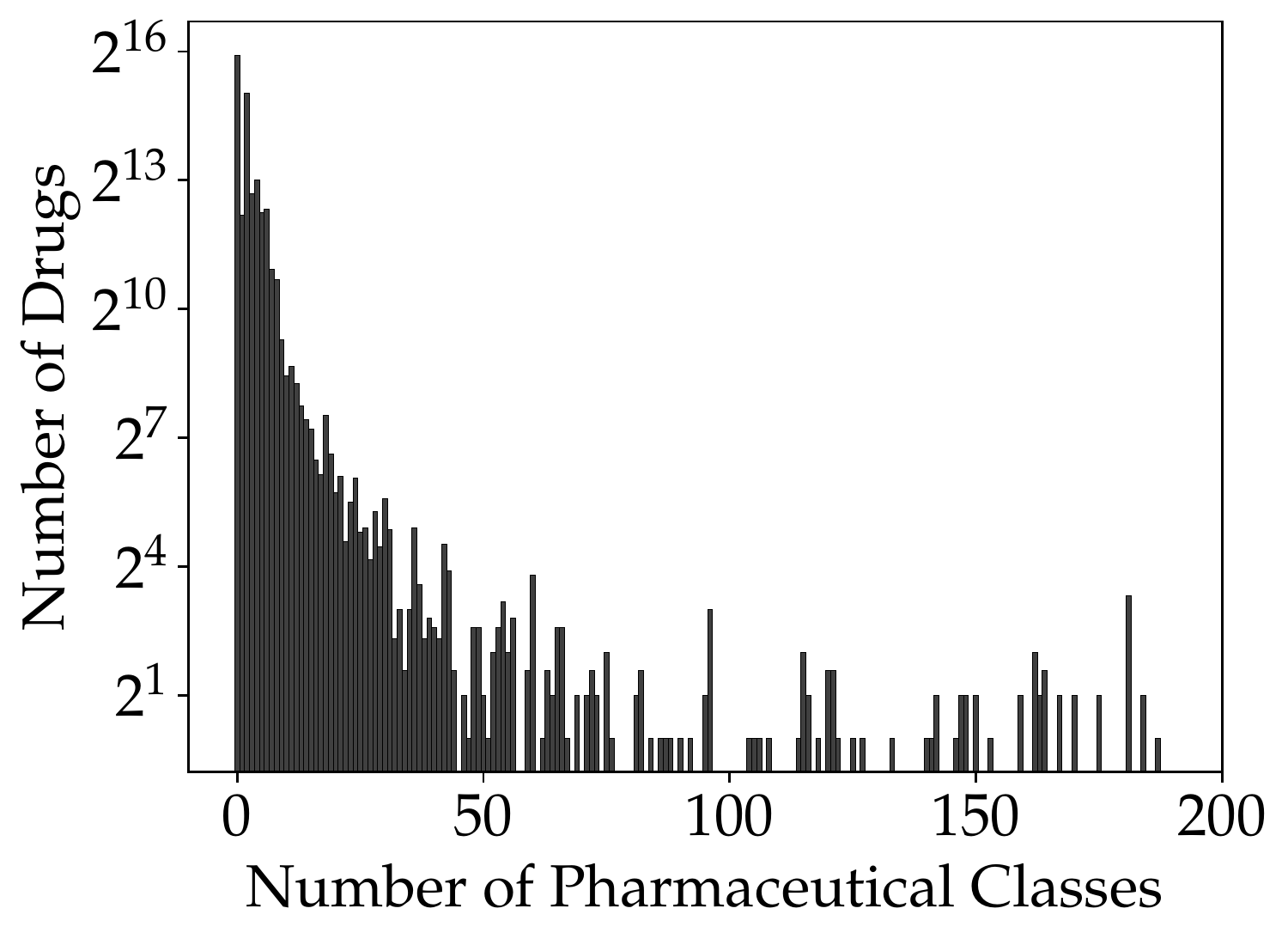}
		\subcaption{\ndcpc}
	\end{subfigure}
	\caption{Edge cardinality distributions for hypergraphs derived from NDC data.}\label{fig:ndc-histograms}
\end{figure}

\subsubsection{\mus: Music Pieces}

\texttt{music21} is an open-source Python library for computer-aided musicology that comes with a corpus of public-domain music in symbolic notation.
Using the \texttt{music21} library, 
we extracted a collection of hypergraphs from the \texttt{music21} corpus.
In this collection, each hypergraph corresponds to a music piece, each edge corresponds to a chord sounding for a specific duration at a particular offset from the start of the piece, 
and each node corresponds to a sound frequency. 
Note that hypergraphs in the \mus collection are node-aligned, which distinguishes them from the hypergraphs in all other collections. 
In \cref{tab:music}, we show the cardinality decomposition of selected music hypergraphs that include the largest edges. 
There, we include edges of cardinality $0$ for completeness (they correspond to pauses in the music), 
but they are discarded in our curvature computations. 

\paragraph{Caveats.}
When constructing our hypergraph collection from the \texttt{music21} corpus, 
we excluded pieces that are primarily monophonic. 
After exploring the corpus manually and evaluating the chord statistics of individual pieces, 
we decided to use only music with the following prefixes (corresponding to names of composers or collections): 
bach, beethoven, chopin, haydn, handel, monteverdi, mozart, palestrina, schumann, schubert, verdi, joplin, trecento, weber.
Some pieces are included in several editions 
(e.g., BWV 190.7, the chorale by Johann Sebastian Bach occupying the first two lines of \cref{tab:music}, 
which is included in both the original and an instrumental version).

\begin{table}[t]
	\centering
	\setlength{\tabcolsep}{2.75pt}
	\caption{Selection of hypergraphs from the \mus collection. 
		$n$ is the number of nodes, $m$ is the number of edges, and the columns labeled $i$ for $i\in\{0,1,\dots, 12\}$ record the number of edges of cardinality $i$ in the hypergraph.
		Identifiers correspond to abbreviated \texttt{music21} identifiers and generally have the shape \{composer\}-\{work identifier\}-\{suffix\}, where \emph{o} stands for \emph{opus}, \emph{m} stands for \emph{movement}, and \emph{inst} stands for \emph{instrumental}.
	}\label{tab:music}
	\begin{tabular}{lrrrrrrrrrrrrrrr}
\toprule
 & $n$ & $m$ & 0 & 1 & 2 & 3 & 4 & 5 & 6 & 7 & 8 & 9 & 10 & 11 & 12 \\
\midrule
bach-bwv190.7-inst & 38 & 233 & 1 & 0 & 0 & 4 & 25 & 60 & 56 & 72 & 9 & 6 & 0 & 0 & 0 \\
bach-bwv190.7 & 38 & 233 & 1 & 0 & 0 & 4 & 25 & 60 & 56 & 72 & 9 & 6 & 0 & 0 & 0 \\
bach-bwv248.23-2 & 35 & 155 & 1 & 0 & 0 & 12 & 45 & 90 & 0 & 3 & 1 & 2 & 1 & 0 & 0 \\
bach-bwv248.42-4 & 38 & 386 & 3 & 1 & 11 & 42 & 147 & 106 & 54 & 14 & 7 & 1 & 0 & 0 & 0 \\
beethoven-o133 & 88 & 5\,140 & 236 & 565 & 828 & 1\,515 & 1\,758 & 168 & 42 & 21 & 5 & 2 & 0 & 0 & 0 \\
beethoven-o18no1-m1 & 70 & 1\,979 & 28 & 295 & 165 & 472 & 761 & 244 & 7 & 6 & 0 & 0 & 1 & 0 & 0 \\
beethoven-o18no1-m4 & 77 & 2\,669 & 13 & 338 & 438 & 678 & 1\,032 & 134 & 33 & 1 & 1 & 1 & 0 & 0 & 0 \\
beethoven-o18no4 & 81 & 4\,730 & 95 & 465 & 674 & 977 & 1\,940 & 521 & 50 & 3 & 3 & 1 & 1 & 0 & 0 \\
beethoven-o59no1-m4 & 75 & 2\,338 & 27 & 80 & 231 & 338 & 1467 & 168 & 18 & 4 & 4 & 0 & 1 & 0 & 0 \\
beethoven-o59no2-m1 & 86 & 2\,338 & 60 & 127 & 398 & 427 & 1\,065 & 203 & 18 & 30 & 4 & 5 & 0 & 0 & 1 \\
beethoven-o59no3-m4 & 81 & 3\,292 & 19 & 381 & 529 & 734 & 1\,219 & 255 & 139 & 14 & 1 & 1 & 0 & 0 & 0 \\
beethoven-o74 & 82 & 6\,492 & 112 & 440 & 922 & 1\,448 & 2\,886 & 538 & 119 & 21 & 5 & 1 & 0 & 0 & 0 \\
monteverdi-madrigal.3.6 & 35 & 480 & 1 & 9 & 40 & 194 & 151 & 76 & 4 & 3 & 1 & 1 & 0 & 0 & 0 \\
schumann-clara-o17-m3 & 63 & 819 & 5 & 12 & 133 & 208 & 151 & 108 & 83 & 74 & 25 & 13 & 5 & 2 & 0 \\
schumann-o41no1-m5 & 72 & 2\,410 & 51 & 130 & 208 & 592 & 919 & 366 & 117 & 18 & 2 & 4 & 0 & 2 & 1 \\
\bottomrule
\end{tabular}
 \end{table}

\subsubsection{\stex: StackExchange Sites}

StackExchange is a platform hosting Q\&A communities also known as sites. 
Each question is assigned at least one and at most five tags.
In the second half of August 2022,
we used the StackExchange API to download all questions asked on all StackExchange sites listed on the StackExchange data explorer (\texttt{https://data.stackexchange.com/}), 
along with their associated tags and other metadata (including question titles and, for smaller sites, also question bodies).
From our downloads, we created the \stex hypergraph collection, 
in which each hypergraph corresponds to a StackExchange site, 
each edge corresponds to a question asked on a site, 
and each node corresponds to a tag used at least once on a site. 
\cref{tab:stex-1,tab:stex-2,tab:stex-3,tab:stex-4,tab:stex-5,tab:stex-6,tab:stex-7} list the basic statistics for each hypergraph from the \stex collection.

\paragraph{Caveats.}
While our curvature computations uniformly include only questions asked no later than August 15, midnight GMT, 
the metadata associated with these questions stems from snapshots at different times in the second half of August 2022.
We also excluded \texttt{stackoverflow.com} and  \texttt{math.stackexchange.com} from our downloads because they could not be downloaded within one day due to API quota limitations, 
and \texttt{ru.stackoverflow.com} because it was large but we would not have been able to interpret our results. 

\subsubsection{\sha: Shakespeare's Plays}

The \sha collection is a subset of the \textsc{Hyperbard} dataset recently introduced by \citet{coupette2022world}, 
based on the TEI-encoded XML files of Shakespeare's plays provided by Folger Digital Texts.
Here, each hypergraph represents one of Shakespeare's plays, which are categorized into three types: 
comedy, history, and tragedy.
In each hypergraph representing a play, 
nodes correspond to named characters in the play, 
and edges correspond to groups of characters simultaneously present on stage. 
These hypergraphs are documented extensively in the paper introducing the \textsc{Hyperbard} dataset \citep{coupette2022world}.

\subsubsection{\sync, \synr, \syns: Synthetic Hypergraphs}

To generate synthetic hypergraphs, 
we wrote hypergraph generators extending three well-known graph models to hypergraphs.
\begin{enumerate}[label=(\roman*),leftmargin=\widthof{(iii)}+\labelsep]
	\item For \sync, we extended the configuration model, which, for undirected graphs, is specified by a degree sequence. 
	Our hypergraph configuration model is specified by a node degree sequence and an edge cardinality sequence.
	\item For \synr, we extended the Erd\H{o}s-R\'enyi random graph model, which, for undirected graphs, is specified by a number of nodes $n$ and an edge existence probability $p$.
	Our Erd\H{o}s-R\'enyi random hypergraph model is specified by a number of nodes $n$, a number of edges $m$, and the probability $p$ of a one in  any cell of the node-to-edge incidence matrix.
	\item For \syns, we extended the stochastic block model which, for undirected graphs, is specified by a vector of $c$ community sizes and a $c\times c$ affinity matrix specifying affiliation probabilities between communities.
	Our hypergraph stochastic block model is specified by a vector of $c_V$ node community sizes, 
	a vector of $c_E$ edge community sizes, 
	and a $c_V\times c_E$ affinity matrix specifying affiliation probabilities between node communities and edge communities. 
\end{enumerate}
We used each of our generators to create 250 hypergraphs with identical node count $n$, edge count $m$, and density $\nicefrac{c}{nm}$, where $c$ is the number of filled cells in the node-to-edge incidence matrix.

\paragraph{Caveats.} 
Our generators work by pairing node and edge indices, 
and duplicated (node, edge) index pairs are discarded to generate simple hypergraphs, 
which can lead to small deviations from the input specification in practice.

\clearpage

\begin{table}[t]
	\centering
	\setlength{\tabcolsep}{2.75pt}
	\caption{Basic statistics of hypergraphs derived from StackExchange sites. 
		$n$ is the number of nodes, $m$ is the number of edges, and columns labeled $i\in[5]$ count edges of cardinality $i$.
	}\label{tab:stex-1}
	\begin{tabular}{lrrrrrrrr}
\toprule
 & $n$ & $m$ & $\nicefrac{n}{m}$ & 1 & 2 & 3 & 4 & 5 \\
\midrule
3dprinting & 416 & 4\,902 & 0.084863 & 1\,003 & 1\,617 & 1\,367 & 649 & 266 \\
3dprinting.meta & 45 & 197 & 0.228426 & 65 & 85 & 38 & 5 & 4 \\
academia & 457 & 39\,270 & 0.011637 & 6\,428 & 11\,831 & 11\,360 & 6\,294 & 3\,357 \\
academia.meta & 91 & 1\,237 & 0.073565 & 396 & 486 & 249 & 95 & 11 \\
ai & 980 & 10\,204 & 0.096041 & 767 & 1\,805 & 2\,696 & 2\,427 & 2\,509 \\
ai.meta & 49 & 315 & 0.155556 & 100 & 132 & 67 & 11 & 5 \\
alcohol & 154 & 1\,138 & 0.135325 & 415 & 406 & 229 & 56 & 32 \\
alcohol.meta & 28 & 94 & 0.297872 & 28 & 42 & 14 & 8 & 2 \\
android & 1\,517 & 56\,403 & 0.026896 & 12\,890 & 18\,313 & 14\,406 & 6\,996 & 3\,798 \\
android.meta & 103 & 996 & 0.103414 & 159 & 447 & 281 & 97 & 12 \\
anime & 1\,528 & 12\,122 & 0.126052 & 9\,510 & 2\,215 & 348 & 43 & 6 \\
anime.meta & 83 & 900 & 0.092222 & 234 & 384 & 215 & 56 & 11 \\
apple & 969 & 121\,999 & 0.007943 & 15\,822 & 34\,777 & 37\,243 & 22\,652 & 11\,505 \\
apple.meta & 108 & 1\,452 & 0.074380 & 354 & 601 & 393 & 90 & 14 \\
arduino & 445 & 23\,616 & 0.018843 & 5\,838 & 7\,357 & 6\,027 & 2\,858 & 1\,536 \\
arduino.meta & 50 & 255 & 0.196078 & 101 & 110 & 34 & 10 & 0 \\
askubuntu & 3\,137 & 393\,266 & 0.007977 & 68\,310 & 104\,529 & 105\,601 & 68\,907 & 45\,919 \\
astronomy & 566 & 12\,773 & 0.044312 & 2\,781 & 3\,812 & 3\,284 & 1\,777 & 1\,119 \\
astronomy.meta & 63 & 339 & 0.185841 & 115 & 93 & 76 & 43 & 12 \\
aviation & 1\,024 & 22\,701 & 0.045108 & 4\,294 & 7\,193 & 6\,384 & 3\,231 & 1\,599 \\
aviation.meta & 73 & 752 & 0.097074 & 247 & 295 & 155 & 46 & 9 \\
bicycles & 548 & 18\,873 & 0.029036 & 4\,884 & 6\,267 & 4\,652 & 2\,097 & 973 \\
bicycles.meta & 74 & 442 & 0.167421 & 150 & 197 & 76 & 15 & 4 \\
bioacoustics & 354 & 287 & 1.233449 & 20 & 50 & 101 & 54 & 62 \\
bioacoustics.meta & 36 & 49 & 0.734694 & 4 & 24 & 16 & 5 & 0 \\
bioinformatics & 490 & 4\,998 & 0.098039 & 922 & 1\,420 & 1\,335 & 782 & 539 \\
bioinformatics.meta & 29 & 112 & 0.258929 & 44 & 53 & 15 & 0 & 0 \\
biology & 745 & 27\,348 & 0.027241 & 5\,487 & 8\,618 & 7\,093 & 3\,742 & 2\,408 \\
biology.meta & 88 & 814 & 0.108108 & 280 & 331 & 145 & 44 & 14 \\
bitcoin & 936 & 28\,882 & 0.032408 & 6\,677 & 8\,927 & 7\,432 & 3\,766 & 2\,080 \\
bitcoin.meta & 58 & 434 & 0.133641 & 142 & 202 & 71 & 16 & 3 \\
blender & 371 & 98\,724 & 0.003758 & 31\,012 & 30\,861 & 22\,200 & 9\,614 & 5\,037 \\
blender.meta & 69 & 716 & 0.096369 & 273 & 291 & 108 & 35 & 9 \\
boardgames & 1\,000 & 13\,166 & 0.075953 & 9\,800 & 2\,779 & 500 & 75 & 12 \\
boardgames.meta & 75 & 659 & 0.113809 & 197 & 289 & 144 & 27 & 2 \\
bricks & 202 & 4\,220 & 0.047867 & 1\,391 & 1\,669 & 805 & 266 & 89 \\
bricks.meta & 52 & 211 & 0.246445 & 45 & 95 & 51 & 17 & 3 \\
buddhism & 487 & 7\,956 & 0.061212 & 2\,381 & 2\,357 & 1\,730 & 896 & 592 \\
buddhism.meta & 59 & 491 & 0.120163 & 104 & 252 & 94 & 30 & 11 \\
cardano & 285 & 2\,248 & 0.126779 & 585 & 664 & 548 & 277 & 174 \\
cardano.meta & 24 & 43 & 0.558140 & 18 & 15 & 10 & 0 & 0 \\
chemistry & 370 & 41\,571 & 0.008900 & 9\,725 & 14\,183 & 10\,803 & 4\,790 & 2\,070 \\
chemistry.meta & 90 & 1\,034 & 0.087041 & 250 & 441 & 243 & 88 & 12 \\
chess & 387 & 7\,864 & 0.049212 & 1\,646 & 2\,682 & 2\,069 & 985 & 482 \\
chess.meta & 62 & 368 & 0.168478 & 102 & 183 & 72 & 9 & 2 \\
chinese & 166 & 10\,298 & 0.016120 & 4\,467 & 3\,438 & 1\,628 & 543 & 222 \\
chinese.meta & 60 & 349 & 0.171920 & 93 & 170 & 67 & 12 & 7 \\
christianity & 1\,129 & 14\,955 & 0.075493 & 1\,739 & 3\,571 & 4\,205 & 2\,967 & 2\,473 \\
christianity.meta & 110 & 1\,579 & 0.069664 & 593 & 589 & 285 & 88 & 24 \\
civicrm & 507 & 14\,324 & 0.035395 & 4\,639 & 5\,150 & 3\,085 & 1\,083 & 367 \\
civicrm.meta & 18 & 69 & 0.260870 & 43 & 18 & 6 & 2 & 0 \\
codegolf & 257 & 13\,228 & 0.019428 & 1\,360 & 4\,586 & 4\,379 & 2\,106 & 797 \\
codegolf.meta & 128 & 2\,276 & 0.056239 & 559 & 848 & 549 & 245 & 75 \\
codereview & 1\,114 & 76\,105 & 0.014638 & 6\,306 & 20\,542 & 23\,777 & 16\,106 & 9\,374 \\
codereview.meta & 133 & 1\,947 & 0.068310 & 190 & 615 & 688 & 345 & 109 \\
\bottomrule
\end{tabular}
 \end{table}

\begin{table}[t]
	\centering
	\setlength{\tabcolsep}{2.75pt}
	\caption{Basic statistics of hypergraphs derived from StackExchange sites (continued). 
		$n$ is the number of nodes, $m$ is the number of edges, and columns labeled $i\in[5]$ count edges of cardinality~$i$.}\label{tab:stex-2}
	\begin{tabular}{lrrrrrrrr}
\toprule
 & $n$ & $m$ & $\nicefrac{n}{m}$ & 1 & 2 & 3 & 4 & 5 \\
\midrule
coffee & 114 & 1\,381 & 0.082549 & 492 & 524 & 260 & 78 & 27 \\
coffee.meta & 27 & 90 & 0.300000 & 45 & 30 & 13 & 2 & 0 \\
communitybuilding & 74 & 559 & 0.132379 & 148 & 219 & 112 & 55 & 25 \\
communitybuilding.meta & 27 & 132 & 0.204545 & 36 & 67 & 24 & 4 & 1 \\
computergraphics & 259 & 3\,600 & 0.071944 & 883 & 1\,024 & 877 & 489 & 327 \\
computergraphics.meta & 34 & 150 & 0.226667 & 55 & 66 & 27 & 2 & 0 \\
conlang & 96 & 448 & 0.214286 & 109 & 204 & 91 & 32 & 12 \\
conlang.meta & 21 & 61 & 0.344262 & 16 & 34 & 7 & 4 & 0 \\
cooking & 834 & 25\,877 & 0.032229 & 6\,568 & 9\,266 & 6\,344 & 2\,682 & 1\,017 \\
cooking.meta & 83 & 866 & 0.095843 & 241 & 410 & 178 & 34 & 3 \\
craftcms & 523 & 13\,756 & 0.038020 & 3\,738 & 4\,912 & 3\,410 & 1\,263 & 433 \\
craftcms.meta & 20 & 50 & 0.400000 & 22 & 11 & 15 & 1 & 1 \\
crafts & 193 & 2\,039 & 0.094654 & 706 & 828 & 397 & 84 & 24 \\
crafts.meta & 49 & 184 & 0.266304 & 40 & 88 & 45 & 11 & 0 \\
crypto & 506 & 27\,447 & 0.018436 & 6\,448 & 9\,056 & 6\,960 & 3\,283 & 1\,700 \\
crypto.meta & 74 & 542 & 0.136531 & 139 & 237 & 127 & 27 & 12 \\
cs & 656 & 44\,794 & 0.014645 & 8\,624 & 14\,332 & 12\,644 & 6\,336 & 2\,858 \\
cs.meta & 86 & 603 & 0.142620 & 90 & 247 & 185 & 68 & 13 \\
cseducators & 210 & 1\,080 & 0.194444 & 297 & 378 & 252 & 116 & 37 \\
cseducators.meta & 29 & 146 & 0.198630 & 52 & 68 & 26 & 0 & 0 \\
cstheory & 498 & 11\,959 & 0.041642 & 1\,653 & 3\,384 & 3\,495 & 2\,052 & 1\,375 \\
cstheory.meta & 80 & 608 & 0.131579 & 157 & 262 & 156 & 30 & 3 \\
datascience & 663 & 33\,997 & 0.019502 & 4\,110 & 8\,028 & 9\,305 & 6\,753 & 5\,801 \\
datascience.meta & 51 & 237 & 0.215190 & 80 & 97 & 38 & 16 & 6 \\
dba & 1\,197 & 96\,887 & 0.012355 & 15\,956 & 29\,750 & 27\,361 & 15\,610 & 7\,682 \\
dba.meta & 76 & 800 & 0.095000 & 280 & 334 & 140 & 38 & 8 \\
devops & 431 & 5\,025 & 0.085771 & 1\,070 & 1\,647 & 1\,340 & 616 & 352 \\
devops.meta & 40 & 144 & 0.277778 & 45 & 63 & 31 & 5 & 0 \\
diy & 919 & 71\,007 & 0.012942 & 19\,347 & 22\,079 & 17\,371 & 8\,399 & 3\,811 \\
diy.meta & 68 & 603 & 0.112769 & 227 & 233 & 118 & 21 & 4 \\
drones & 220 & 731 & 0.300958 & 114 & 240 & 193 & 115 & 69 \\
drones.meta & 28 & 62 & 0.451613 & 11 & 31 & 17 & 3 & 0 \\
drupal & 149 & 86\,283 & 0.001727 & 25\,218 & 37\,599 & 18\,867 & 4\,075 & 524 \\
drupal.meta & 75 & 1\,014 & 0.073964 & 361 & 432 & 186 & 35 & 0 \\
dsp & 509 & 24\,850 & 0.020483 & 4\,460 & 6\,779 & 6\,565 & 4\,081 & 2\,965 \\
dsp.meta & 48 & 307 & 0.156352 & 153 & 108 & 30 & 14 & 2 \\
earthscience & 424 & 6\,329 & 0.066993 & 1\,111 & 1\,778 & 1\,698 & 1\,094 & 648 \\
earthscience.meta & 54 & 321 & 0.168224 & 100 & 145 & 63 & 12 & 1 \\
ebooks & 180 & 1\,466 & 0.122783 & 364 & 489 & 339 & 163 & 111 \\
ebooks.meta & 39 & 99 & 0.393939 & 31 & 37 & 23 & 6 & 2 \\
economics & 494 & 13\,690 & 0.036085 & 3\,488 & 4\,426 & 3\,160 & 1\,678 & 938 \\
economics.meta & 60 & 444 & 0.135135 & 241 & 151 & 40 & 7 & 5 \\
electronics & 2\,318 & 175\,731 & 0.013191 & 31\,201 & 46\,423 & 46\,974 & 29\,107 & 22\,026 \\
electronics.meta & 107 & 1\,685 & 0.063501 & 698 & 628 & 282 & 62 & 15 \\
elementaryos & 314 & 8\,471 & 0.037068 & 3\,043 & 2\,910 & 1\,669 & 619 & 230 \\
elementaryos.meta & 29 & 107 & 0.271028 & 60 & 28 & 17 & 2 & 0 \\
ell & 533 & 99\,970 & 0.005332 & 46\,764 & 31\,310 & 14\,644 & 5\,147 & 2\,105 \\
ell.meta & 93 & 1\,224 & 0.075980 & 448 & 489 & 226 & 52 & 9 \\
emacs & 891 & 23\,939 & 0.037220 & 7\,561 & 9\,371 & 4\,980 & 1\,590 & 437 \\
emacs.meta & 51 & 216 & 0.236111 & 34 & 112 & 59 & 10 & 1 \\
engineering & 468 & 13\,867 & 0.033749 & 3\,582 & 4\,121 & 3\,315 & 1\,770 & 1\,079 \\
engineering.meta & 47 & 217 & 0.216590 & 71 & 87 & 45 & 10 & 4 \\
english & 984 & 125\,848 & 0.007819 & 48\,232 & 38\,850 & 23\,112 & 10\,111 & 5\,543 \\
english.meta & 182 & 3\,589 & 0.050711 & 1\,224 & 1\,305 & 733 & 249 & 78 \\
eosio & 241 & 2\,422 & 0.099505 & 766 & 766 & 533 & 245 & 112 \\
\bottomrule
\end{tabular}
 \end{table}

\begin{table}[t]
	\centering
	\setlength{\tabcolsep}{2.75pt}
	\caption{Basic statistics of hypergraphs derived from StackExchange sites (continued). 
		$n$ is the number of nodes, $m$ is the number of edges, and columns labeled $i\in[5]$ count edges of cardinality~$i$.}\label{tab:stex-3}
	\begin{tabular}{lrrrrrrrr}
\toprule
 & $n$ & $m$ & $\nicefrac{n}{m}$ & 1 & 2 & 3 & 4 & 5 \\
\midrule
eosio.meta & 19 & 27 & 0.703704 & 6 & 14 & 4 & 2 & 1 \\
es.meta.stackoverflow & 168 & 1\,817 & 0.092460 & 310 & 665 & 568 & 230 & 44 \\
es.stackoverflow & 2\,960 & 179\,452 & 0.016495 & 38\,027 & 58\,218 & 47\,343 & 23\,415 & 12\,449 \\
esperanto & 99 & 1\,592 & 0.062186 & 1\,050 & 422 & 96 & 16 & 8 \\
esperanto.meta & 20 & 84 & 0.238095 & 37 & 38 & 9 & 0 & 0 \\
ethereum & 891 & 46\,678 & 0.019088 & 8\,449 & 12\,402 & 12\,327 & 7\,687 & 5\,813 \\
ethereum.meta & 63 & 259 & 0.243243 & 98 & 71 & 59 & 26 & 5 \\
expatriates & 304 & 7\,182 & 0.042328 & 1\,068 & 2\,178 & 2\,163 & 1\,156 & 617 \\
expatriates.meta & 48 & 157 & 0.305732 & 41 & 72 & 41 & 2 & 1 \\
expressionengine & 603 & 12\,447 & 0.048445 & 3\,724 & 4\,239 & 2\,901 & 1\,150 & 433 \\
expressionengine.meta & 35 & 123 & 0.284553 & 59 & 49 & 15 & 0 & 0 \\
fitness & 402 & 9\,667 & 0.041585 & 2\,123 & 2\,864 & 2\,427 & 1\,289 & 964 \\
fitness.meta & 54 & 315 & 0.171429 & 126 & 123 & 57 & 7 & 2 \\
freelancing & 125 & 1\,946 & 0.064234 & 632 & 654 & 394 & 177 & 89 \\
freelancing.meta & 33 & 132 & 0.250000 & 36 & 64 & 25 & 5 & 2 \\
french & 324 & 12\,413 & 0.026102 & 3\,368 & 4\,126 & 2\,923 & 1\,390 & 606 \\
french.meta & 73 & 290 & 0.251724 & 58 & 127 & 80 & 24 & 1 \\
gamedev & 1\,096 & 54\,182 & 0.020228 & 7\,381 & 16\,130 & 15\,996 & 9\,433 & 5\,242 \\
gamedev.meta & 78 & 910 & 0.085714 & 300 & 430 & 148 & 27 & 5 \\
gaming & 5\,883 & 98\,355 & 0.059814 & 72\,655 & 20\,708 & 4\,120 & 758 & 114 \\
gaming.meta & 177 & 4\,062 & 0.043575 & 478 & 1\,853 & 1\,219 & 425 & 87 \\
gardening & 526 & 16\,629 & 0.031631 & 3\,725 & 5\,390 & 4\,122 & 2\,097 & 1\,295 \\
gardening.meta & 60 & 320 & 0.187500 & 95 & 157 & 49 & 17 & 2 \\
genealogy & 465 & 3\,572 & 0.130179 & 421 & 742 & 1\,037 & 902 & 470 \\
genealogy.meta & 56 & 485 & 0.115464 & 133 & 273 & 70 & 8 & 1 \\
german & 265 & 16\,022 & 0.016540 & 6\,003 & 5\,915 & 2\,914 & 927 & 263 \\
german.meta & 69 & 540 & 0.127778 & 177 & 224 & 107 & 30 & 2 \\
gis & 2\,829 & 150\,205 & 0.018834 & 13\,868 & 36\,527 & 45\,339 & 32\,527 & 21\,944 \\
gis.meta & 91 & 1\,016 & 0.089567 & 174 & 361 & 317 & 125 & 39 \\
graphicdesign & 612 & 34\,820 & 0.017576 & 7\,542 & 10\,789 & 9\,364 & 4\,821 & 2\,304 \\
graphicdesign.meta & 83 & 851 & 0.097532 & 253 & 338 & 187 & 58 & 15 \\
ham & 334 & 4\,299 & 0.077692 & 927 & 1\,287 & 1\,199 & 610 & 276 \\
ham.meta & 45 & 156 & 0.288462 & 39 & 65 & 32 & 18 & 2 \\
hardwarerecs & 246 & 3\,945 & 0.062357 & 1\,201 & 1\,366 & 823 & 378 & 177 \\
hardwarerecs.meta & 42 & 255 & 0.164706 & 81 & 100 & 58 & 16 & 0 \\
hermeneutics & 422 & 12\,563 & 0.033591 & 2\,819 & 3\,720 & 3\,074 & 1\,772 & 1\,178 \\
hermeneutics.meta & 63 & 581 & 0.108434 & 256 & 212 & 84 & 22 & 7 \\
hinduism & 825 & 15\,771 & 0.052311 & 2\,597 & 4\,337 & 3\,976 & 2\,876 & 1\,985 \\
hinduism.meta & 89 & 827 & 0.107618 & 196 & 295 & 200 & 98 & 38 \\
history & 843 & 13\,784 & 0.061158 & 2\,071 & 3\,757 & 3\,839 & 2\,436 & 1\,681 \\
history.meta & 68 & 746 & 0.091153 & 340 & 265 & 107 & 31 & 3 \\
homebrew & 415 & 6\,113 & 0.067888 & 1\,393 & 1\,976 & 1\,593 & 803 & 348 \\
homebrew.meta & 50 & 172 & 0.290698 & 67 & 63 & 35 & 4 & 3 \\
hsm & 252 & 3\,898 & 0.064649 & 982 & 1\,272 & 928 & 464 & 252 \\
hsm.meta & 32 & 146 & 0.219178 & 61 & 44 & 37 & 4 & 0 \\
interpersonal & 280 & 3\,890 & 0.071979 & 342 & 1\,030 & 1\,307 & 790 & 421 \\
interpersonal.meta & 76 & 825 & 0.092121 & 214 & 328 & 205 & 62 & 16 \\
iot & 241 & 2\,103 & 0.114598 & 560 & 754 & 504 & 193 & 92 \\
iot.meta & 36 & 136 & 0.264706 & 30 & 74 & 27 & 5 & 0 \\
iota & 148 & 1\,023 & 0.144673 & 300 & 352 & 248 & 84 & 39 \\
iota.meta & 18 & 38 & 0.473684 & 10 & 20 & 8 & 0 & 0 \\
islam & 562 & 13\,792 & 0.040748 & 3\,018 & 4\,990 & 3\,557 & 1\,519 & 708 \\
islam.meta & 103 & 864 & 0.119213 & 240 & 358 & 206 & 47 & 13 \\
italian & 94 & 3\,590 & 0.026184 & 1\,296 & 1\,376 & 636 & 206 & 76 \\
italian.meta & 27 & 151 & 0.178808 & 77 & 57 & 14 & 2 & 1 \\
\bottomrule
\end{tabular}
 \end{table}

\begin{table}[t]
	\centering
	\setlength{\tabcolsep}{2.75pt}
	\caption{Basic statistics of hypergraphs derived from StackExchange sites (continued). 
		$n$ is the number of nodes, $m$ is the number of edges, and columns labeled $i\in[5]$ count edges of cardinality~$i$.}\label{tab:stex-4}
	\begin{tabular}{lrrrrrrrr}
\toprule
 & $n$ & $m$ & $\nicefrac{n}{m}$ & 1 & 2 & 3 & 4 & 5 \\
\midrule
ja.meta.stackoverflow & 74 & 1\,115 & 0.066368 & 193 & 386 & 306 & 204 & 26 \\
ja.stackoverflow & 1\,145 & 28\,785 & 0.039778 & 10\,077 & 10\,518 & 5\,624 & 1\,946 & 620 \\
japanese & 354 & 26\,365 & 0.013427 & 9\,325 & 8\,869 & 5\,191 & 2\,020 & 960 \\
japanese.meta & 75 & 817 & 0.091799 & 270 & 351 & 147 & 43 & 6 \\
joomla & 374 & 7\,190 & 0.052017 & 1\,289 & 2\,221 & 2\,058 & 1\,072 & 550 \\
joomla.meta & 41 & 150 & 0.273333 & 81 & 46 & 19 & 4 & 0 \\
judaism & 1\,264 & 36\,511 & 0.034620 & 3\,753 & 8\,116 & 10\,854 & 8\,042 & 5\,746 \\
judaism.meta & 147 & 1\,455 & 0.101031 & 108 & 576 & 489 & 222 & 60 \\
korean & 118 & 1\,716 & 0.068765 & 767 & 596 & 264 & 69 & 20 \\
korean.meta & 30 & 80 & 0.375000 & 38 & 28 & 8 & 5 & 1 \\
languagelearning & 216 & 1\,287 & 0.167832 & 225 & 466 & 354 & 176 & 66 \\
languagelearning.meta & 52 & 195 & 0.266667 & 31 & 103 & 48 & 12 & 1 \\
latin & 370 & 5\,400 & 0.068519 & 1\,223 & 1\,603 & 1\,371 & 797 & 406 \\
latin.meta & 46 & 192 & 0.239583 & 34 & 80 & 49 & 25 & 4 \\
law & 938 & 23\,649 & 0.039663 & 4\,483 & 7\,573 & 6\,329 & 3\,381 & 1\,883 \\
law.meta & 66 & 499 & 0.132265 & 117 & 216 & 120 & 36 & 10 \\
lifehacks & 140 & 2\,928 & 0.047814 & 1\,024 & 1\,052 & 595 & 190 & 67 \\
lifehacks.meta & 59 & 268 & 0.220149 & 65 & 122 & 72 & 6 & 3 \\
linguistics & 605 & 10\,003 & 0.060482 & 1\,947 & 2\,836 & 2\,556 & 1\,627 & 1\,037 \\
linguistics.meta & 59 & 363 & 0.162534 & 118 & 159 & 58 & 23 & 5 \\
literature & 2\,335 & 5\,614 & 0.415924 & 703 & 1\,621 & 2\,249 & 830 & 211 \\
literature.meta & 63 & 462 & 0.136364 & 56 & 292 & 99 & 15 & 0 \\
magento & 1\,811 & 110\,316 & 0.016416 & 15\,598 & 28\,805 & 32\,671 & 20\,873 & 12\,369 \\
magento.meta & 66 & 575 & 0.114783 & 251 & 227 & 78 & 17 & 2 \\
martialarts & 205 & 2\,199 & 0.093224 & 461 & 696 & 529 & 326 & 187 \\
martialarts.meta & 40 & 218 & 0.183486 & 66 & 97 & 46 & 9 & 0 \\
math.meta & 232 & 9\,169 & 0.025303 & 1\,051 & 3\,485 & 2\,919 & 1\,312 & 402 \\
matheducators & 225 & 3\,360 & 0.066964 & 696 & 1\,118 & 903 & 435 & 208 \\
matheducators.meta & 57 & 255 & 0.223529 & 64 & 119 & 61 & 8 & 3 \\
mathematica & 705 & 85\,069 & 0.008287 & 25\,896 & 31\,653 & 18\,182 & 6\,542 & 2\,796 \\
mathematica.meta & 75 & 914 & 0.082057 & 416 & 341 & 130 & 25 & 2 \\
mathoverflow.net & 1\,530 & 137\,735 & 0.011108 & 20\,381 & 37\,763 & 38\,643 & 24\,597 & 16\,351 \\
mattermodeling & 449 & 2\,422 & 0.185384 & 169 & 547 & 668 & 495 & 543 \\
mattermodeling.meta & 61 & 142 & 0.429577 & 25 & 41 & 29 & 37 & 10 \\
mechanics & 1\,430 & 25\,243 & 0.056649 & 4\,196 & 6\,245 & 7\,592 & 4\,673 & 2\,537 \\
mechanics.meta & 52 & 387 & 0.134367 & 124 & 182 & 66 & 13 & 2 \\
medicalsciences & 1\,435 & 7\,586 & 0.189164 & 1\,423 & 1\,970 & 1\,754 & 1\,261 & 1\,178 \\
medicalsciences.meta & 65 & 501 & 0.129741 & 171 & 191 & 102 & 27 & 10 \\
meta.askubuntu & 196 & 5\,698 & 0.034398 & 1\,625 & 2\,308 & 1\,257 & 397 & 111 \\
meta & 1\,250 & 97\,114 & 0.012871 & 4\,599 & 25\,289 & 34\,007 & 23\,233 & 9\,986 \\
meta.mathoverflow.net & 133 & 1\,687 & 0.078838 & 272 & 601 & 504 & 229 & 81 \\
meta.serverfault & 139 & 2\,173 & 0.063967 & 767 & 799 & 463 & 119 & 25 \\
meta.stackoverflow & 622 & 47\,387 & 0.013126 & 5\,297 & 15\,301 & 15\,792 & 8\,233 & 2\,764 \\
meta.superuser & 207 & 5\,000 & 0.041400 & 1\,010 & 1\,914 & 1\,474 & 510 & 92 \\
monero & 400 & 4\,285 & 0.093349 & 1\,193 & 1\,424 & 969 & 481 & 218 \\
monero.meta & 23 & 85 & 0.270588 & 40 & 26 & 19 & 0 & 0 \\
money & 1\,002 & 36\,187 & 0.027690 & 3\,788 & 8\,036 & 10\,340 & 8\,450 & 5\,573 \\
money.meta & 67 & 672 & 0.099702 & 220 & 260 & 147 & 40 & 5 \\
movies & 4\,537 & 21\,829 & 0.207843 & 4\,857 & 11\,430 & 4\,546 & 877 & 119 \\
movies.meta & 75 & 1\,285 & 0.058366 & 302 & 519 & 391 & 63 & 10 \\
music & 516 & 23\,424 & 0.022029 & 4\,754 & 7\,644 & 6\,370 & 3\,117 & 1\,539 \\
music.meta & 81 & 992 & 0.081653 & 391 & 387 & 166 & 40 & 8 \\
musicfans & 237 & 2\,990 & 0.079264 & 1\,209 & 1\,169 & 465 & 111 & 36 \\
musicfans.meta & 42 & 218 & 0.192661 & 62 & 95 & 38 & 18 & 5 \\
mythology & 303 & 1\,953 & 0.155146 & 484 & 723 & 439 & 215 & 92 \\
\bottomrule
\end{tabular}
 \end{table}

\begin{table}[t]
	\centering
	\setlength{\tabcolsep}{2.75pt}
	\caption{Basic statistics of hypergraphs derived from StackExchange sites (continued). 
		$n$ is the number of nodes, $m$ is the number of edges, and columns labeled $i\in[5]$ count edges of cardinality~$i$.}\label{tab:stex-5}
	\begin{tabular}{lrrrrrrrr}
\toprule
 & $n$ & $m$ & $\nicefrac{n}{m}$ & 1 & 2 & 3 & 4 & 5 \\
\midrule
mythology.meta & 35 & 162 & 0.216049 & 43 & 87 & 31 & 1 & 0 \\
networkengineering & 453 & 15\,624 & 0.028994 & 2\,988 & 4\,240 & 3\,835 & 2\,496 & 2\,065 \\
networkengineering.meta & 53 & 375 & 0.141333 & 192 & 115 & 48 & 17 & 3 \\
opendata & 302 & 5\,990 & 0.050417 & 1\,562 & 2\,002 & 1\,492 & 670 & 264 \\
opendata.meta & 26 & 180 & 0.144444 & 73 & 76 & 30 & 1 & 0 \\
opensource & 203 & 4\,226 & 0.048036 & 845 & 1\,442 & 1\,094 & 528 & 317 \\
opensource.meta & 53 & 225 & 0.235556 & 35 & 109 & 61 & 19 & 1 \\
or & 255 & 2\,865 & 0.089005 & 351 & 809 & 848 & 496 & 361 \\
or.meta & 44 & 114 & 0.385965 & 21 & 61 & 23 & 5 & 4 \\
outdoors & 555 & 5\,908 & 0.093940 & 934 & 2\,017 & 1\,791 & 806 & 360 \\
outdoors.meta & 52 & 512 & 0.101562 & 169 & 276 & 60 & 7 & 0 \\
parenting & 304 & 6\,636 & 0.045811 & 1\,182 & 2\,175 & 1\,873 & 1\,004 & 402 \\
parenting.meta & 61 & 473 & 0.128964 & 96 & 217 & 125 & 31 & 4 \\
patents & 2\,102 & 4\,381 & 0.479799 & 1\,421 & 1\,211 & 879 & 481 & 389 \\
patents.meta & 46 & 167 & 0.275449 & 55 & 69 & 34 & 8 & 1 \\
pets & 289 & 7\,874 & 0.036703 & 781 & 2\,706 & 2\,350 & 1\,305 & 732 \\
pets.meta & 62 & 407 & 0.152334 & 60 & 194 & 112 & 26 & 15 \\
philosophy & 606 & 17\,915 & 0.033826 & 4\,898 & 5\,399 & 4\,079 & 2\,089 & 1\,450 \\
philosophy.meta & 61 & 793 & 0.076923 & 355 & 258 & 127 & 38 & 15 \\
photo & 1\,156 & 25\,961 & 0.044528 & 3\,395 & 6\,960 & 7\,848 & 4\,936 & 2\,822 \\
photo.meta & 107 & 1\,095 & 0.097717 & 289 & 500 & 239 & 60 & 7 \\
physics & 892 & 209\,515 & 0.004257 & 21\,914 & 42\,808 & 53\,150 & 45\,705 & 45\,938 \\
physics.meta & 114 & 3\,228 & 0.035316 & 713 & 1\,085 & 872 & 403 & 155 \\
pm & 283 & 6\,198 & 0.045660 & 1\,379 & 1\,850 & 1\,592 & 870 & 507 \\
pm.meta & 64 & 315 & 0.203175 & 81 & 129 & 73 & 27 & 5 \\
poker & 131 & 2\,051 & 0.063871 & 763 & 659 & 372 & 181 & 76 \\
poker.meta & 29 & 122 & 0.237705 & 74 & 30 & 15 & 3 & 0 \\
politics & 793 & 14\,628 & 0.054211 & 1\,294 & 4\,022 & 4\,663 & 3\,062 & 1\,587 \\
politics.meta & 80 & 1\,067 & 0.074977 & 249 & 436 & 259 & 103 & 20 \\
portuguese & 169 & 2\,349 & 0.071946 & 703 & 898 & 509 & 174 & 65 \\
portuguese.meta & 35 & 137 & 0.255474 & 45 & 61 & 25 & 5 & 1 \\
proofassistants & 223 & 434 & 0.513825 & 80 & 175 & 116 & 42 & 21 \\
proofassistants.meta & 37 & 64 & 0.578125 & 11 & 26 & 18 & 7 & 2 \\
psychology & 401 & 7\,641 & 0.052480 & 1\,632 & 2\,229 & 1\,971 & 1\,115 & 694 \\
psychology.meta & 62 & 557 & 0.111311 & 199 & 237 & 90 & 25 & 6 \\
pt.meta.stackoverflow & 140 & 2\,986 & 0.046885 & 703 & 1\,081 & 775 & 362 & 65 \\
pt.stackoverflow & 2\,936 & 152\,483 & 0.019255 & 28\,143 & 50\,055 & 42\,386 & 21\,287 & 10\,612 \\
puzzling & 209 & 24\,985 & 0.008365 & 6\,912 & 9\,471 & 5\,731 & 2\,020 & 851 \\
puzzling.meta & 98 & 1\,365 & 0.071795 & 351 & 582 & 309 & 97 & 26 \\
quant & 693 & 20\,283 & 0.034167 & 3\,329 & 5\,345 & 5\,392 & 3\,556 & 2\,661 \\
quant.meta & 47 & 252 & 0.186508 & 95 & 115 & 37 & 3 & 2 \\
quantumcomputing & 306 & 7\,823 & 0.039115 & 1\,124 & 2\,585 & 2\,475 & 1\,105 & 534 \\
quantumcomputing.meta & 50 & 187 & 0.267380 & 50 & 73 & 43 & 18 & 3 \\
raspberrypi & 598 & 35\,872 & 0.016670 & 7\,901 & 11\,252 & 9\,351 & 4\,765 & 2\,603 \\
raspberrypi.meta & 61 & 451 & 0.135255 & 213 & 169 & 58 & 8 & 3 \\
retrocomputing & 546 & 4\,976 & 0.109727 & 925 & 1\,694 & 1\,366 & 692 & 299 \\
retrocomputing.meta & 70 & 304 & 0.230263 & 30 & 188 & 56 & 27 & 3 \\
reverseengineering & 347 & 8\,754 & 0.039639 & 1\,878 & 2\,693 & 2\,172 & 1\,249 & 762 \\
reverseengineering.meta & 37 & 150 & 0.246667 & 56 & 62 & 28 & 3 & 1 \\
robotics & 276 & 6\,261 & 0.044082 & 1\,528 & 1\,850 & 1\,519 & 806 & 558 \\
robotics.meta & 39 & 159 & 0.245283 & 52 & 71 & 28 & 8 & 0 \\
rpg & 1\,247 & 46\,635 & 0.026740 & 4\,236 & 12\,463 & 15\,431 & 9\,542 & 4\,963 \\
rpg.meta & 150 & 2\,627 & 0.057099 & 310 & 986 & 844 & 379 & 108 \\
ru.meta.stackoverflow & 242 & 4\,613 & 0.052460 & 445 & 1\,312 & 1\,574 & 979 & 303 \\
rus & 390 & 20\,999 & 0.018572 & 12\,276 & 5\,131 & 2\,341 & 840 & 411 \\
\bottomrule
\end{tabular}
 \end{table}

\begin{table}[t]
	\centering
	\setlength{\tabcolsep}{2.75pt}
	\caption{Basic statistics of hypergraphs derived from StackExchange sites (continued). 
		$n$ is the number of nodes, $m$ is the number of edges, and columns labeled $i\in[5]$ count edges of cardinality~$i$.}\label{tab:stex-6}
	\begin{tabular}{lrrrrrrrr}
\toprule
 & $n$ & $m$ & $\nicefrac{n}{m}$ & 1 & 2 & 3 & 4 & 5 \\
\midrule
rus.meta & 30 & 214 & 0.140187 & 92 & 81 & 37 & 4 & 0 \\
russian & 166 & 4\,516 & 0.036758 & 2\,407 & 1\,337 & 552 & 180 & 40 \\
russian.meta & 37 & 176 & 0.210227 & 80 & 61 & 25 & 7 & 3 \\
salesforce & 2\,085 & 124\,492 & 0.016748 & 22\,537 & 37\,977 & 33\,635 & 19\,220 & 11\,123 \\
salesforce.meta & 79 & 795 & 0.099371 & 412 & 246 & 118 & 18 & 1 \\
scicomp & 346 & 10\,381 & 0.033330 & 1\,905 & 3\,156 & 2\,883 & 1\,566 & 871 \\
scicomp.meta & 48 & 215 & 0.223256 & 75 & 90 & 42 & 8 & 0 \\
scifi & 3\,693 & 69\,344 & 0.053256 & 17\,338 & 26\,498 & 17\,146 & 6\,584 & 1\,778 \\
scifi.meta & 149 & 3\,265 & 0.045636 & 506 & 1\,560 & 889 & 266 & 44 \\
security & 1\,253 & 65\,817 & 0.019038 & 11\,950 & 19\,799 & 18\,266 & 9\,809 & 5\,993 \\
security.meta & 101 & 1\,124 & 0.089858 & 311 & 507 & 242 & 52 & 12 \\
serverfault & 3\,864 & 314\,342 & 0.012292 & 40\,967 & 83\,417 & 92\,763 & 60\,560 & 36\,635 \\
sharepoint & 1\,722 & 99\,911 & 0.017235 & 16\,092 & 27\,312 & 28\,073 & 17\,305 & 11\,129 \\
sharepoint.meta & 78 & 581 & 0.134251 & 206 & 233 & 127 & 14 & 1 \\
sitecore & 362 & 11\,395 & 0.031768 & 5\,106 & 4\,265 & 1\,611 & 342 & 71 \\
sitecore.meta & 24 & 202 & 0.118812 & 40 & 60 & 99 & 3 & 0 \\
skeptics & 682 & 10\,700 & 0.063738 & 2\,227 & 4\,165 & 2\,952 & 1\,042 & 314 \\
skeptics.meta & 100 & 1\,529 & 0.065402 & 528 & 605 & 310 & 77 & 9 \\
softwareengineering & 1\,674 & 61\,392 & 0.027267 & 8\,950 & 17\,773 & 17\,580 & 10\,572 & 6\,517 \\
softwareengineering.meta & 165 & 2\,611 & 0.063194 & 421 & 1\,023 & 776 & 310 & 81 \\
softwarerecs & 962 & 21\,792 & 0.044145 & 3\,090 & 6\,533 & 6\,199 & 3\,723 & 2\,247 \\
softwarerecs.meta & 85 & 654 & 0.129969 & 86 & 297 & 189 & 66 & 16 \\
sound & 1\,224 & 9\,786 & 0.125077 & 2\,122 & 2\,717 & 2\,330 & 1\,624 & 993 \\
sound.meta & 42 & 160 & 0.262500 & 65 & 66 & 25 & 1 & 3 \\
space & 1\,203 & 17\,392 & 0.069170 & 1\,672 & 4\,012 & 4\,924 & 3\,712 & 3\,072 \\
space.meta & 74 & 682 & 0.108504 & 205 & 237 & 150 & 63 & 27 \\
spanish & 274 & 8\,592 & 0.031890 & 2\,276 & 2\,722 & 2\,140 & 1\,010 & 444 \\
spanish.meta & 84 & 498 & 0.168675 & 94 & 216 & 135 & 42 & 11 \\
sports & 261 & 5\,730 & 0.045550 & 926 & 2\,371 & 1\,637 & 609 & 187 \\
sports.meta & 57 & 350 & 0.162857 & 76 & 170 & 82 & 21 & 1 \\
sqa & 462 & 11\,242 & 0.041096 & 2\,263 & 3\,250 & 2\,881 & 1\,705 & 1\,143 \\
sqa.meta & 41 & 211 & 0.194313 & 115 & 71 & 17 & 7 & 1 \\
stackapps & 210 & 2\,756 & 0.076197 & 277 & 858 & 883 & 514 & 224 \\
stats & 1\,572 & 196\,835 & 0.007986 & 19\,622 & 47\,967 & 57\,502 & 41\,443 & 30\,301 \\
stats.meta & 132 & 1\,685 & 0.078338 & 327 & 576 & 491 & 198 & 93 \\
stellar & 115 & 1\,493 & 0.077026 & 585 & 438 & 298 & 109 & 63 \\
stellar.meta & 19 & 31 & 0.612903 & 9 & 14 & 8 & 0 & 0 \\
substrate & 512 & 1\,814 & 0.282249 & 366 & 563 & 491 & 260 & 134 \\
substrate.meta & 40 & 44 & 0.909091 & 6 & 21 & 13 & 2 & 2 \\
superuser & 5\,676 & 480\,854 & 0.011804 & 64\,273 & 127\,561 & 135\,549 & 91\,137 & 62\,334 \\
sustainability & 234 & 2\,012 & 0.116302 & 431 & 713 & 536 & 235 & 97 \\
sustainability.meta & 37 & 151 & 0.245033 & 38 & 75 & 32 & 6 & 0 \\
tex & 2\,035 & 237\,763 & 0.008559 & 60\,247 & 84\,998 & 59\,476 & 23\,747 & 9\,295 \\
tex.meta & 163 & 2\,277 & 0.071585 & 389 & 921 & 671 & 235 & 61 \\
tezos & 210 & 1\,828 & 0.114880 & 567 & 605 & 380 & 180 & 96 \\
tezos.meta & 18 & 32 & 0.562500 & 7 & 15 & 8 & 1 & 1 \\
tor & 218 & 5\,636 & 0.038680 & 1\,888 & 1\,817 & 1\,147 & 464 & 320 \\
tor.meta & 43 & 163 & 0.263804 & 57 & 76 & 25 & 4 & 1 \\
travel & 1\,916 & 45\,040 & 0.042540 & 2\,985 & 8\,914 & 13\,809 & 11\,528 & 7\,804 \\
travel.meta & 99 & 1\,379 & 0.071791 & 293 & 567 & 406 & 98 & 15 \\
tridion & 274 & 7\,234 & 0.037877 & 1\,471 & 2\,758 & 1\,915 & 818 & 272 \\
tridion.meta & 14 & 138 & 0.101449 & 93 & 39 & 6 & 0 & 0 \\
ukrainian & 124 & 2\,094 & 0.059217 & 664 & 873 & 404 & 127 & 26 \\
ukrainian.meta & 33 & 104 & 0.317308 & 21 & 45 & 31 & 6 & 1 \\
unix & 2\,777 & 220\,644 & 0.012586 & 29\,059 & 61\,964 & 66\,657 & 40\,340 & 22\,624 \\
\bottomrule
\end{tabular}
 \end{table}

\clearpage

\begin{table}[t]
	\centering
	\setlength{\tabcolsep}{2.75pt}
	\caption{Basic statistics of hypergraphs derived from StackExchange sites (continued). 
		$n$ is the number of nodes, $m$ is the number of edges, and columns labeled $i\in[5]$ count edges of cardinality~$i$.}\label{tab:stex-7}
	\begin{tabular}{lrrrrrrrr}
\toprule
 & $n$ & $m$ & $\nicefrac{n}{m}$ & 1 & 2 & 3 & 4 & 5 \\
\midrule
unix.meta & 118 & 1\,668 & 0.070743 & 367 & 727 & 407 & 144 & 23 \\
ux & 1\,032 & 31\,459 & 0.032805 & 4\,660 & 8\,934 & 8\,823 & 5\,530 & 3\,512 \\
ux.meta & 94 & 899 & 0.104561 & 273 & 358 & 199 & 54 & 15 \\
vegetarianism & 115 & 677 & 0.169867 & 85 & 233 & 205 & 106 & 48 \\
vegetarianism.meta & 41 & 133 & 0.308271 & 26 & 62 & 32 & 13 & 0 \\
vi & 421 & 12\,558 & 0.033524 & 4\,494 & 4\,802 & 2\,358 & 694 & 210 \\
vi.meta & 35 & 201 & 0.174129 & 63 & 105 & 30 & 3 & 0 \\
video & 327 & 8\,661 & 0.037755 & 2\,705 & 2\,693 & 1\,831 & 882 & 550 \\
video.meta & 41 & 200 & 0.205000 & 63 & 96 & 32 & 8 & 1 \\
webapps & 951 & 33\,202 & 0.028643 & 14\,343 & 11\,667 & 5\,160 & 1\,435 & 597 \\
webapps.meta & 106 & 937 & 0.113127 & 97 & 447 & 311 & 76 & 6 \\
webmasters & 1\,078 & 36\,840 & 0.029262 & 5\,772 & 10\,197 & 10\,531 & 6\,286 & 4\,054 \\
webmasters.meta & 70 & 649 & 0.107858 & 202 & 258 & 135 & 45 & 9 \\
windowsphone & 287 & 3\,440 & 0.083430 & 975 & 1\,257 & 801 & 306 & 101 \\
windowsphone.meta & 44 & 148 & 0.297297 & 47 & 64 & 27 & 8 & 2 \\
woodworking & 244 & 3\,739 & 0.065258 & 1\,129 & 1\,270 & 880 & 347 & 113 \\
woodworking.meta & 34 & 142 & 0.239437 & 69 & 46 & 25 & 2 & 0 \\
wordpress & 702 & 112\,778 & 0.006225 & 27\,669 & 37\,039 & 28\,491 & 13\,228 & 6\,351 \\
wordpress.meta & 82 & 866 & 0.094688 & 381 & 330 & 118 & 30 & 7 \\
workplace & 498 & 30\,369 & 0.016398 & 6\,371 & 9\,325 & 8\,103 & 4\,221 & 2\,349 \\
workplace.meta & 113 & 1\,829 & 0.061782 & 506 & 699 & 447 & 150 & 27 \\
worldbuilding & 675 & 34\,358 & 0.019646 & 2\,958 & 8\,284 & 10\,839 & 7\,267 & 5\,010 \\
worldbuilding.meta & 120 & 2\,032 & 0.059055 & 445 & 901 & 511 & 147 & 28 \\
writing & 391 & 11\,699 & 0.033422 & 2\,456 & 3\,869 & 3\,055 & 1\,557 & 762 \\
writing.meta & 88 & 789 & 0.111534 & 145 & 415 & 173 & 49 & 7 \\
\bottomrule
\end{tabular}
 	\vspace*{12cm}
\end{table}
 
\clearpage

\subsection{Implementation Details}
\label{apx-implementation}

To simplify the computation of Wasserstein distances between adjacent nodes, 
we leverage the following fact about the relevant distances (i.e., transportation costs) between nodes.

\begin{lemma}\label{lem:truncated-distances}
	Given a hypergraph $\hypergraph = (\vertices, \edges)$ and nodes $i,j,k,\ell\in\vertices$ with $i\adjacent j$ as well as $\mu_i(k) > 0$ and $\mu_j(\ell) > 0$, 
	$\dist(k,\ell) \leq 3$.
\end{lemma}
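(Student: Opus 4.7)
The plan is to apply the triangle inequality to the shortest-path metric on $\hypergraph$, after establishing a simple observation about the support of our probability measures. Specifically, I will first note that, by inspection of \cref{eq:orchard:measure:rwnodes}, \cref{eq:orchard:measure:rwedges}, and \cref{eq:orchard:measure:weightedrwedges} (together with the ``stay-put'' mass $\smoothing$ at node $i$), the support of $\mu_i$ is contained in $\{i\} \cup \neighborhood(i)$, regardless of whether we use $\mu^\enrw$, $\mu^\eerw$, or $\mu^\werw$. Consequently, $\mu_i(k) > 0$ forces $\dist(i, k) \leq 1$, and analogously $\mu_j(\ell) > 0$ forces $\dist(j, \ell) \leq 1$.

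Second, I will use the fact that $i \adjacent j$ implies $\dist(i, j) = 1$. Combining these three bounds via the triangle inequality for the metric space $(\hypergraph, \dist)$ yields
\begin{equation*}
  \dist(k, \ell) \leq \dist(k, i) + \dist(i, j) + \dist(j, \ell) \leq 1 + 1 + 1 = 3\;,
\end{equation*}
which is the claim. One could break this into the four cases $k \in \{i\} \cup \neighborhood(i)$ crossed with $\ell \in \{j\} \cup \neighborhood(j)$ for sharper per-case bounds ($1$, $2$, $2$, $3$ respectively), but the uniform bound of $3$ follows directly from the triple triangle inequality.

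There is essentially no obstacle here: the lemma amounts to observing that our random walks move at most one step away from their basepoint, so the walkers starting at adjacent nodes $i$ and $j$ can end up at most three hops apart. The only point requiring any care is making explicit that \emph{all} three probability measures introduced in \cref{prelim:orc} satisfy the support inclusion $\operatorname{supp}(\mu_i) \subseteq \{i\} \cup \neighborhood(i)$, which is immediate from their definitions.
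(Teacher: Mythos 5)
Your proof is correct and follows the same route as the paper's: the paper likewise invokes the triangle inequality $\dist(k,\ell) \leq \dist(k,i) + \dist(i,j) + \dist(j,\ell)$ together with the observation that the probability measures are supported on the closed neighborhood of their basepoint. You are in fact slightly more careful than the paper in spelling out the support inclusion and in writing the final bound as an inequality rather than an equality.
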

\begin{proof}
	By the triangle inequality and the definition of our probability measures, we have
	\begin{align*}
		\dist(k,\ell) \leq \dist(k,i) + \dist(i,j) + \dist(j,\ell) = 3\;.
	\end{align*}
\end{proof}

Furthermore, we speed up the computation of Wasserstein distances by exploiting the following observation to reduce each instance to its smallest equivalent instance.

\begin{lemma}
	Given a hypergraph $\hypergraph = (\vertices, \edges)$ and nodes $i,j\in\vertices$ with $i\adjacent j$, 
	if $\mu_i(k) = \mu_j(k)$ for some node $k\in\vertices$, 
	then $\wasserstein_1(\mu_i,\mu_j) = \wasserstein_1(\mu^{-k}_i,\mu^{-k}_j)$, 
	where $\mu^{-k}_i$ is defined as 
	\begin{align*}
		\mu^{-k}_i(j) \defeq \begin{cases}
			0&j = k\\
			\mu_i(j)&j\neq k\;.
		\end{cases}
	\end{align*}
\end{lemma}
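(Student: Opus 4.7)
The plan is to prove the equality by showing both inequalities via coupling manipulations. Set $c \defeq \mu_i(k) = \mu_j(k)$, and recall that $\wasserstein_1(\mu,\nu) = \inf_\pi \sum_{x,y} d(x,y) \pi(x,y)$ over couplings $\pi$ with the prescribed marginals; this extends verbatim to equal-mass non-probability measures like $\mu_i^{-k}$ and $\mu_j^{-k}$, both of which have total mass $1-c$.

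For the easy direction, $\wasserstein_1(\mu_i,\mu_j) \leq \wasserstein_1(\mu_i^{-k},\mu_j^{-k})$, I would take any coupling $\pi'$ of $(\mu_i^{-k},\mu_j^{-k})$ and extend it to a coupling $\pi$ of $(\mu_i,\mu_j)$ by setting $\pi(x,y) = \pi'(x,y)$ for $(x,y) \neq (k,k)$ and $\pi(k,k) = c$. The marginal checks are immediate, and since $\dist(k,k) = 0$, the transport cost is unchanged.

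For the nontrivial direction, $\wasserstein_1(\mu_i^{-k},\mu_j^{-k}) \leq \wasserstein_1(\mu_i,\mu_j)$, I would take any coupling $\pi$ of $(\mu_i,\mu_j)$ and modify it so that $\pi(k,k) = c$ without increasing the cost; then subtracting the $c$ units of self-transport at $k$ yields a coupling of $(\mu_i^{-k},\mu_j^{-k})$ with identical cost, proving the inequality. The key modification: if $\pi(k,k) < c$, then because both $k$-marginals of $\pi$ equal $c$, there exist $\ell \neq k$ with $\pi(k,\ell) > 0$ and $m \neq k$ with $\pi(m,k) > 0$. Letting $\varepsilon = \min\{\pi(k,\ell),\pi(m,k)\}$ and rerouting via $\pi(k,\ell) \mapsto \pi(k,\ell)-\varepsilon$, $\pi(m,k) \mapsto \pi(m,k)-\varepsilon$, $\pi(k,k) \mapsto \pi(k,k)+\varepsilon$, $\pi(m,\ell) \mapsto \pi(m,\ell)+\varepsilon$ preserves both marginals and changes the cost by $\varepsilon\big(\dist(m,\ell) - \dist(k,\ell) - \dist(m,k)\big) \leq 0$ by the triangle inequality on the shortest-path metric.

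The main technical point is ensuring that the rerouting procedure terminates with $\pi(k,k) = c$. Since each step either empties some cell $\pi(k,\ell)$ or some cell $\pi(m,k)$ with $\ell,m \neq k$, and the support of $\pi$ is finite, only finitely many steps are required; alternatively, one can appeal to compactness of the set of couplings and pick the coupling that maximizes $\pi(k,k)$ among cost-optimal couplings, then observe that the above rerouting would strictly increase $\pi(k,k)$ without increasing cost, contradicting maximality unless $\pi(k,k) = c$. Either route cleanly yields the claim.
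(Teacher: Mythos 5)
Your proof is correct and follows essentially the same route as the paper's: both arguments reroute transported mass via the triangle inequality $\dist(m,\ell)\leq\dist(m,k)+\dist(k,\ell)$ to obtain an optimal coupling in which all mass at $k$ stays at $k$, so that deleting the self-transport at $k$ leaves the cost unchanged. Your version is somewhat more careful than the paper's (explicitly verifying both inequalities and the termination of the rerouting), but the key idea is identical.
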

\begin{proof}
	If $\mu_i(k) = \mu_j(k) = 0$, the claim holds trivially.
	Otherwise, $\mu_i(k) = \mu_j(k) = \beta > 0$.
	In this case, let $C^*$ be an optimal coupling between $\mu_i$ and $\mu_j$.
	If the probability mass allocated to $k$ by $\mu_i$ does not get moved at all in $C^*$, 
	it contributes $0$ to $\wasserstein_1(\mu_i,\mu_j)$, 
	and we are done.
	Therefore, assume otherwise.
	Then there exist nodes $p,q\in\vertices$ such that probability mass gets moved from $p$ to $k$ and from $k$ to $q$ in $C^*$.
	By the triangle inequality, 
	$\dist(p,q) \leq \dist(p,k) + \dist(k,q)$,
	and as $\dist(k,k) = 0$, 
	the cost of moving that mass directly from $p$ to $q$ and keeping all mass at $k$ 
	cannot be larger than the cost of moving the  mass from $p$ to $k$ and from $k$ to $q$. 
	Hence, we can modify $C^*$ such that the mass allocated to $k$ by $\mu_i$ does not get moved at all without increasing the coupling cost.
	Thus, there always exists an optimal coupling in which all mass at $k$ remains at $k$,
	and the claim follows.
\end{proof} 
\clearpage

\subsection{Further Results}
\label{apx-results}

Here, we showcase further results to support and supplement the exposition in the main paper.

\paragraph{Q1 Parametrization.}
Expanding the discussion on \ourmethod parametrizations, 
\cref{fig:parameter-violins} shows the distributions of edge curvatures and edge-averaged node curvatures for two hypergraphs from the \dblpv collection, 
representing top conferences in machine learning and theoretical computer science, respectively. 
The figure highlights once more the consistently concentrating effect of increasing $\smoothing$, 
and it elucidates the differential effects of moving from maximum aggregation (left parts of the split violins) to mean aggregation (right parts of the split violins), 
from almost no shifts to large shifts in probability mass (compare, e.g., \cref{fig:parameter-violins:tcs}, top right panel, with \cref{fig:parameter-violins:tcs}, bottom left panel). 
\cref{fig:parameter-violins} might convey the impression that, 
other parameters being equal, 
the distributions of curvatures based on $\mu^\enrw$ and $\mu^\werw$ are more similar to each other than to $\mu^\eerw$. 
This does not hold in general, however, as demonstrated for \ndcpc in \cref{fig:ndcpc-violins}, 
where node curvature distributions based on $\mu^\werw$ are more similar to those based on $\mu^\eerw$ than to the node curvature distributions based on $\mu^\enrw$.
Comparing \cref{fig:ndcpc-violins} to \cref{fig:ndcai-violins} (\ndcai), 
we further observe that rather similar distributions of edge curvature and directional curvature can be accompanied by rather different distributions of edge-averaged and direction-averaged node curvatures, 
even for hypergraphs originating from the same domain.
Finally, when visualizing curvatures for hypergraphs in the same collection or across collections with related semantics (\cref{fig:aps-vcout-violins}), 
we can identify several distinct prototypical shapes of curvature distributions and relationships between curvatures based on different probability measures.

\begin{figure}[tbp]
	\centering
	\begin{subfigure}[t]{0.9\linewidth}
		\centering
		\includegraphics[width=\linewidth]{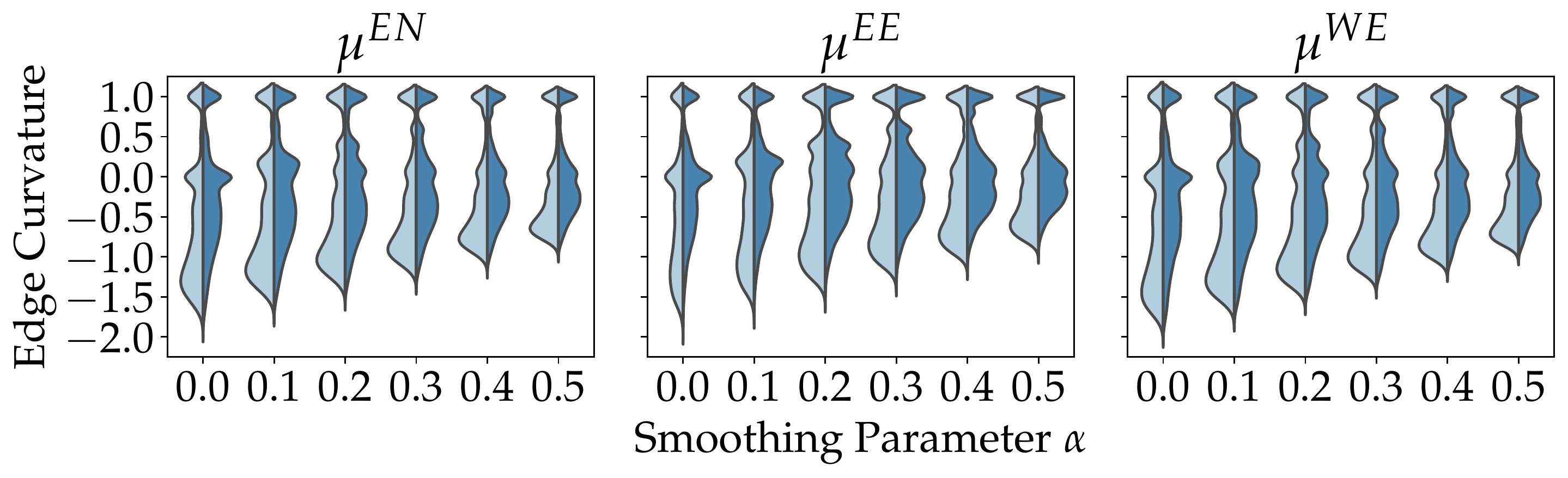}
		\includegraphics[width=\linewidth]{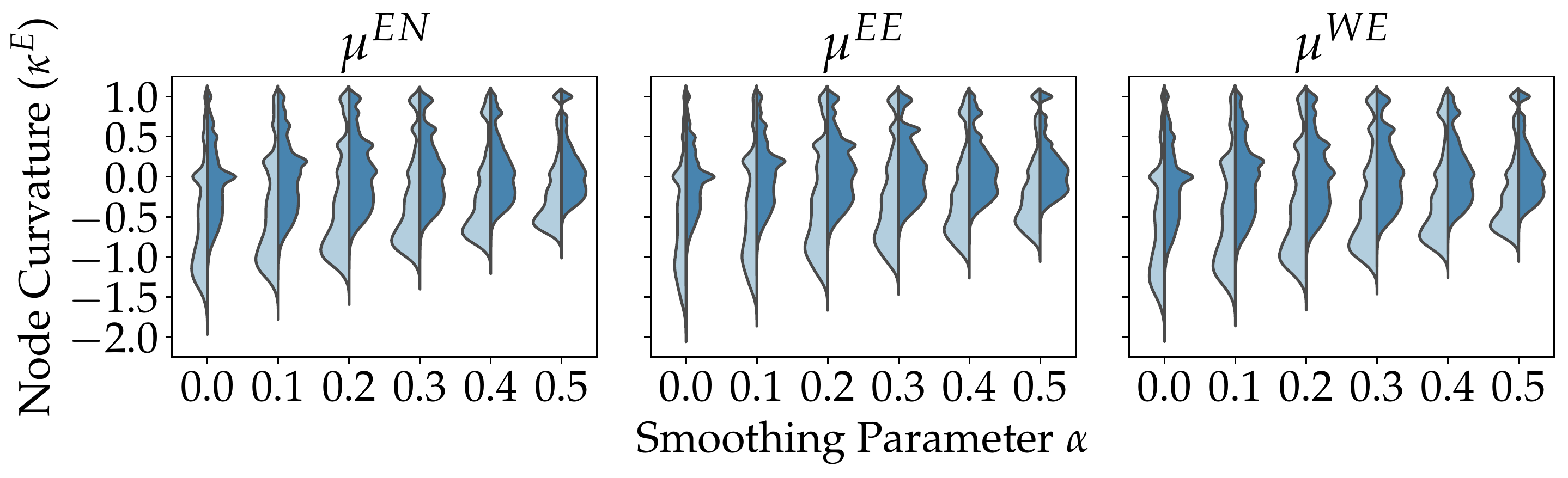}
		\caption{Top Conferences in Machine Learning}\label{fig:parameter-violins:ml}
	\end{subfigure}
	\begin{subfigure}[t]{0.9\linewidth}
		\centering
		\includegraphics[width=\linewidth]{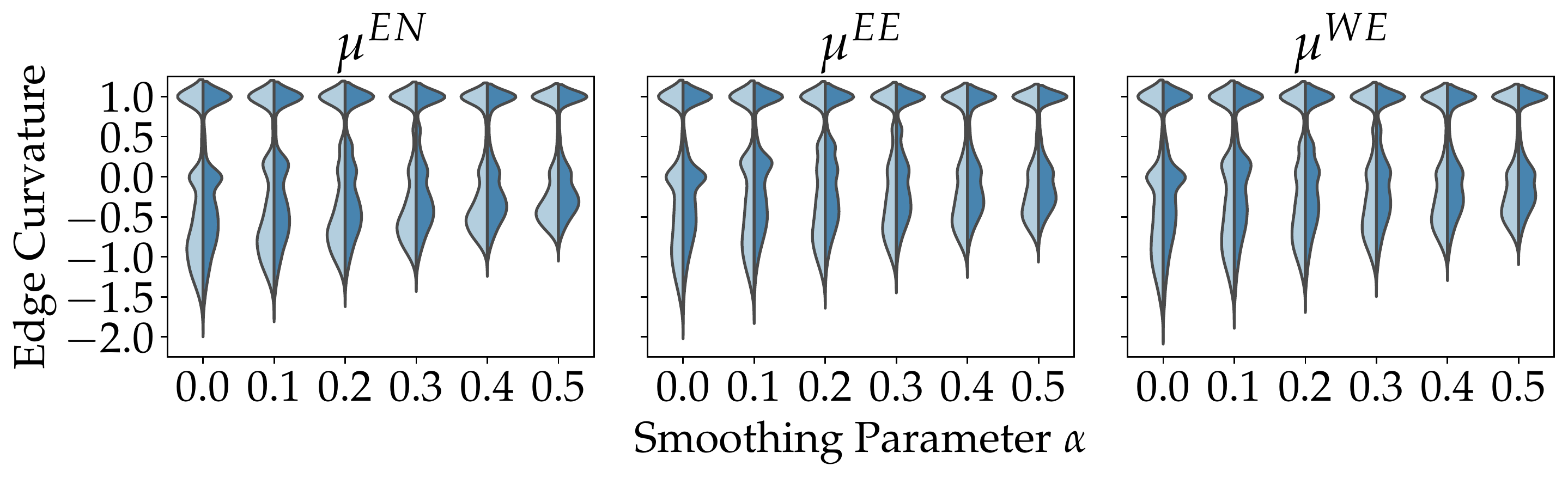}
		\includegraphics[width=\linewidth]{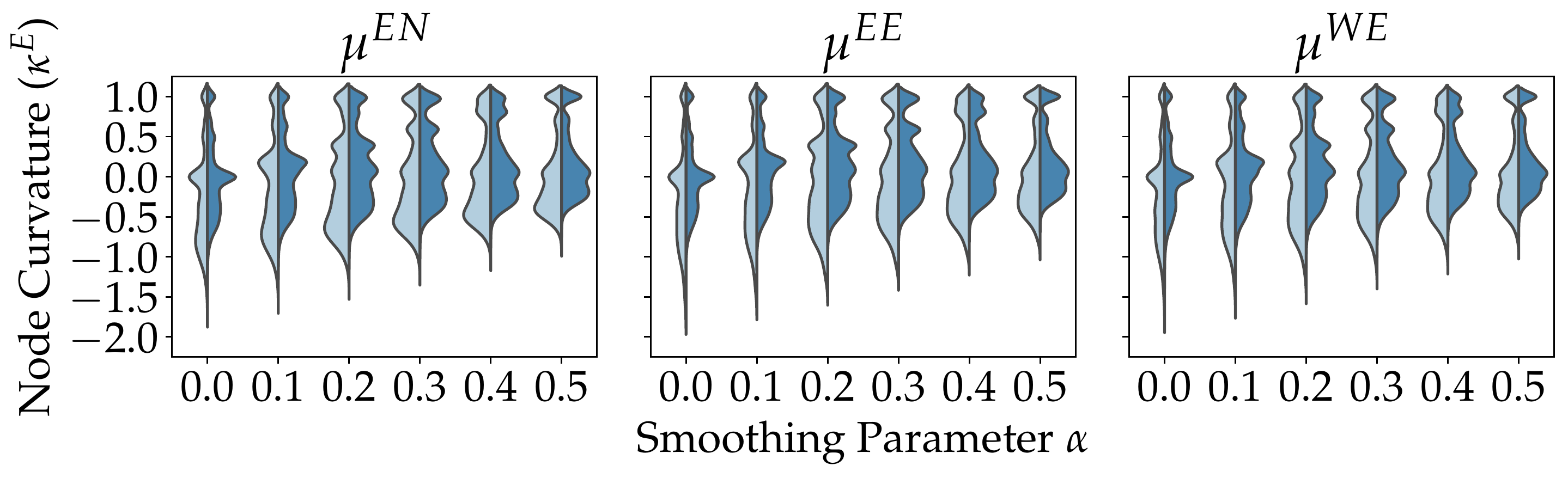}
		\caption{Top Conferences in Theoretical  Computer Science}\label{fig:parameter-violins:tcs}
	\end{subfigure}
	\caption{\ourmethod curvatures are non-redundant.
		We show distributions of \ourmethod edge curvatures (top) and edge-averaged node curvatures (bottom) 
		using probability measures $\mu^\enrw$, $\mu^\eerw$, and $\mu^\werw$ 
		with smoothing $\alpha$,
		for the aggregation functions $\aggregation_\maxi$ (light blue) and $\aggregation_\mean$ (dark blue) 
		on \dblpv hypergraphs representing top conferences in machine learning and in theoretical computer science.
	}\label{fig:parameter-violins}
\end{figure}

\begin{figure}[tbp]
	\centering
	\begin{subfigure}[t]{0.9\linewidth}
		\centering
		\includegraphics[width=\linewidth]{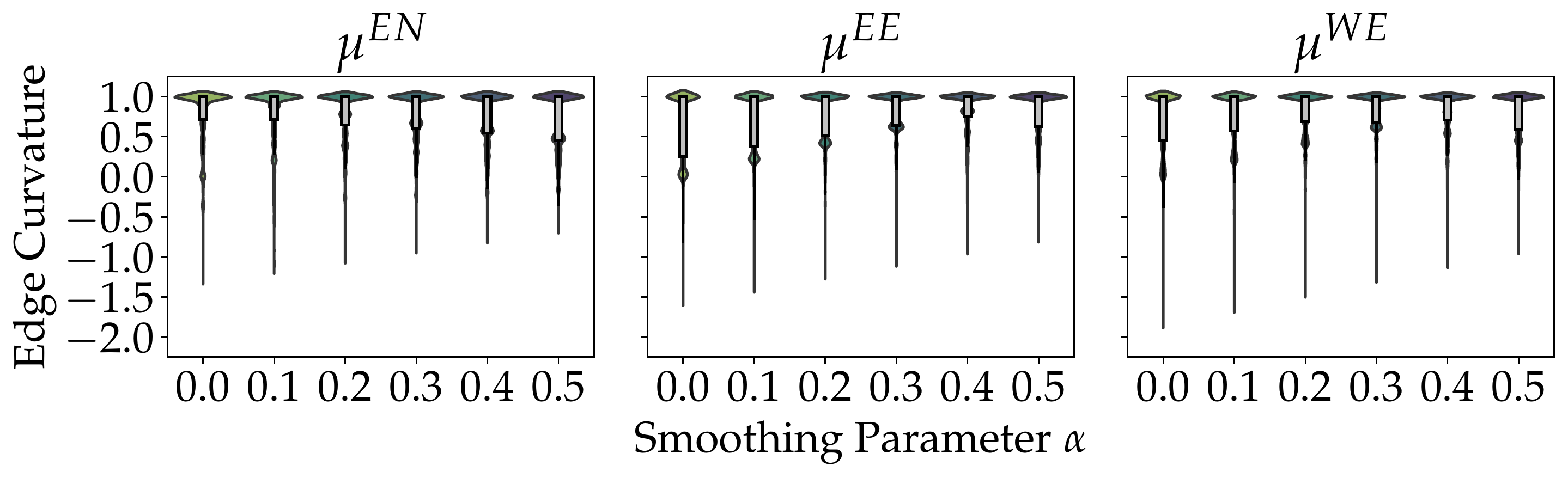}
		\includegraphics[width=\linewidth]{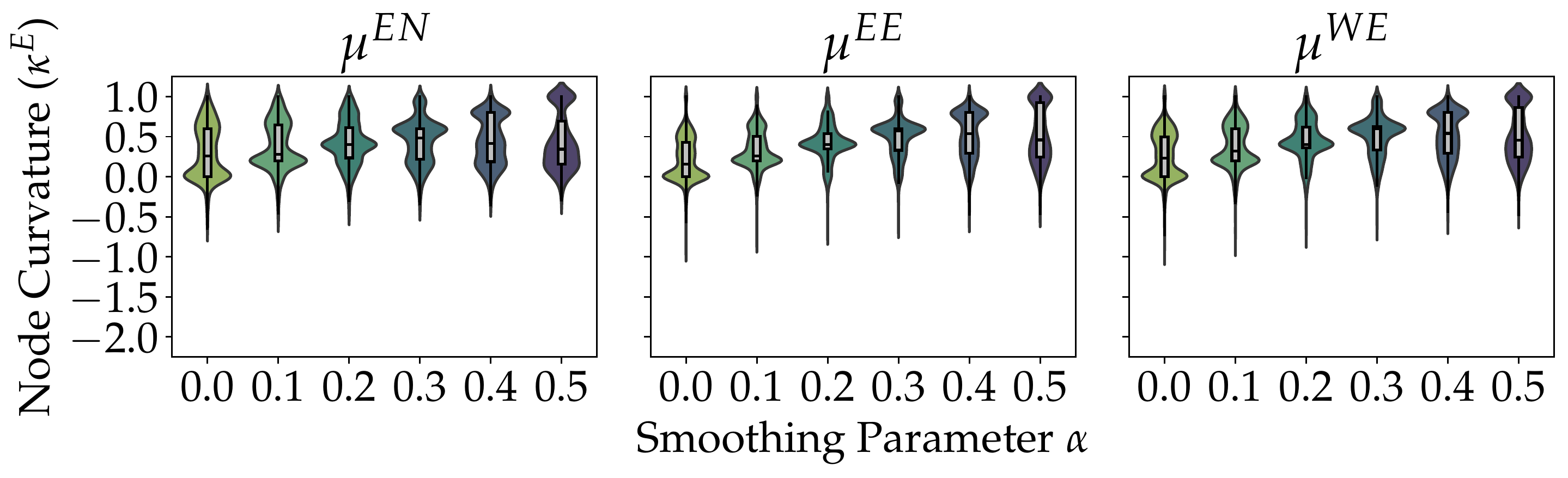}
		\includegraphics[width=\linewidth]{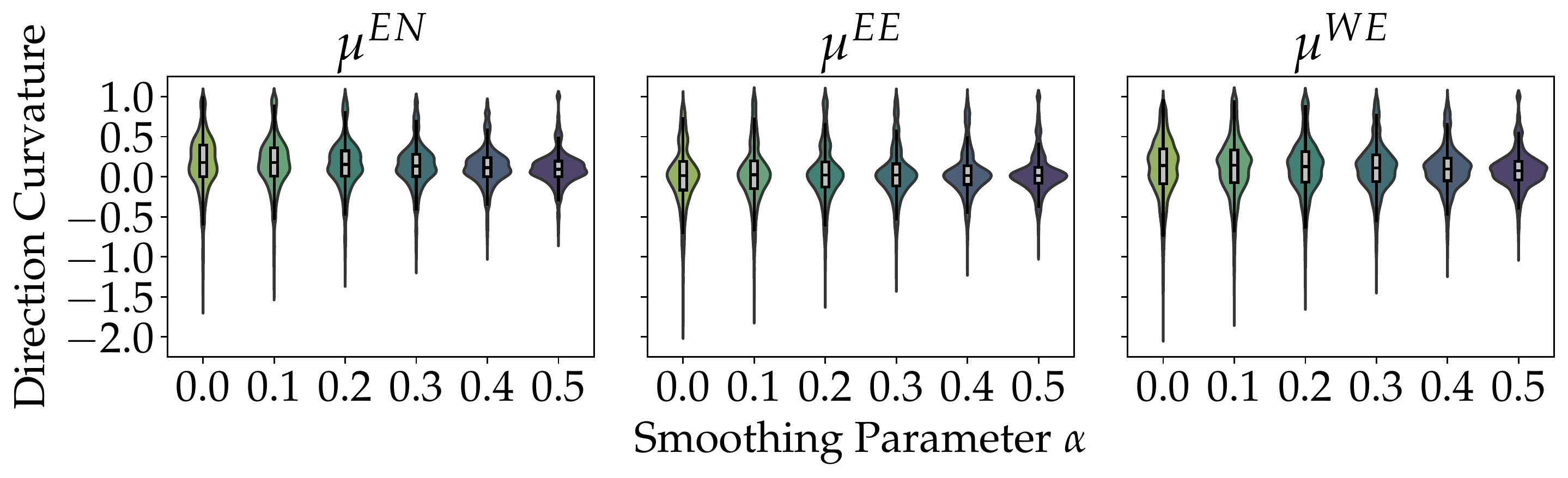}

		\includegraphics[width=\linewidth]{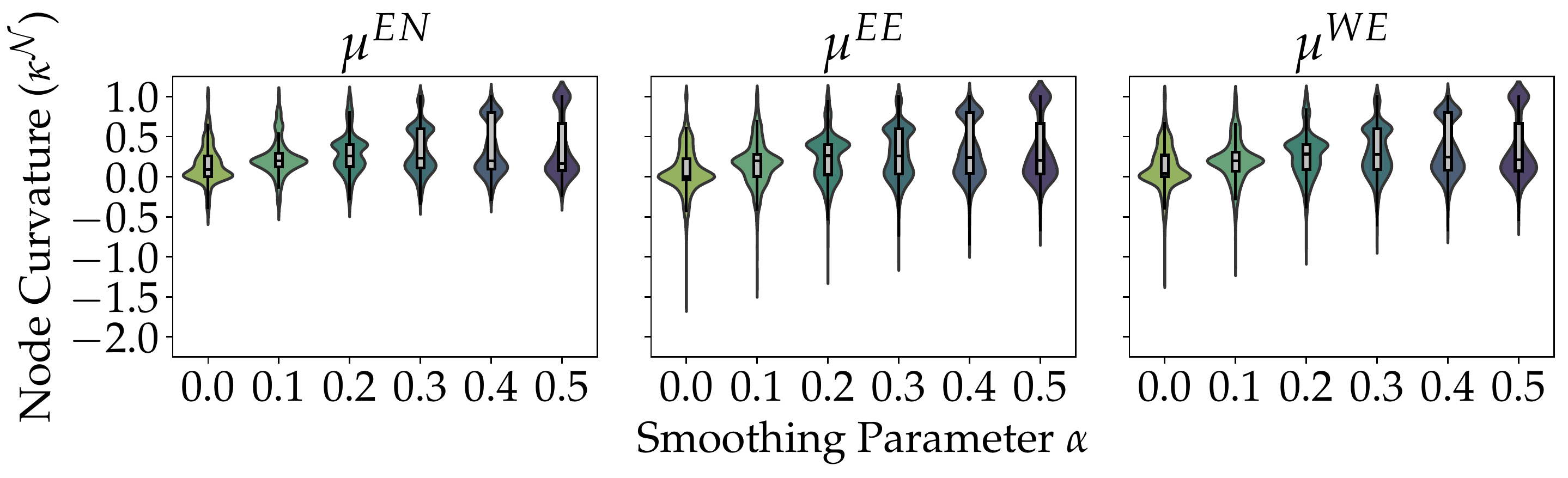}
		\subcaption{\ndcpc
		}\label{fig:ndcpc-violins}
	\end{subfigure}\caption{Hypergraphs with similar distributions of one curvature type may differ in their distributions of other curvature types.
	We show \ourmethod curvatures computed using $\aggregation_\mean$, for all curvature types, probability measures, and $\smoothing \in \{0.0, 0.1, 0.2, 0.3, 0.4, 0.5\}$. (Figure continues on next page.)
}\label{fig:ndc-violins}
\end{figure}

\begin{figure}
  \ContinuedFloat
  \centering
	\begin{subfigure}[t]{0.9\linewidth}
		\centering
		\includegraphics[width=\linewidth]{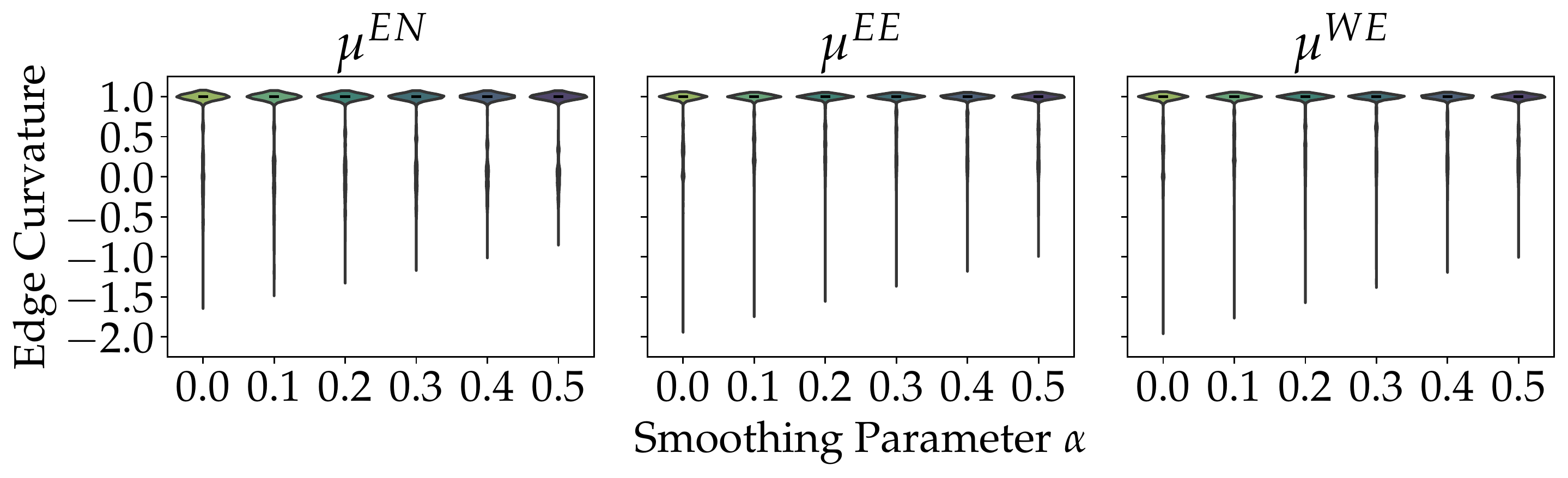}
		\includegraphics[width=\linewidth]{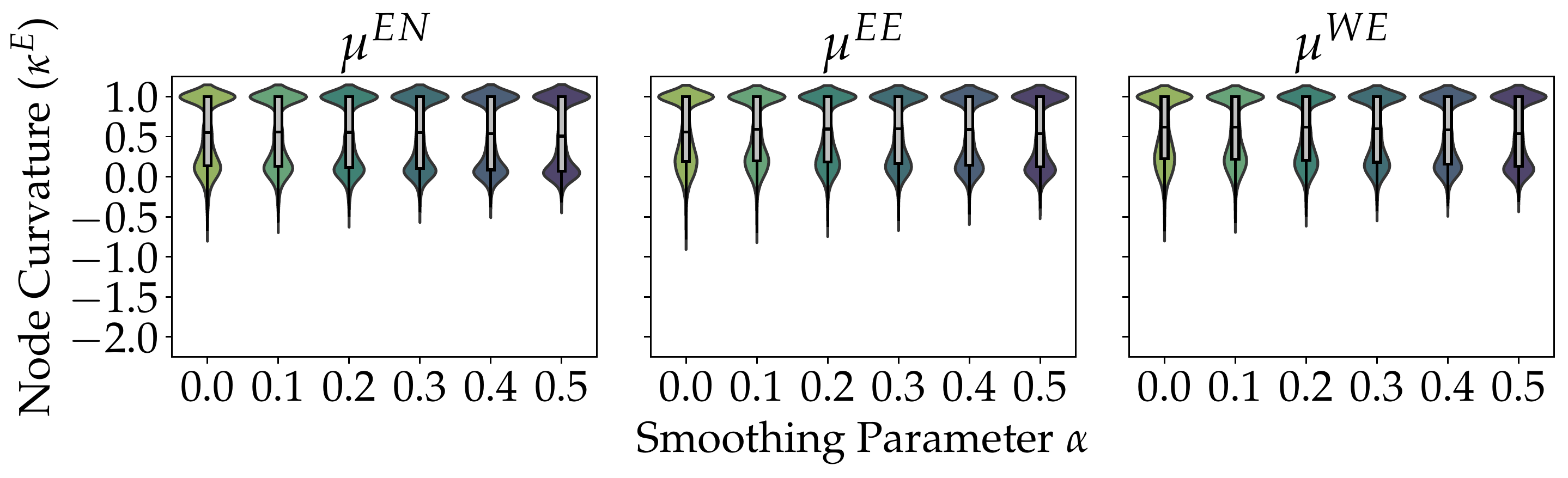}
		\includegraphics[width=\linewidth]{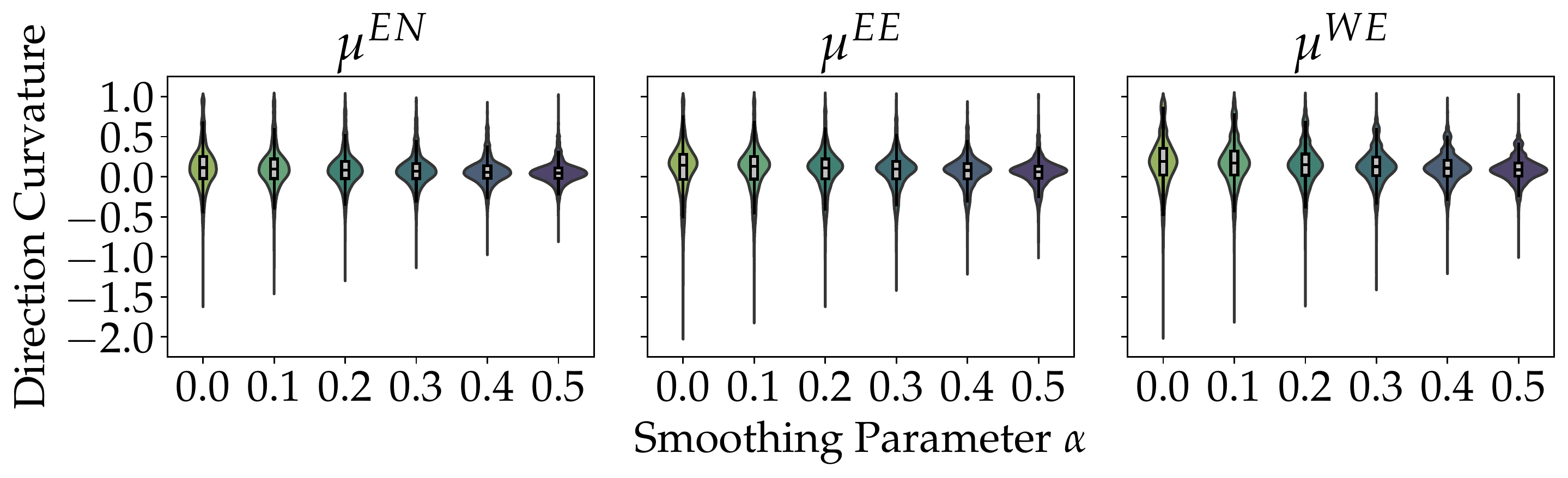}
		\includegraphics[width=\linewidth]{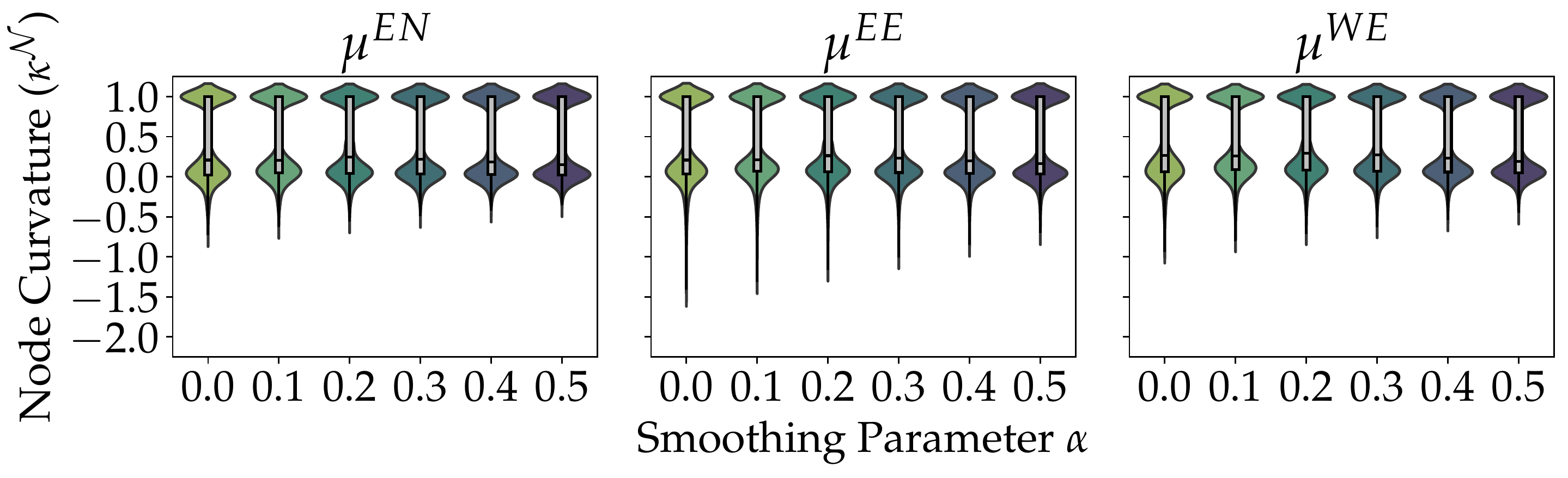}
		\subcaption{\ndcai
		}\label{fig:ndcai-violins}
	\end{subfigure}
  \caption{Hypergraphs with similar distributions of one curvature type may differ in their distributions of other curvature types.
    We show \ourmethod curvatures computed using $\aggregation_\mean$, for all curvature types, probability measures, and $\smoothing \in \{0.0, 0.1, 0.2, 0.3, 0.4, 0.5\}$.
  (Figure continued from previous page.)
  }
\end{figure}

\begin{figure}[tbp]
	\centering
	\begin{subfigure}[t]{0.8\linewidth}
		\includegraphics[width=\linewidth]{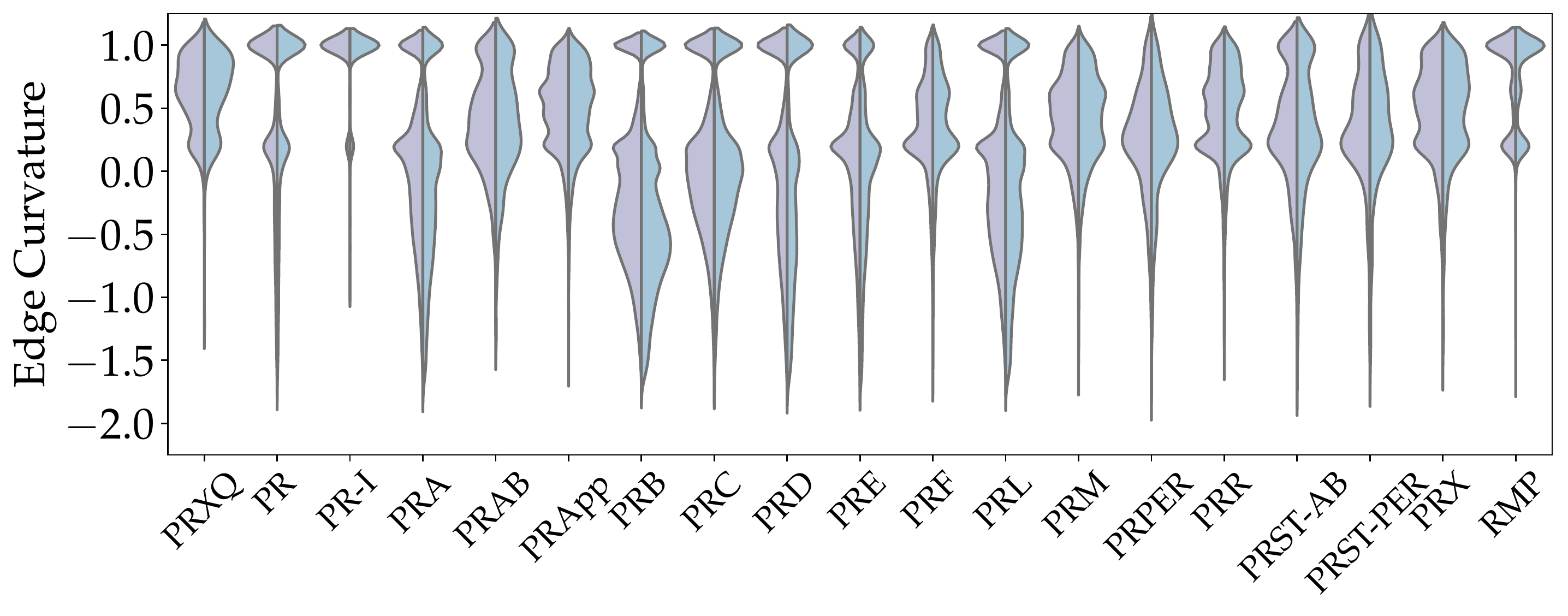}
	\end{subfigure}
	\begin{subfigure}[t]{0.8\linewidth}
		\includegraphics[width=\linewidth]{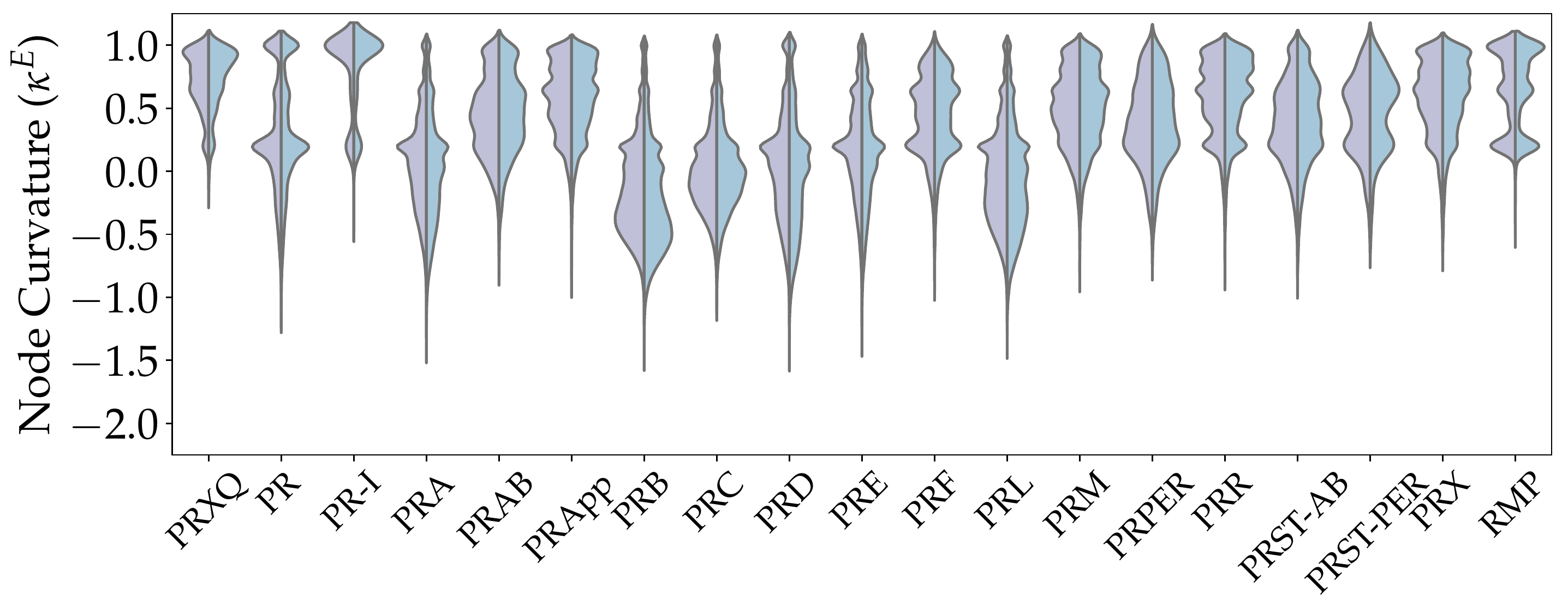}\vspace*{-6pt}
		\subcaption{\apsva}
	\end{subfigure}
  \vspace*{18pt}

	\begin{subfigure}[t]{0.8\linewidth}
		\includegraphics[width=\linewidth]{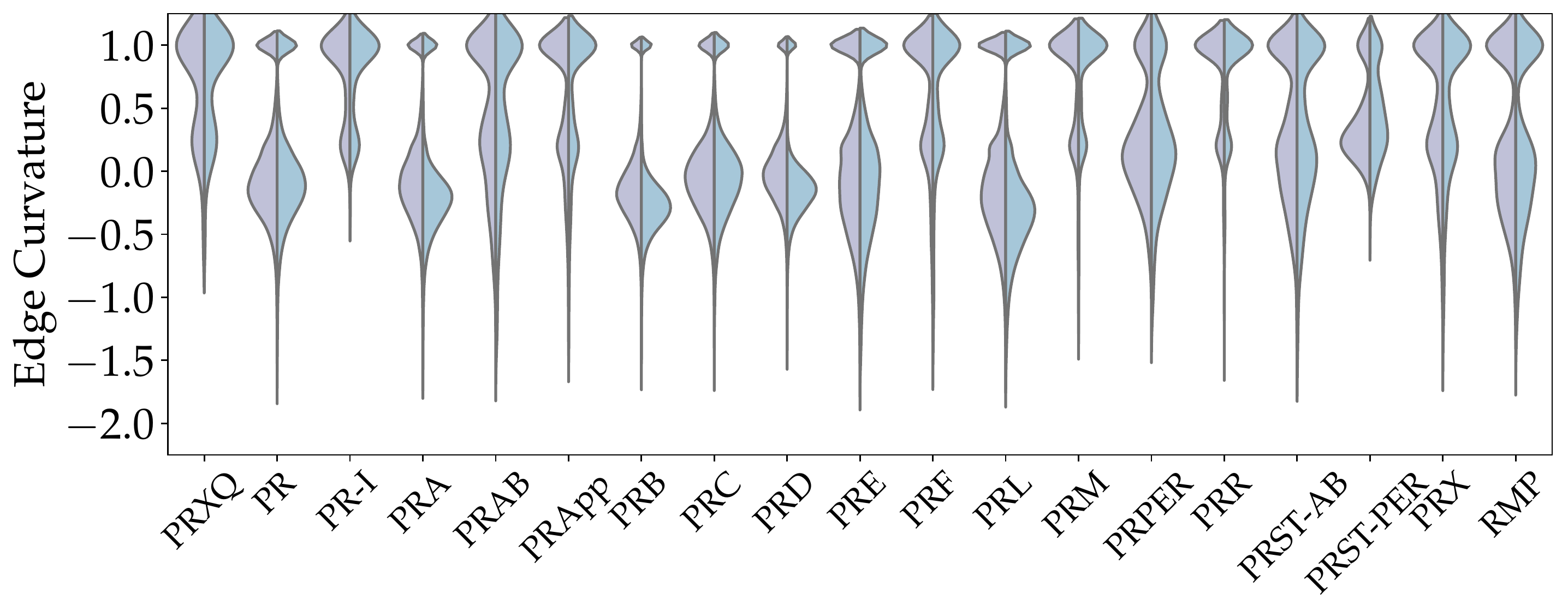}
	\end{subfigure}
	\begin{subfigure}[t]{0.8\linewidth}
		\includegraphics[width=\linewidth]{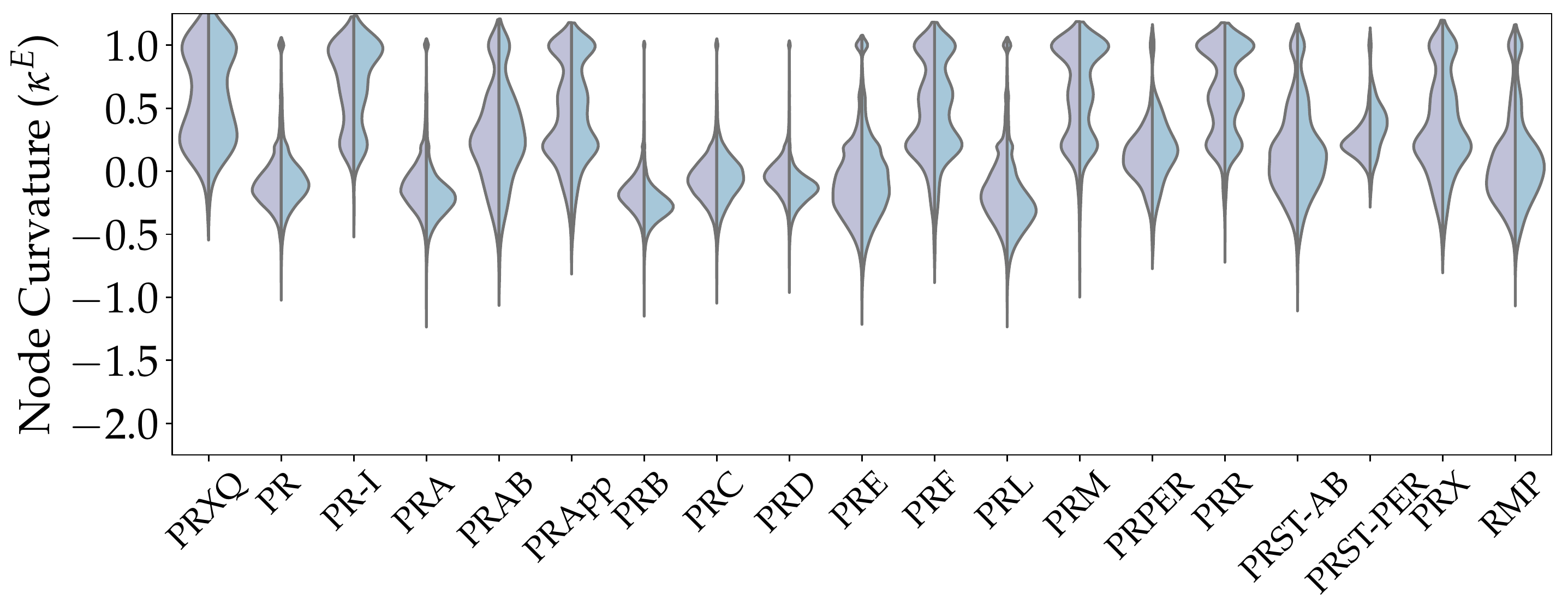}\vspace*{-6pt}
		\subcaption{\apsvcout}
	\end{subfigure}
	\caption{\ourmethod curvature distributions within the same collection and across semantically related collections exhibit prototypical shapes, 
		accompanied by varying types of relationships between probability measures.
		We show distributions of \ourmethod edge curvatures (top) and edge-averaged node curvatures (bottom) 
		computed using $\smoothing = 0.1$ and $\aggregation_\mean$, for $\mu^\eerw$ (violet) and $\mu^\werw$ (blue), 
		for all hypergraphs in \apsva and \apsvcout. 
		Recall that the edges in \apsva and \apsvcout as well as the nodes in \apsvcout represent essentially the same set of APS papers, 
		but in \apsva, they connect co-authors, 
		and in \apsvcout, they connect co-cited papers (edges) or are connected by citing papers (nodes).
	}\label{fig:aps-vcout-violins}
\end{figure}

\paragraph{Q2 Hypergraph Exploration.}

Extending the discussion of individual hypergraph exploration in the main paper, 
we focus on a case study of the citation hypergraph of the journal Physical Review E (PRE),
which regularly publishes, inter alia, interdisciplinary work on graphs and networks. 
In this hypergraph, which has 45\,504 nodes and 52\,574 edges,
nodes represent PRE articles \emph{cited} by at least one other PRE article, edges represent PRE articles \emph{citing} at least one other PRE article, and each edge $i$ comprises the nodes $j$ cited by the paper corresponding to $i$. 
Therefore, the \emph{edge} curvature of a (citing) paper $i$ can be interpreted as an indicator of its \emph{breadth of content}:
The more \emph{positive} the edge curvature,
the stronger the general tendency of the papers jointly cited by paper $i$ to be cited together, suggesting that these papers are topically related.
Similarly, the \emph{node} curvature of a (cited) paper $j$ can be interpreted as an indicator of its \emph{breadth of impact}: 
The more \emph{negative} the node curvature, 
the more diversely the paper has been cited in the literature.

With these interpretations in mind, 
we compute all curvatures for the PRE citation hypergraph, using $\smoothing=0.1$, $\mu^\werw$, and $\aggregation_\mean$.
We find that for all 54 articles with at least 100 citations (top articles), 
the edge-averaged node curvature is larger than the direction-averaged node curvature, which is always negative,
although only 36\% of all PRE articles exhibit this feature combination. 
This matches the intuition that from highly cited articles, the literature should diverge in many different directions.
At the same time, we observe that curvatures span a considerable range, even among top articles. 
In \cref{tab:aps-pre-exploration}, we record the top articles with extreme curvature values, 
and in \cref{fig:pairplot-pre}, 
we display the pairwise relationships between curvature features and other local features for \emph{all} PRE articles. 
In line with the interpretations sketched above, the top article with the largest node curvatures is a classic reference for community detection in the highly integrated field of network science, 
whereas the articles with the smallest node curvatures address topics relevant to a broader range of approaches to collective phenomena in many-body systems (which are the focus of PRE).

\begin{table}[t]
	\caption{Top articles display varying relationships between different curvature values.
		We list the PRE articles that, out of all PRE articles cited at least 100 times, 
		exhibit the most extreme curvature-related values.
	}\label{tab:aps-pre-exploration}
	\tiny
	\setlength{\tabcolsep}{4pt}
\begin{tabular}{p{0.085\linewidth}lrrrrp{0.34\linewidth}}
	\toprule
	{} &                   DOI & $\curvature^{\edges}(i)$ &  $\curvature^{\neighborhood}(i)$ &  $\Delta(\curvature(i))$ &   $\curvature(e)$ &                                                                                                                    Title \\
\midrule
$\max\curvature^{\edges}(i)$, $\max\curvature^{\neighborhood}(i)$ &  10.1103/PhysRevE.70.066111 &              0.220092 &                    -0.006001 &               0.226093 &        0.425336 &                                                                       Finding community structure in very large networks \\
$\min\curvature^{\edges}(i)$               &     10.1103/PhysRevE.47.851 &             -0.319638 &                    -0.555431 &               0.235793 &             0 &                                                                   Scale-invariant motion in intermittent chaotic systems \\
$\min\curvature^{\neighborhood}(i)$          &     10.1103/PhysRevE.48.R29 &             -0.241216 &                    -0.704752 &               0.463536 &             0 &                                                                              Extended self-similarity in turbulent flows \\
$\max\Delta(\curvature(i))$               &  10.1103/PhysRevE.64.056101 &             -0.131542 &                    -0.668266 &               0.536724 &        0.038477 &  Determining the density of states for classical statistical models: A random walk algorithm to produce a flat histogram \\
$\min\Delta(\curvature(i))$               &  10.1103/PhysRevE.74.016118 &             -0.015495 &                    -0.191193 &               0.175697 &       -0.156824 &                                                                         Amorphous systems in athermal, quasistatic shear \\
$\max\curvature(e)$                              &     10.1103/PhysRevE.57.610 &              0.129557 &                    -0.251635 &               0.381192 &        0.610123 &                                                                Topological defects and interactions in nematic emulsions \\
$\min\curvature(e)$                          &  10.1103/PhysRevE.64.016706 &             -0.191094 &                    -0.552908 &               0.361815 &       -0.644446 &                                                                  Fast Monte Carlo algorithm for site or bond percolation \\
\bottomrule
\end{tabular}
\end{table}

\begin{figure}[t]
	\centering
	\includegraphics[width=\linewidth]{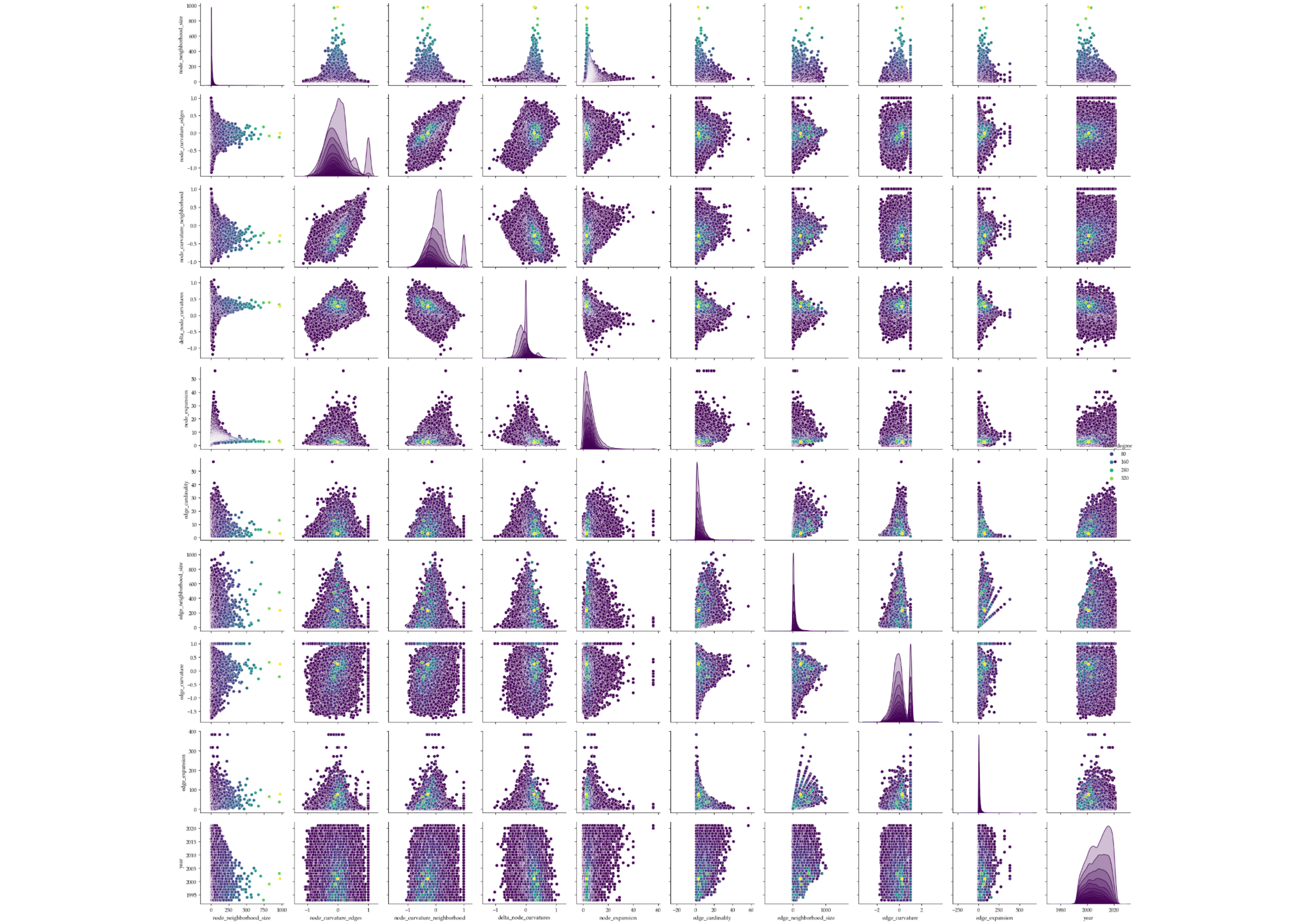}
	\caption{Highly cited articles have distinct curvature distributions.
		Pairwise relationships between (left-to-right, top-to-bottom) node neighborhood size, edge-averaged node curvature, direction-averaged node curvature, curvature delta, node expansion~$\defeq \nicefrac{\deg(i)}{|\neighborhood(i)|}$,
		edge cardinality, edge neighborhood size, edge curvature, edge expansion~$\defeq \nicefrac{\deg(e)}{|\neighborhood(e)|}$, and (as an additional metadata feature) publication year, 
		for all PRE articles cited at least once by another PRE article, 
		colored by node degree (number of citations within PRE), where brighter colors signal larger node degrees.
	 }\label{fig:pairplot-pre}
\end{figure}

\clearpage

\paragraph{Q3 Hypergraph Learning.}
Continuing the discussion of node clustering in hypergraphs abridged in the main paper, 
we again focus on the citation hypergraph corresponding to articles from Physical Review E (PRE). 
We experiment with a variety of features, clustering methods, and combinations thereof, 
including both classic and recent clustering methods, such as SPONGE \citep{cucuringu19sponge}.
We aim for 17 clusters, which is the number of ``disciplines'' present in the APS metadata (unfortunately, disciplines are only assigned to more recent articles, and hence, cannot serve as ground truth). 
As depicted in \cref{fig:aps-cout-nodes},
we find that clusterings generated using curvatures as features differ radically from clusterings generated using other local features.
To evaluate the semantic sensibility of our clusterings in the absence of a suitable ground truth, 
we leverage the metadata associated with PRE articles. 
In particular, we concatenate the titles of the articles grouped in each of our clusters into ``documents'', and consider the set of all clusters as our ``document collection'', 
to then identify characteristic terms for each cluster using TF-IDF feature extraction. 
We observe that clusterings based on \ourmethod features tend to be more thematically coherent than clusterings based on other local features. 
As illustrated in \cref{tab:aps-tfidf}, \ourmethod features tend to separate paper titles well by topic (many frequently occurring terms are associated with only very few clusters, and the terms grouped together characterize specific subfields of the physics of collective phenomena covered by PRE), 
whereas clusters based on non-\ourmethod features are much less topically focused.

\begin{figure}[t]
	\centering
	\includegraphics[width=\linewidth]{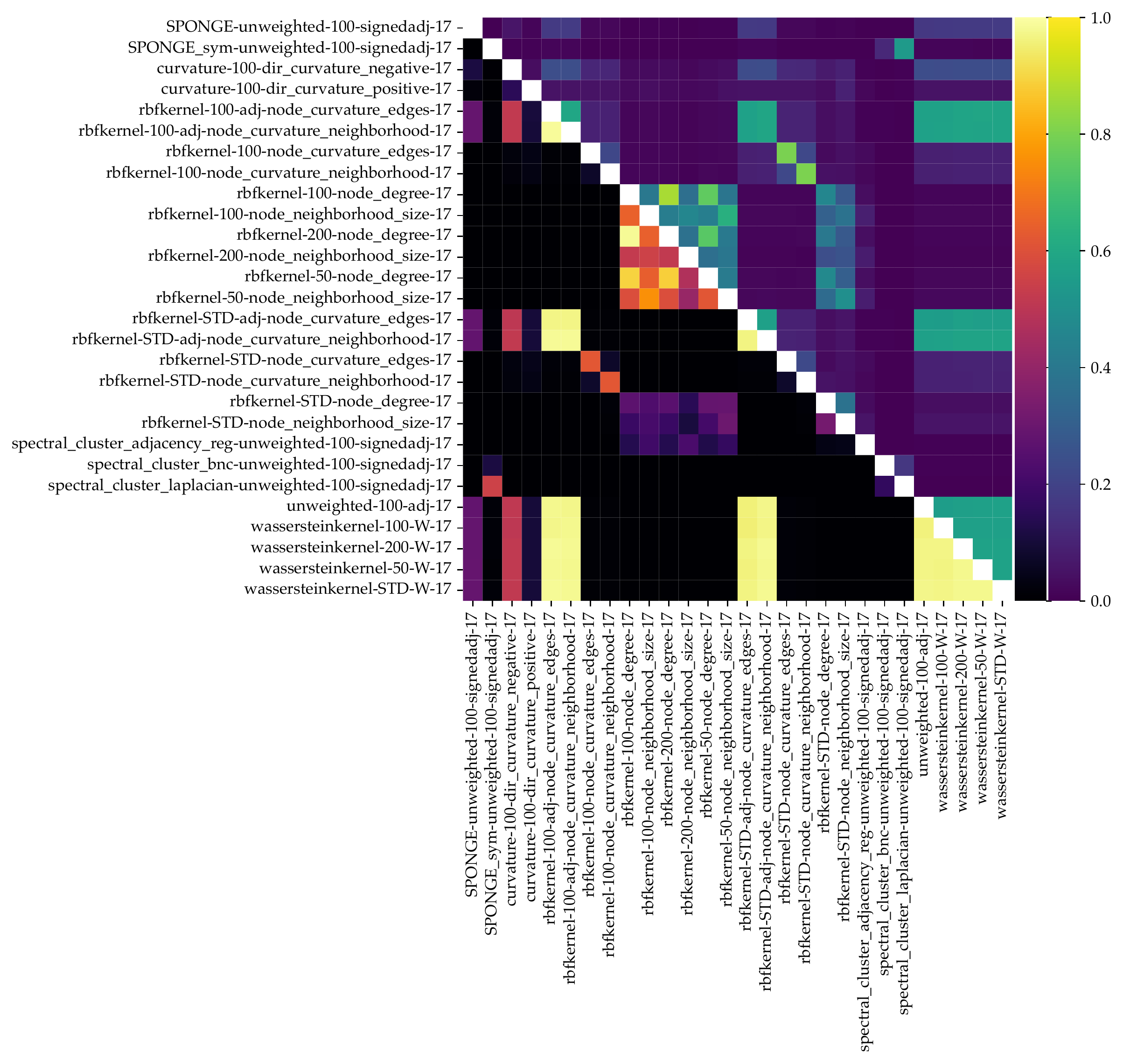}
	\caption{Node clusterings based on curvature features differ radically from clusterings based on other local features. 
		We show the normalized mutual information (upper triangle) and the adjusted rand score (lower triangle) of node clusterings based on different method/feature combinations, 
		computed on the citation hypergraph of PRB from the \apsvcout collection, 
		with curvatures computed using $\smoothing = 0.1$, $\mu^\werw$, and $\aggregation_\mean$.
	}\label{fig:aps-cout-nodes}
\end{figure}

\begin{table}[t]
	\centering
	\tiny
	\caption{\ourmethod features lead to node clusterings that are semantically more coherent than node clusterings derived from other local features.
	For two clusterings of the PRE citation hypergraph from the \apsvcout collection---one a spectral clustering using the sign of directional curvatures as a feature (\cref{tab:aps-tfidf-orchid}), 
	the other a clustering using an RBF kernel with node neighborhood size as a feature (\cref{tab:aps-tfidf-nonorchid})---we show the top terms, i.e., the terms associated with each cluster that have a TF-IDF score of at least 0.1, 
	along with their TF-IDF scores and their occurrence frequency across all clusters, in tuples of shape (term, TF-IDF score, global occurrence frequency).
}\label{tab:aps-tfidf}
\subcaption{Feature: sign of directional \ourmethod curvatures}
\label{tab:aps-tfidf-orchid}
\begin{tabular}{p{\linewidth}}
		\toprule
		(smectic, 0.51, 1), (liquid, 0.39, 4), (crystals, 0.22, 4), (antiferroelectric, 0.21, 1), (crystal, 0.19, 2), (phase, 0.17, 4), (chiral, 0.17, 1), (c$\alpha$, 0.15, 1), (paper, 0.15, 1), (rock, 0.15, 1), (scissors, 0.15, 1), (electric, 0.14, 1), (phases, 0.14, 2), (ray, 0.13, 1), (cyclic, 0.13, 1), (species, 0.12, 1), (field, 0.11, 3), (games, 0.1, 2)\\\midrule
		(resetting, 0.76, 1), (stochastic, 0.32, 1), (random, 0.24, 2), (walks, 0.18, 1), (diffusion, 0.17, 2), (brownian, 0.15, 1), (processes, 0.11, 1)\\\midrule
		(nematic, 0.66, 2), (liquid, 0.41, 4), (crystal, 0.3, 2), (colloidal, 0.26, 1), (colloids, 0.18, 1), (crystals, 0.16, 4), (particles, 0.15, 1), (interaction, 0.14, 1)\\\midrule
		(boltzmann, 0.75, 1), (lattice, 0.51, 1), (method, 0.2, 1), (flows, 0.15, 1), (model, 0.11, 5)\\\midrule
		(quantum, 0.58, 3), (heat, 0.38, 1), (engine, 0.34, 1), (engines, 0.27, 1), (efficiency, 0.24, 1), (performance, 0.21, 1), (power, 0.17, 1), (maximum, 0.17, 1), (otto, 0.12, 1), (carnot, 0.12, 1), (refrigerators, 0.1, 1)\\\midrule
		(granular, 0.85, 2), (gas, 0.17, 1), (gases, 0.16, 1), (inelastic, 0.13, 1), (driven, 0.13, 1)\\\midrule
		(chimera, 0.7, 1), (states, 0.35, 1), (oscillators, 0.33, 1), (coupled, 0.31, 2), (networks, 0.2, 3), (nonlocally, 0.13, 1), (chimeras, 0.12, 1), (coupling, 0.1, 1)]\\\midrule
		(dynamics, 0.19, 1), (model, 0.18, 5), (networks, 0.17, 3), (liquid, 0.16, 4), (diffusion, 0.13, 2), (phase, 0.13, 4), (quantum, 0.13, 3), (dimensional, 0.12, 1), (random, 0.12, 2), (flow, 0.11, 2), (systems, 0.11, 1), (plasma, 0.11, 1), (coupled, 0.1, 2), (time, 0.1, 1)\\\midrule
		(dynamic, 0.41, 1), (ising, 0.35, 2), (phase, 0.34, 4), (oscillating, 0.34, 1), (field, 0.32, 3), (transition, 0.24, 1), (kinetic, 0.2, 1), (model, 0.2, 5), (magnetic, 0.15, 1), (nonequilibrium, 0.13, 1), (blume, 0.12, 1), (capel, 0.12, 1), (transitions, 0.11, 1)\\\midrule
		(biaxial, 0.53, 1), (nematic, 0.5, 2), (liquid, 0.29, 4), (crystals, 0.19, 4), (phases, 0.19, 2), (bent, 0.17, 1), (phase, 0.16, 4), (molecules, 0.15, 1), (core, 0.14, 1), (simulation, 0.12, 1), (molecular, 0.1, 1), (antinematic, 0.1, 1), (mesogenic, 0.1, 1)
		\\\midrule
		(passive, 0.47, 1), (scalar, 0.41, 1), (anomalous, 0.39, 1), (scaling, 0.29, 1), (advected, 0.24, 1), (turbulence, 0.22, 1), (turbulent, 0.18, 1), (advection, 0.15, 1), (loop, 0.12, 1), (anisotropy, 0.11, 1), (anisotropic, 0.11, 1), (renormalization, 0.11, 1), (vector, 0.11, 1), (field, 0.1, 3)\\\midrule
		(quantum, 0.51, 3), (decay, 0.45, 1), (loschmidt, 0.33, 1), (echo, 0.33, 1), (fidelity, 0.25, 1), (chaotic, 0.23, 1), (semiclassical, 0.18, 1), (lyapunov, 0.13, 1), (perturbations, 0.11, 1)\\\midrule
		(casimir, 0.69, 1), (critical, 0.37, 1), (forces, 0.27, 1), (films, 0.13, 1), (size, 0.13, 1), (force, 0.13, 1), (finite, 0.12, 1), (free, 0.11, 1), (ising, 0.11, 2), (thermodynamic, 0.1, 1), (model, 0.1, 5)\\\midrule
		(traffic, 0.88, 1), (flow, 0.3, 2), (model, 0.13, 5), (car, 0.13, 1), (following, 0.11, 1)\\\midrule
		(rogue, 0.62, 1), (schr\"odinger, 0.34, 1), (waves, 0.31, 2), (wave, 0.29, 2), (equation, 0.25, 1), (nonlinear, 0.21, 2), (solutions, 0.17, 1), (soliton, 0.12, 1), (solitons, 0.11, 1)\\\midrule
		(cooperation, 0.6, 1), (dilemma, 0.38, 1), (prisoner, 0.34, 1), (game, 0.25, 1), (games, 0.24, 2), (evolutionary, 0.19, 1), (networks, 0.18, 3), (spatial, 0.17, 1), (social, 0.14, 1), (public, 0.12, 1), (goods, 0.1, 1)\\\midrule
		(granular, 0.59, 2), (chains, 0.36, 1), (chain, 0.32, 1), (propagation, 0.22, 1), (waves, 0.21, 2), (nonlinear, 0.2, 2), (solitary, 0.2, 1), (wave, 0.17, 2), (pulse, 0.15, 1), (crystals, 0.14, 4), (strongly, 0.12, 1)\\\bottomrule
	\end{tabular}
\vspace*{10cm}
\end{table}
\begin{table}[t]
	\centering
	\tiny
\ContinuedFloat
\subcaption{Feature: node neighborhood size}
\label{tab:aps-tfidf-nonorchid}
\vspace*{6pt}
\begin{tabular}{p{\linewidth}}
	\toprule
	(relation, 0.37, 1), (entropy, 0.34, 1), (differences, 0.34, 2), (production, 0.34, 1), (theorem, 0.34, 1), (work, 0.31, 2), (fluctuation, 0.29, 1), (nonequilibrium, 0.27, 2), (free, 0.27, 5), (energy, 0.27, 3)
	\\\midrule
	(model, 0.25, 5), (phase, 0.23, 5), (dimensional, 0.2, 3), (dynamics, 0.18, 6), (time, 0.17, 4), (networks, 0.16, 10), (lattice, 0.15, 7), (systems, 0.15, 8), (granular, 0.13, 6), (stochastic, 0.13, 2), (random, 0.12, 6), (noise, 0.12, 1), (liquid, 0.12, 4), (nonlinear, 0.12, 2), (field, 0.12, 2), (diffusion, 0.11, 4), (quantum, 0.11, 4), (coupled, 0.11, 2), (transition, 0.11, 5), (boltzmann, 0.1, 6)
	\\\midrule
	(model, 0.28, 5), (networks, 0.25, 10), (lattice, 0.22, 7), (boltzmann, 0.21, 6), (equations, 0.19, 2), (stochastic, 0.15, 2), (dynamics, 0.14, 6), (transition, 0.13, 5), (synchronization, 0.13, 3), (granular, 0.13, 6), (time, 0.13, 4), (scale, 0.13, 3), (glass, 0.13, 2), (systems, 0.12, 8), (random, 0.12, 6), (dimensional, 0.12, 3), (phase, 0.12, 5), (diffusion, 0.11, 4), (complex, 0.1, 3), (reaction, 0.1, 1), (free, 0.1, 5)
	\\\midrule
	(model, 0.32, 5), (microstates, 0.27, 1), (auxiliary, 0.27, 1), (violating, 0.27, 1), (connections, 0.24, 1), (generate, 0.24, 1), (steady, 0.22, 1), (collisions, 0.22, 1), (ising, 0.2, 1), (approach, 0.2, 2), (distribution, 0.2, 1), (second, 0.2, 1), (law, 0.2, 1), (generalized, 0.2, 1), (arbitrary, 0.2, 2), (equilibrium, 0.18, 2), (synchronization, 0.18, 3), (chaos, 0.17, 1), (gases, 0.17, 2), (states, 0.17, 2), (granular, 0.15, 6), (networks, 0.13, 10)
	\\\midrule
	(equation, 0.22, 4), (fokker, 0.19, 1), (planck, 0.19, 1), (hard, 0.19, 2), (fractional, 0.17, 1), (dynamics, 0.16, 6), (observable, 0.13, 1), (evolution, 0.12, 1), (characteristics, 0.12, 1), (quasistatic, 0.12, 1), (correction, 0.12, 1), (cohesion, 0.12, 1), (pair, 0.12, 1), (nearly, 0.12, 1), (ordered, 0.12, 1), (characterization, 0.12, 1), (preasymptotic, 0.12, 1), (formulas, 0.12, 1), (thermalization, 0.12, 1), (depinning, 0.12, 1), (theorems, 0.11, 1), (low, 0.11, 1), (amorphous, 0.11, 2), (intermittency, 0.11, 1), (hydrodynamics, 0.11, 1), (avalanche, 0.11, 1), (athermal, 0.11, 1), (correlation, 0.11, 2), (transport, 0.11, 1), (solution, 0.11, 1), (jammed, 0.11, 2), (propelled, 0.11, 1), (collective, 0.11, 1), (interacting, 0.11, 1), (asymptotic, 0.11, 1), (heterogeneity, 0.11, 1), (singularities, 0.11, 1), (dense, 0.1, 2), (highly, 0.1, 1), (near, 0.1, 1), (inelastic, 0.1, 1), (quantum, 0.1, 4), (fluid, 0.1, 2), (self, 0.1, 2), (shear, 0.1, 2), (rheology, 0.1, 2), (flow, 0.1, 3), (work, 0.1, 2), (liquids, 0.1, 1), (growth, 0.1, 1), (laws, 0.1, 1), (application, 0.1, 1), (disordered, 0.1, 1), (walks, 0.1, 1)
	\\\midrule
	(networks, 0.38, 10), (scientific, 0.22, 1), (collaboration, 0.19, 1), (path, 0.19, 1), (ii, 0.16, 1), (density, 0.14, 2), (diffusion, 0.14, 4), (herds, 0.12, 1), (theory, 0.12, 4), (systems, 0.12, 8), (granular, 0.12, 6), (random, 0.12, 6), (schools, 0.11, 1)
	\\\midrule
	(lattice, 0.27, 7), (networks, 0.26, 10), (boltzmann, 0.21, 6), (phase, 0.18, 5), (models, 0.16, 1), (structure, 0.16, 2), (method, 0.16, 4), (interactions, 0.14, 1), (self, 0.14, 2), (network, 0.13, 2), (community, 0.13, 2), (social, 0.12, 1), (free, 0.12, 5), (dimensions, 0.12, 1), (scale, 0.12, 3), (granular, 0.11, 6), (random, 0.11, 6), (systems, 0.11, 8), (motion, 0.1, 1), (graphs, 0.1, 2), (transition, 0.1, 5), (emulsions, 0.1, 1)
	\\\midrule
	(networks, 0.24, 10), (glass, 0.17, 2), (transition, 0.17, 5), (lattice, 0.15, 7), (solutions, 0.14, 1), (systems, 0.14, 8), (equations, 0.13, 2), (lévy, 0.13, 1), (large, 0.13, 1), (external, 0.13, 1), (jammed, 0.13, 2), (flights, 0.13, 1), (correlated, 0.13, 1), (quantum, 0.12, 4), (coupled, 0.12, 2), (analysis, 0.12, 1), (force, 0.12, 1), (colloidal, 0.12, 1), (order, 0.12, 1), (packings, 0.11, 1), (synchronization, 0.11, 3), (hard, 0.11, 2), (time, 0.11, 4), (langevin, 0.11, 1), (fluctuations, 0.11, 1), (density, 0.1, 2)
	\\\midrule
	(model, 0.24, 5), (dynamics, 0.22, 6), (dimensional, 0.2, 3), (phase, 0.19, 5), (liquid, 0.15, 4), (nonlinear, 0.15, 2), (systems, 0.14, 8), (field, 0.13, 2), (time, 0.12, 4), (quantum, 0.11, 4), (transition, 0.11, 5), (diffusion, 0.11, 4), (flow, 0.1, 3), (induced, 0.1, 1)
	\\\midrule
	(networks, 0.41, 10), (evaluating, 0.36, 1), (uncorrelated, 0.36, 1), (generation, 0.36, 1), (finding, 0.3, 1), (structure, 0.28, 2), (community, 0.28, 2), (free, 0.26, 5), (scale, 0.26, 3), (random, 0.23, 6)
	\\\midrule
	(zero, 0.55, 1), (epitome, 0.3, 1), (applied, 0.27, 1), (applications, 0.27, 1), (distributions, 0.25, 1), (stress, 0.25, 1), (arbitrary, 0.23, 2), (disorder, 0.23, 1), (degree, 0.23, 1), (temperature, 0.23, 1), (jamming, 0.23, 1), (graphs, 0.21, 2), (random, 0.17, 6)
	\\\midrule
	(measurements, 0.35, 1), (hierarchical, 0.32, 1), (differences, 0.32, 2), (approach, 0.3, 2), (organization, 0.3, 2), (master, 0.3, 1), (equilibrium, 0.27, 2), (nonequilibrium, 0.25, 2), (free, 0.25, 5), (energy, 0.25, 3), (complex, 0.24, 3), (equation, 0.21, 4), (networks, 0.2, 10)
	\\\midrule
	(nucleotides, 0.29, 1), (hove, 0.29, 1), (mixing, 0.26, 1), (van, 0.26, 1), (mosaic, 0.26, 1), (correlation, 0.24, 2), (dna, 0.24, 1), (patterns, 0.24, 1), (organization, 0.22, 2), (testing, 0.22, 1), (mixture, 0.22, 1), (lennard, 0.2, 1), (jones, 0.2, 1), (supercooled, 0.2, 1), (function, 0.19, 1), (coupling, 0.19, 1), (mode, 0.19, 1), (binary, 0.19, 2), (theory, 0.17, 4), (networks, 0.15, 10)
	\\\midrule
	(lattice, 0.42, 7), (percolation, 0.32, 1), (boltzmann, 0.26, 6), (monte, 0.21, 1), (term, 0.21, 1), (carlo, 0.21, 1), (site, 0.19, 1), (forcing, 0.19, 1), (fast, 0.17, 1), (bond, 0.17, 1), (transitions, 0.17, 1), (algorithm, 0.17, 1), (effects, 0.16, 1), (liquid, 0.16, 4), (phase, 0.16, 5), (discrete, 0.16, 2), (network, 0.16, 2), (gas, 0.16, 2), (simulation, 0.16, 2), (nonideal, 0.16, 1), (small, 0.15, 1), (world, 0.15, 1), (scaling, 0.15, 2), (gases, 0.15, 2), (model, 0.14, 5), (method, 0.14, 4), (equation, 0.12, 4)
	\\\midrule
	(viscoplastic, 0.33, 1), (dissipation, 0.29, 1), (deformation, 0.29, 1), (isotropy, 0.29, 1), (dispersion, 0.29, 1), (amorphous, 0.27, 2), (stability, 0.27, 1), (solids, 0.27, 1), (galilean, 0.27, 1), (invariance, 0.27, 1), (dynamics, 0.2, 6), (method, 0.2, 4), (lattice, 0.2, 7), (boltzmann, 0.19, 6), (theory, 0.19, 4)
	\\\midrule
	(boltzmann, 0.63, 6), (lattice, 0.5, 7), (equation, 0.29, 4), (simulations, 0.21, 1), (fluid, 0.21, 2), (liquid, 0.19, 4), (gas, 0.19, 2), (binary, 0.18, 2), (method, 0.17, 4), (theory, 0.16, 4), (systems, 0.16, 8)
	\\\midrule
	(plane, 0.38, 1), (flow, 0.32, 3), (dynamics, 0.26, 6), (granular, 0.24, 6), (rheophysics, 0.24, 1), (endemic, 0.21, 1), (temperatures, 0.19, 1), (equilibration, 0.19, 1), (bagnold, 0.19, 1), (inclined, 0.19, 1), (partial, 0.17, 1), (effective, 0.17, 1), (flows, 0.16, 1), (dense, 0.16, 2), (epidemic, 0.16, 1), (shear, 0.16, 2), (slow, 0.16, 1), (rheology, 0.16, 2), (discrete, 0.15, 2), (simulation, 0.15, 2), (states, 0.14, 2), (scaling, 0.14, 2), (materials, 0.14, 1), (energy, 0.14, 3), (complex, 0.13, 3), (systems, 0.12, 8), (networks, 0.11, 10)
	\\\bottomrule
\end{tabular}
\vspace*{10cm}
\end{table}

\clearpage
  
\end{document}